\DeclareMathOperator*{\argmin}{\arg\!\min}
\begin{document}

\title{On Constrained Spectral Clustering and Its Applications}

\author{Xiang Wang \and Buyue Qian \and Ian Davidson}

\institute{
	X. Wang \at Department of Computer Science, University of California, Davis. Davis, CA 95616, USA.\\
    \email{xiang@ucdavis.edu}
	\and
	B. Qian \at Department of Computer Science, University of California, Davis. Davis, CA 95616, USA.\\
    \email{byqian@ucdavis.edu}
    \and
    I. Davidson \at Department of Computer Science, University of California, Davis. Davis, CA 95616, USA.\\
    \email{davidson@cs.ucdavis.edu}
}

\date{Received: date / Accepted: date}

\maketitle

\begin{abstract}

Constrained clustering has been well-studied for algorithms such as $K$-means and hierarchical clustering. However, how to satisfy many constraints in these algorithmic settings has been shown to be intractable. One alternative to encode many constraints is to use spectral clustering, which remains a developing area. In this paper, we propose a flexible framework for constrained spectral clustering. In contrast to some previous efforts that implicitly encode Must-Link and Cannot-Link constraints by modifying the graph Laplacian or constraining the underlying eigenspace, we present a more natural and principled formulation, which explicitly encodes the constraints as part of a constrained optimization problem. Our method offers several practical advantages: it can encode the degree of belief in Must-Link and Cannot-Link constraints; it guarantees to lower-bound how well the given constraints are satisfied using a user-specified threshold; it can be solved deterministically in polynomial time through generalized eigendecomposition. Furthermore, by inheriting the objective function from spectral clustering and encoding the constraints explicitly, much of the existing analysis of unconstrained spectral clustering techniques remains valid for our formulation. We validate the effectiveness of our approach by empirical results on both artificial and real datasets. We also demonstrate an innovative use of encoding large number of constraints: transfer learning via constraints.

\keywords{Spectral clustering \and Constrained clustering \and Transfer learning \and Graph partition}
\end{abstract}

\spnewtheorem{prop}{Property}{\bf}{\it}
\spnewtheorem{lem}{Lemma}{\bf}{\it}

\section{Introduction}
\label{sec:intro}

\subsection{Background and Motivation}

Spectral clustering is an important clustering technique that has been extensively studied in the image processing, data mining, and machine learning communities (\citet{Shi2000,Luxburg2007,Ng2001}). It is considered superior to traditional clustering algorithms like $K$-means in terms of having deterministic polynomial-time solution, the ability to model arbitrary shaped clusters, and its equivalence to certain graph cut problems. For example, spectral clustering is able to capture the underlying moon-shaped clusters as shown in Fig.~\ref{fig:two_moon}(b), whereas $K$-means would fail (Fig.~\ref{fig:two_moon}(a)). The advantage of spectral clustering has also been validated by many real-world applications, such as image segmentation (\citet{Shi2000}) and mining social networks (\citet{White2005}).

\begin{figure*}[t]
\centering
\subfigure[K-means]{\includegraphics*[width=0.45\linewidth]{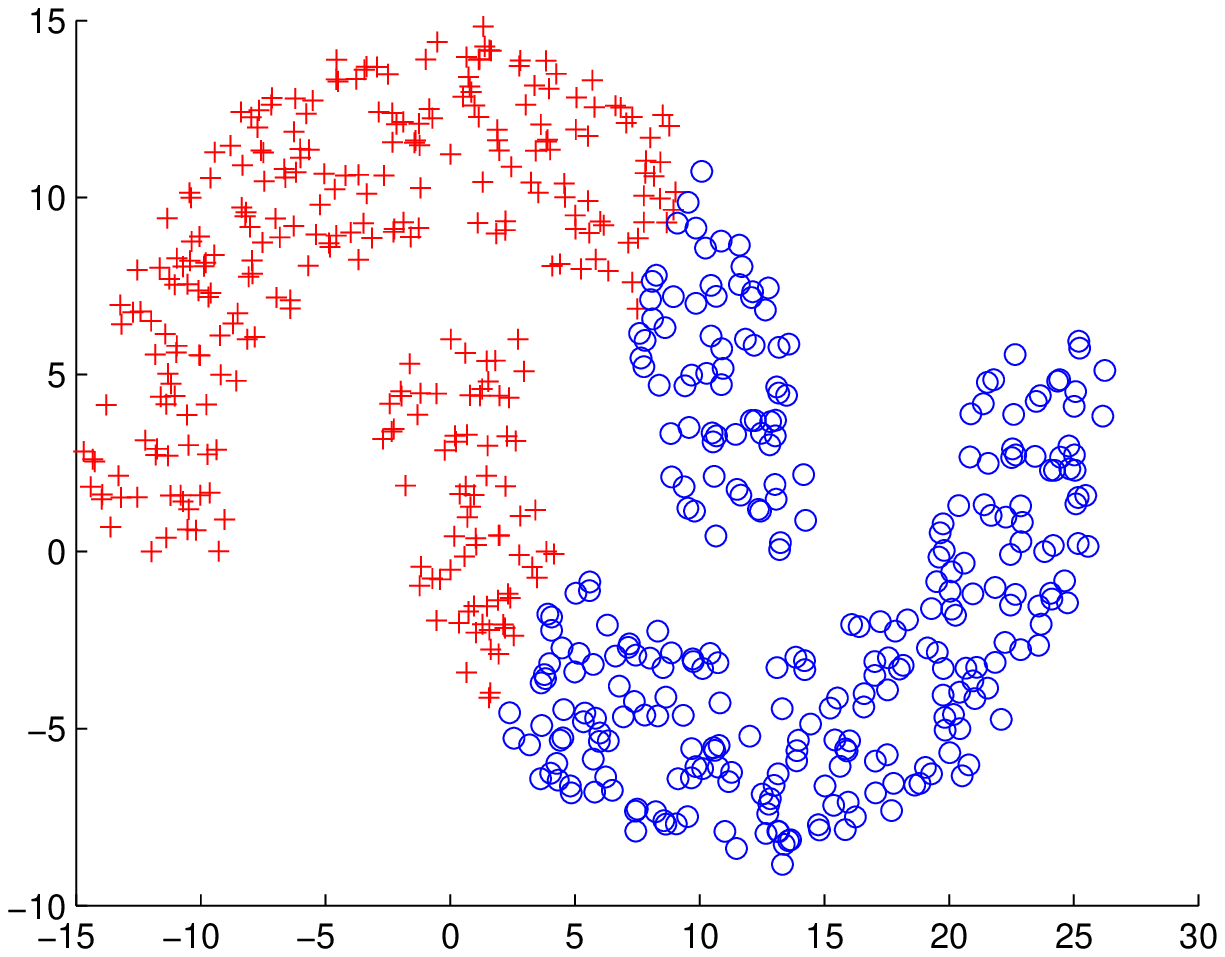}}
\subfigure[Spectral clustering]{\includegraphics*[width=0.45\linewidth]{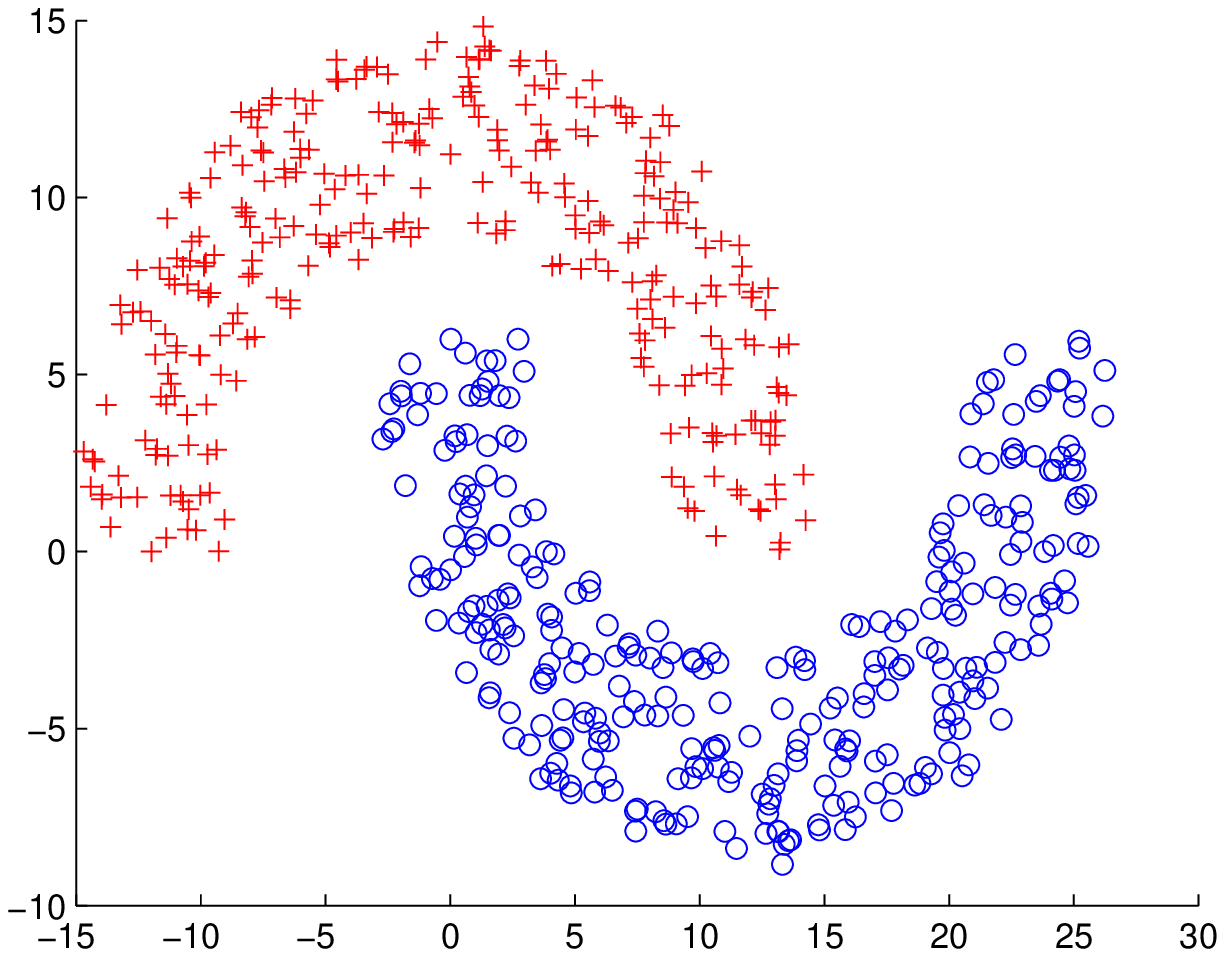}}\\
\subfigure[Spectral clustering]{\includegraphics*[width=0.45\linewidth]{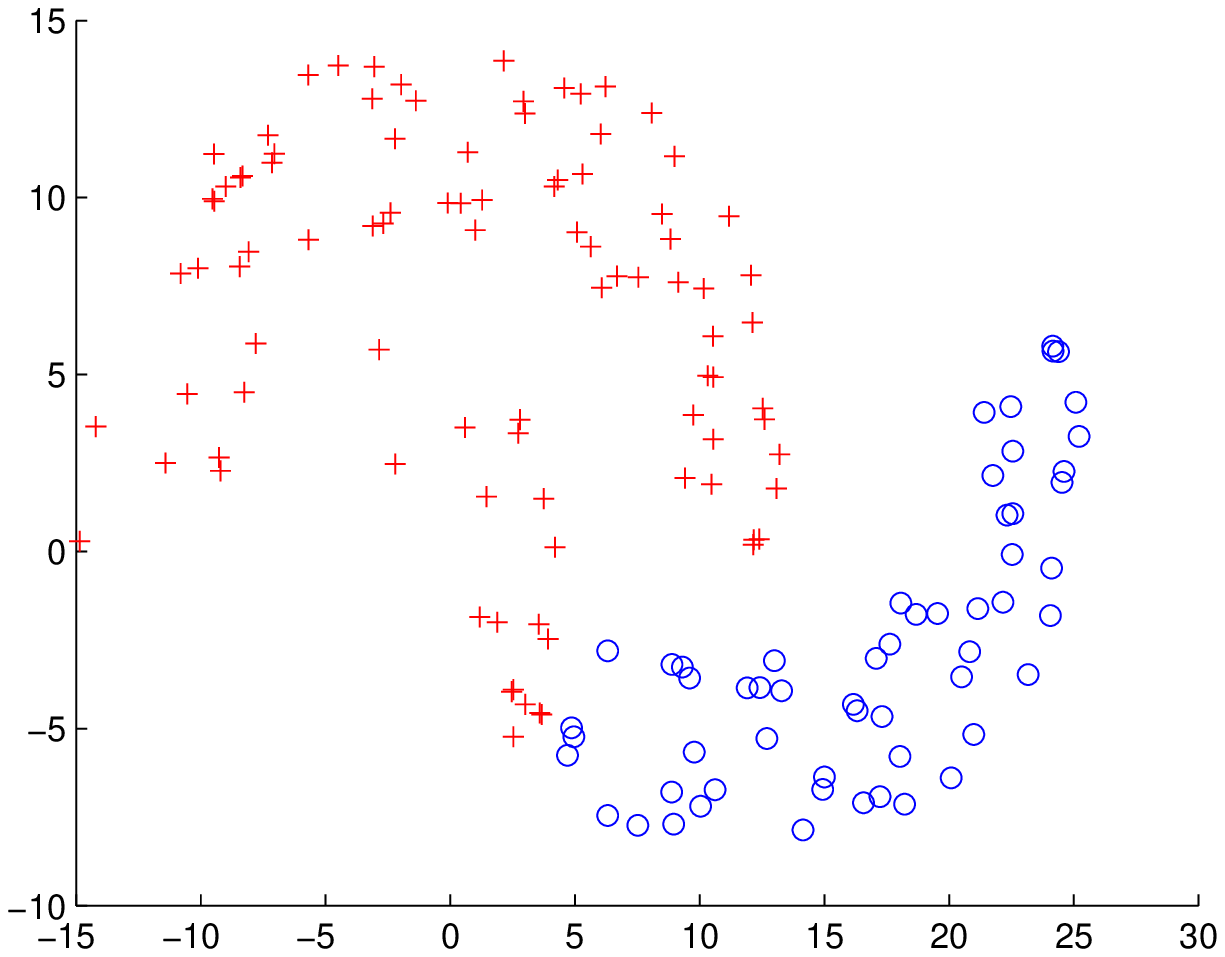}}
\subfigure[Constrained spectral clustering]{\includegraphics*[width=0.45\linewidth]{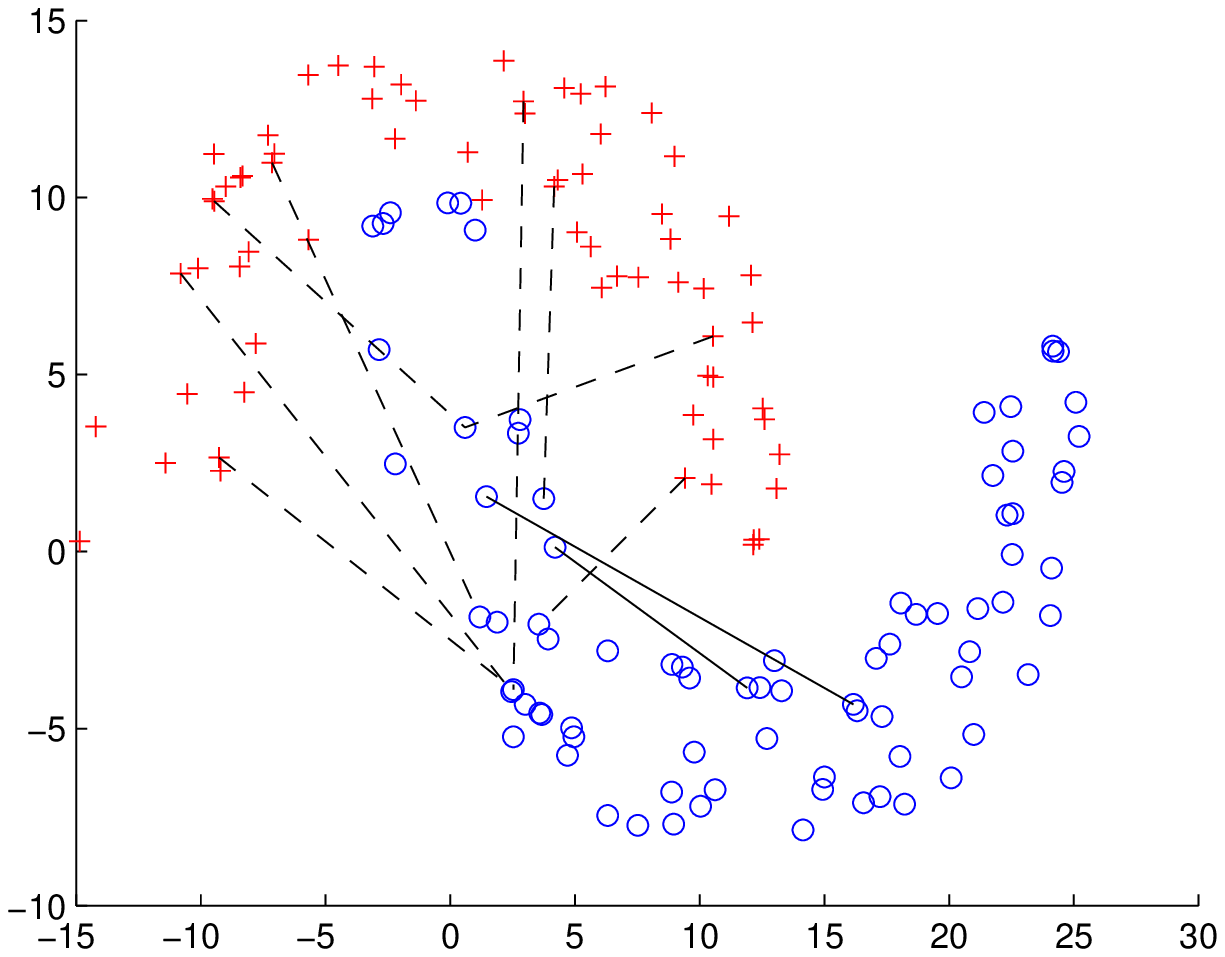}}
\caption{A motivating example for constrained spectral clustering.}\label{fig:two_moon}
\end{figure*}

Spectral clustering was originally proposed to address an unsupervised learning problem: the data instances are unlabeled, and all available information is encoded in the graph Laplacian. However, there are cases where unsupervised spectral clustering becomes insufficient. Using the same toy data, as shown in (Fig.~\ref{fig:two_moon}(c)), when the two moons are under-sampled, the clusters become so sparse that the separation of them becomes difficult. To help spectral clustering recover from an undesirable partition, we can introduce side information in various forms, in either small or large amounts. For example:
\begin{enumerate}
\item \textbf{Pairwise constraints}: Domain experts may explicitly assign constraints that state a pair of instances must be in the same cluster (Must-Link, ML for short) or that a pair of instances cannot be in the same cluster (Cannot-Link, CL for short). For instance, as shown in Fig.~\ref{fig:two_moon}(d), we assigned several ML (solid lines) and CL (dashed lines) constraints, then applied our constrained spectral clustering algorithm, which we will describe later. As a result, the two moons were successfully recovered.
\item \textbf{Partial labeling}: There can be labels on some of the instances, which are neither complete nor exhaustive. We demonstrate in Fig.~\ref{fig:uci_ari} that even small amounts of labeled information can greatly improve clustering results when compared against the ground truth partition, as inferred by the labels.
\item \textbf{Alternative weak distance metrics}: In some situations there may be more than one distance metrics available. For example, in Table~\ref{table:reuters} and accompanying paragraphs we describe clustering documents using distance functions based on different languages (features).
\item \textbf{Transfer of knowledge}: In the context of transfer learning (\citet{DBLP:journals/tkde/PanY10}), if we treat the graph Laplacian as the target domain, we could transfer knowledge from a different but related graph, which can be viewed as the source domain. We discuss this direction in Section~\ref{sec:algorithm:transfer} and \ref{sec:exp:transfer}.
\end{enumerate}
All the aforementioned side information can be transformed into pairwise ML and CL constraints, which could either be hard (binary) or soft (degree of belief). For example, if the side information comes from a source graph, we can construct pairwise constraints by assuming that the more similar two instance are in the source graph, the more likely they belong to the same cluster in the target graph. Consequently the constraints should naturally be represented by a degree of belief, rather than a binary assertion.

How to make use of these side information to improve clustering falls into the area of constrained clustering (\citet{Basu2008}). In general, constrained clustering is a category of techniques that try to incorporate ML and CL constraints into existing clustering schemes. It has been well studied on algorithms such as $K$-means clustering, mixture model, hierarchical clustering, and density-based clustering. Previous studies showed that satisfying all constraints at once (\citet{DBLP:journals/datamine/DavidsonR07}), incrementally (\citet{DBLP:conf/kdd/DavidsonRE07}), or even pruning constraints (\citet{DBLP:conf/icml/DavidsonR07}) is intractable. Furthermore, it was shown that algorithms that build set partitions incrementally (such as $K$-means and EM) are prone to being over-constrained (\citet{DBLP:conf/aaai/DavidsonR06}). In contrast, incorporating constraints into spectral clustering is a promising direction since, unlike existing algorithms, all data instances are assigned simultaneously to clusters, even if the given constraints are inconsistent.

Constrained spectral clustering is still a developing area. Previous work on this topic can be divided into two categories, based on how they enforce the constraints. The first category (\citet{Kamvar2003,Xu2005,Lu2008,Wang2009,Ji2006}) directly manipulate the graph Laplacian (or equivalently, the affinity matrix) according to the given constraints; then unconstrained spectral clustering is applied on the modified graph Laplacian. The second category use constraints to restrict the feasible solution space (\citet{Bie2004,Coleman2008,Li2009,Yu2001,Yu2004}). Existing methods in both categories share several limitations:
\begin{itemize}
\item They are designed to handle only binary constraints. However, as we have stated above, in many real-world applications, constraints are made available in the form of real-valued degree of belief, rather than a yes or no assertion.
\item They aim to satisfy as many constraints as possible, which could lead to inflexibility in practice. For example, the given set of constraints could be noisy, and satisfying some of the constraints could actually hurt the overall performance. Also, it is reasonable to ignore a small portion of constraints in exchange for a clustering with much lower cost.
\item They do not offer any natural interpretation of either the way that constraints are encoded or the implication of enforcing them.
\end{itemize}

\subsection{Our Contributions}

In this paper, we study how to incorporate \textbf{large} amounts of pairwise constraints into spectral clustering, in a flexible manner that addresses the limitations of previous work. Then we show the practical benefits of our approach, including new applications previously not possible.

We extend beyond binary ML/CL constraints and propose a more flexible framework to accommodate general-type side information. We allow the binary constraints to be relaxed to real-valued degree of belief that two data instances belong to the same cluster or two different clusters. Moreover, instead of trying to satisfy each and every constraint that has been given, we use a user-specified threshold to lower bound how well the given constraints must be satisfied. Therefore, \textbf{our method provides maximum flexibility in terms of both representing constraints and satisfying them}. This, in addition to handling large amounts of constraints, allows the encoding of new styles of information such as entire graphs and alternative distance metrics in their raw form without considering issues such as constraint inconsistencies and over-constraining.

Our contributions are:
\begin{itemize}
\item We propose a principled framework for constrained spectral clustering that can incorporate large amounts of both hard and soft constraints.
\item We show how to enforce constraints in a flexible way: a user-specified threshold is introduced so that a limited amount of constraints can be ignored in exchange for lower clustering cost. This allows incorporating side information in its raw form without considering issues such as inconsistency and over-constraining.
\item We extend the objective function of unconstrained spectral clustering by encoding constraints explicitly and creating a novel constrained optimization problem. Thus our formulation naturally covers unconstrained spectral clustering as a special case.
\item We show that our objective function can be turned into a generalized eigenvalue problem, which can be solved deterministically in polynomial time. This is a major advantage over constrained $K$-means clustering, which produces non-deterministic solutions while being intractable even for $K=2$ (\citet{Drineas2004,DBLP:conf/icml/DavidsonR07}).
\item We interpret our formulation from both the graph cut perspective and the Laplacian embedding perspective.
\item We validate the effectiveness of our approach and its advantage over existing methods using standard benchmarks and new innovative applications such as transfer learning.
\end{itemize}

This paper is an extension of our previous work (\citet{WangD:KDD10}) with the following additions: 1) we extend our algorithm from 2-way partition to $K$-way partition (Section~\ref{sec:algorithm:k-way}); 2) we add a new geometric interpretation to our algorithm (Section~\ref{sec:interpret:geo}); 3) we show how to apply our algorithm to a novel application (Section~\ref{sec:algorithm:transfer}), namely transfer learning, and test it with a real-world fMRI dataset (Section~\ref{sec:exp:transfer}); 4) we present a much more comprehensive experiment section with more tasks conducted on more datasets (Section~\ref{sec:exp:2moon} and \ref{sec:exp:reuters}).

The rest of the paper is organized as follows: In Section~\ref{sec:related} we briefly survey previous work on constrained spectral clustering; Section~\ref{sec:background} provides preliminaries for spectral clustering; in Section~\ref{sec:model} we formally introduce our formulation for constrained spectral clustering and show how to solve it efficiently; in Section~\ref{sec:interpret} we interpret our objective from two different perspectives; in Section~\ref{sec:algorithm} we discuss the implementation of our algorithm and possible extensions; we empirically evaluate our approach in Section~\ref{sec:exp}; Section~\ref{sec:conclusion} concludes the paper.

\section{Related Work}
\label{sec:related}

Constrained clustering is a category of methods that extend clustering from unsupervised setting to semi-supervised setting, where side information is available in the form of, or can be converted into, pairwise constraints. A number of algorithms have been proposed on how to incorporate constraints into spectral clustering, which can be grouped into two categories.

The first category manipulates the graph Laplacian directly. \citet{Kamvar2003} proposed the spectral learning algorithm that sets the $(i,j)$-th entry of the affinity matrix to 1 if there is a ML between node $i$ and $j$; 0 for CL. A new graph Laplacian is then computed based on the modified affinity matrix. In (\citet{Xu2005}), the constraints are encoded in the same way, but a random walk matrix is used instead of the normalized Laplacian. \citet{DBLP:conf/icml/KulisBDM05} proposed to add both positive (for ML) and negative (for CL) penalties to the affinity matrix (they then used kernel $K$-means, instead of spectral clustering, to find the partition based on the new kernel). \citet{Lu2008} proposed to propagate the constraints in the affinity matrix. In \citet{Ji2006,Wang2009}, the graph Laplacian is modified by combining the constraint matrix as a regularizer. The limitation of these approaches is that there is no principled way to decide the weights of the constraints, and there is no guarantee that how well the give constraints will be satisfied.

The second category manipulates the eigenspace directly. For example, the subspace trick introduced by \citet{Bie2004} alters the eigenspace which the cluster indicator vector is projected onto, based on the given constraints. This technique was later extended in \citet{Coleman2008} to accommodate inconsistent constraints. \citet{Yu2001,Yu2004} encoded partial grouping information as a subspace projection. \citet{Li2009} enforced constraints by regularizing the spectral embedding. This type of approaches usually strictly enforce given constraints. As a result, the results are often over-constrained, which makes the algorithms sensitive to noise and inconsistencies in the constraint set. Moreover, it is non-trivial to extend these approaches to incorporate soft constraints.

In addition, \citet{DBLP:conf/aaai/GuLH11} proposed a spectral kernel design that combines multiple clustering tasks. The learned kernel is constrained in such a way that the data distributions of any two tasks are as close as possible. Their problem setting differs from ours because we aim to perform single-task clustering by using two (disagreeing) data sources. \citet{DBLP:conf/sdm/WangLZ08} showed how to incorporate pairwise constraints into a penalized matrix factorization framework. Their matrix approximation objective function, which is different from our normalized min-cut objective, is solved by an EM-like algorithm.

We would like to stress that the pros and cons of spectral clustering as compared to other clustering schemes, such as $K$-means clustering, hierarchical clustering, etc., have been thoroughly studied and well established. We do not claim that constrained spectral clustering is universally superior to other constrained clustering schemes. The goal of this work is to provide a way to incorporate constraints into spectral clustering that is more flexible and principled as compared with existing constrained spectral clustering techniques.

\section{Background and Preliminaries}
\label{sec:background}

In this paper we follow the standard graph model that is commonly used in the spectral clustering literature. We reiterate some of the definitions and properties in this section, such as graph Laplacian, normalized min-cut, eigendecomposition and so forth, to make this paper self-contained. Readers who are familiar with the materials can skip to our formulation in Section~\ref{sec:model}. Important notations used throughout the rest of the paper are listed in Table~\ref{table:notation}.

\begin{table}
\centering
\caption{Table of notations}\label{table:notation}
\begin{tabular}{|c|l|}
  \hline Symbol & Meaning\\
  \hline
  $\mathcal{G}$ & An undirected (weighted) graph\\
  $A$ & The affinity matrix\\
  $D$ & The degree matrix\\
  $I$ & The identity matrix\\
  $L / \bar{L}$ & The unnormalized/normalized graph Laplacian\\
  $Q / \bar{Q}$ & The unnormalized/normalized constraint matrix\\
  $vol$ & The volume of graph $\mathcal{G}$\\
  \hline
\end{tabular}
\end{table}

A collection of $N$ data instances is modeled by an undirected, weighted graph $\mathcal{G}(\mathcal{V},\mathcal{E},A)$, where each data instance corresponds to a vertex (node) in $\mathcal{V}$; $\mathcal{E}$ is the edge set and $A$ is the associated affinity matrix. $A$ is symmetric and non-negative. The diagonal matrix $D=\text{diag}(D_{11},\ldots,D_{NN})$ is called the degree matrix of graph $\mathcal{G}$, where
\begin{displaymath}
  D_{ii}=\sum_{j=1}^N A_{ij}.
\end{displaymath}
Then
\begin{displaymath}
  L = D - A
\end{displaymath}
is called the unnormalized graph Laplacian of $\mathcal{G}$. Assuming $\mathcal{G}$ is connected (i.e. any node is reachable from any other node), $L$ has the following properties:
\begin{prop}[Properties of graph Laplacian (\citet{Luxburg2007})]
\label{prop:laplacian}
Let $L$ be the graph Laplacian of a connected graph, then:
\begin{enumerate}
\item $L$ is symmetric and positive semi-definite.
\item $L$ has one and only one eigenvalue equal to 0, and $N-1$ positive eigenvalues: $0 = \lambda_0 < \lambda_1 \leq \ldots \leq \lambda_{N-1}$.
\item $\mathbf{1}$ is an eigenvector of $L$ with eigenvalue 0 ($\mathbf{1}$ is a constant vector whose entries are all $1$).
\end{enumerate}
\end{prop}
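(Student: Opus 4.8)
The plan is to treat the quadratic form associated with $L$ as the central object, since all three claims follow from it together with the connectivity assumption. First I would record that the symmetry of $L$ is immediate: $D$ is diagonal and $A$ is symmetric by assumption, so $L = D - A$ is symmetric, and therefore has a full set of real eigenvalues and an orthonormal eigenbasis. The real work begins with establishing the key identity
\begin{displaymath}
x^\top L x = \frac{1}{2}\sum_{i,j=1}^N A_{ij}(x_i - x_j)^2
\end{displaymath}
for every $x \in \mathbb{R}^N$. I would prove this by expanding $x^\top L x = x^\top D x - x^\top A x$, rewriting $\sum_i D_{ii} x_i^2 = \sum_{i,j} A_{ij} x_i^2$ using the definition $D_{ii}=\sum_j A_{ij}$, and symmetrizing the first sum via $A_{ij}=A_{ji}$ to complete the square. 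Because $A_{ij}\ge 0$ and each squared term is non-negative, this identity shows at once that $x^\top L x \ge 0$, i.e. $L$ is positive semi-definite, which completes Property~1 and forces every eigenvalue to be non-negative.

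For Property~3 I would simply compute $L\mathbf{1}$ entrywise: its $i$-th coordinate is $\sum_j (D_{ij}-A_{ij}) = D_{ii} - \sum_j A_{ij} = 0$ by the definition of the degree matrix, so $L\mathbf{1}=0\cdot\mathbf{1}$ and $\mathbf{1}$ is an eigenvector with eigenvalue $0$. Combined with positive semi-definiteness, this shows the smallest eigenvalue is exactly $\lambda_0 = 0$, so the eigenvalues may be ordered $0=\lambda_0\le\lambda_1\le\cdots\le\lambda_{N-1}$.

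The heart of Property~2 is to show that the eigenvalue $0$ is simple, i.e. that its eigenspace (the null space of $L$) is one-dimensional and spanned by $\mathbf{1}$; the positivity of $\lambda_1,\dots,\lambda_{N-1}$ then follows. Here I would return to the quadratic form. Suppose $x$ satisfies $Lx=0$; then $x^\top L x = 0$, so by the identity $\sum_{i,j} A_{ij}(x_i-x_j)^2 = 0$. Since every summand is non-negative, each must vanish, which means $x_i = x_j$ whenever $A_{ij}>0$, i.e. whenever $i$ and $j$ are adjacent. This is exactly where connectivity enters: given any two vertices, a path of adjacent vertices joins them, and propagating the equality along the path forces $x$ to take the same value at the endpoints. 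Hence $x$ is constant on all of $\mathcal{V}$, so $x$ is a multiple of $\mathbf{1}$, the null space is spanned by $\mathbf{1}$, and the multiplicity of $0$ is exactly one.

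I expect the connectivity-to-constancy step to be the only non-routine part: the algebraic identity and the two entrywise computations are mechanical, whereas turning \emph{equal across every edge} into \emph{equal across the whole graph} is precisely where the hypothesis that $\mathcal{G}$ is connected is used, and it is worth writing out the path-propagation argument explicitly rather than asserting it.
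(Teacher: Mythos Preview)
Your argument is correct and is exactly the standard proof of these facts. Note, however, that the paper does not actually give its own proof of this property: it is stated as a cited result from \citet{Luxburg2007}, without any accompanying argument. Your write-up matches the proof given in that reference --- the quadratic-form identity $x^\top L x = \tfrac{1}{2}\sum_{i,j}A_{ij}(x_i-x_j)^2$ to get positive semi-definiteness, the row-sum computation for $L\mathbf{1}=0$, and the path-propagation step using connectivity to pin down the null space --- so there is no methodological divergence to discuss.
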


\citet{Shi2000} showed that the eigenvectors of the graph Laplacian can be related to the normalized min-cut (Ncut) of $\mathcal{G}$. The objective function can be written as:
\begin{equation}\label{eq:spectral_clustering}
\argmin_{\mathbf{v} \in \mathbb{R}^N} \mathbf{v}^T \bar{L} \mathbf{v},~\text{s.t.}~\mathbf{v}^T \mathbf{v}= vol,~\mathbf{v} \perp D^{1/2} \mathbf{1}.
\end{equation}
Here
\begin{displaymath}
\bar{L} = D^{-1/2} L D^{-1/2}
\end{displaymath}
is called the \emph{normalized} graph Laplacian (\citet{Luxburg2007}); $vol=\sum_{i=1}^N D_{ii}$ is the volume of $\mathcal{G}$; the first constraint $\mathbf{v}^T \mathbf{v}= vol$ normalizes $\mathbf{v}$; the second constraint $\mathbf{v} \perp D^{1/2} \mathbf{1}$ rules out the principal eigenvector of $\bar{L}$ as a trivial solution, because it does not define a meaningful cut on the graph. The relaxed cluster indicator $\mathbf{u}$ can be recovered from $\mathbf{v}$ as:
\begin{displaymath}
\mathbf{u} = D^{-1/2} \mathbf{v}.
\end{displaymath}

Note that the result of spectral clustering is solely decided by the affinity structure of graph $\mathcal{G}$ as encoded in the matrix $A$ (and thus the graph Laplacian $L$). We will then describe our extensions on how to incorporate side information so that the result of clustering will reflect both the affinity structure of the graph and the structure of the side information.

\section{A Flexible Framework for Constrained Spectral Clustering}
\label{sec:model}

In this section, we show how to incorporate side information into spectral clustering as pairwise constraints. Our formulation allows both hard and soft constraints. We propose a new constrained optimization formulation for constrained spectral clustering. Then we show how to solve the objective function by converting it into a generalized eigenvalue system.

\subsection{The Objective Function}
\label{sec:model:obj}

We encode side information with an $N \times N$ constraint matrix $Q$. Traditionally, constrained clustering only accommodates binary constraints, namely Must-Link and Cannot-Link:
\begin{displaymath}
  Q_{ij} = Q_{ji} =
  \begin{cases}
    +1 & \text{if}~ML(i,j)\\
    -1 & \text{if}~CL(i,j)\\
    0  & \text{no side information available}\\
  \end{cases}.
\end{displaymath}
Let $\mathbf{u} \in \{-1,+1\}^N$ be a cluster indicator vector, where $\mathbf{u}_i=+1$ if node $i$ belongs to cluster $+$ and $\mathbf{u}_i=-1$ if node $i$ belongs to cluster $-$, then
\begin{displaymath}
\mathbf{u}^T Q \mathbf{u} = \sum_{i=1}^N \sum_{j=1}^N \mathbf{u}_i \mathbf{u}_j Q_{ij}
\end{displaymath}
is a measure of how well the constraints in $Q$ are satisfied by the assignment $\mathbf{u}$: the measure will increase by $1$ if $Q_{ij}=1$ and node $i$ and $j$ have the same sign in $\mathbf{u}$; it will decrease by $1$ if $Q_{ij}=1$ but node $i$ and $j$ have different signs in $\mathbf{u}$ or $Q_{ij}=-1$ but node $i$ and $j$ have the same sign in $\mathbf{u}$.

We extend the above encoding scheme to accommodate soft constraints by relaxing the cluster indicator vector $\mathbf{u}$ as well as the constraint matrix $Q$ such that:
\begin{displaymath}
\mathbf{u} \in \mathbb{R}^N, Q \in \mathbb{R}^{N \times N}.
\end{displaymath}
$Q_{ij}$ is positive if we believe nodes $i$ and $j$ belong to the same cluster; $Q_{ij}$ is negative if we believe nodes $i$ and $j$ belong to different clusters; the magnitude of $Q_{ij}$ indicates how strong the belief is.

Consequently, $\mathbf{u}^T Q \mathbf{u}$ becomes a real-valued measure of how well the constraints in $Q$ are satisfied in the relaxed sense. For example, $Q_{ij}<0$ means we believe nodes $i$ and $j$ belong to different clusters; in order to improve $\mathbf{u}^T Q \mathbf{u}$, we should assign $\mathbf{u}_i$ and $\mathbf{u}_j$ with values of different signs; similarly, $Q_{ij}>0$ means nodes $i$ and $j$ are believed to belong to the same cluster; we should assign $\mathbf{u}_i$ and $\mathbf{u}_j$ with values of the same sign. The larger $\mathbf{u}^T Q \mathbf{u}$ is, the better the cluster assignment $\mathbf{u}$ conforms to the given constraints in $Q$.

Now given this real-valued measure, rather than trying to satisfy all the constraints in $Q$ individually, we can lower-bound this measure with a constant $\alpha \in \mathbb{R}$:
\begin{displaymath}
\mathbf{u}^T Q \mathbf{u} \geq \alpha.
\end{displaymath}
Following the notations in Eq.(\ref{eq:spectral_clustering}), we substitute $\mathbf{u}$ with $D^{-1/2} \mathbf{v}$, above inequality becomes
\begin{displaymath}
\mathbf{v}^T \bar{Q} \mathbf{v} \geq \alpha,
\end{displaymath}
where
\begin{displaymath}
\bar{Q} = D^{-1/2} Q D^{-1/2}
\end{displaymath}
is the \emph{normalized} constraint matrix.

We append this lower-bound constraint to the objective function of unconstrained spectral clustering in Eq.(\ref{eq:spectral_clustering}) and we have:
\begin{problem}[Constrained Spectral Clustering]
\label{problem:constrained}
Given a normalized graph Laplacian $\bar{L}$, a normalized constraint matrix $\bar{Q}$ and a threshold $\alpha$, we want to optimizes the following objective function:
\begin{equation}\label{eq:objective}
\argmin_{\mathbf{v} \in \mathbb{R}^N} \mathbf{v}^T \bar{L} \mathbf{v},~\text{s.t.}~\mathbf{v}^T \bar{Q} \mathbf{v} \geq \alpha,~\mathbf{v}^T \mathbf{v} = vol,~\mathbf{v} \neq D^{1/2} \mathbf{1}.
\end{equation}
\end{problem}
Here $\mathbf{v}^T \bar{L} \mathbf{v}$ is the cost of the cut we want to minimize; the first constraint $\mathbf{v}^T \bar{Q} \mathbf{v} \geq \alpha$ is to lower bound how well the constraints in $Q$ are satisfied; the second constraint $\mathbf{v}^T \mathbf{v} = vol$ normalizes $\mathbf{v}$; the third constraint $\mathbf{v} \neq D^{1/2} \mathbf{1}$ rules out the trivial solution $D^{1/2} \mathbf{1}$. Suppose $\mathbf{v}^*$ is the optimal solution to Eq.(\ref{eq:objective}), then $\mathbf{u}^* = D^{-1/2} \mathbf{v}^*$ is the optimal cluster indicator vector.

It is easy to see that the unconstrained spectral clustering in Eq.(\ref{eq:spectral_clustering}) is covered as a special case of Eq.(\ref{eq:objective}) where $\bar{Q}=I$ (no constraints are encoded) and $\alpha=vol$ ($\mathbf{v}^T \bar{Q} \mathbf{v} \geq vol$ is trivially satisfied given $\bar{Q}=I$ and $\mathbf{v}^T \mathbf{v} = vol$).

\subsection{Solving the Objective Function}
\label{sec:model:sol}

To solve a constrained optimization problem, we follow the Karush-Kuhn-Tucker Theorem (\citet{Kuhn1982}) to derive the necessary conditions for the optimal solution to the problem. We can find a set of candidates, or \emph{feasible solutions}, that satisfy all the necessary conditions. Then we choose the optimal solution among the feasible solutions using brute-force method, given the size of the feasible set is finite and small.

For our objective function in Eq.(\ref{eq:objective}), we introduce the Lagrange multipliers as follows:
\begin{equation}\label{eq:L}
\Lambda(\mathbf{v},\lambda,\mu) = \mathbf{v}^T \bar{L} \mathbf{v} - \lambda (\mathbf{v}^T \bar{Q} \mathbf{v} - \alpha) - \mu (\mathbf{v}^T \mathbf{v} - vol).
\end{equation}
Then according to the KKT Theorem, any feasible solution to Eq.(\ref{eq:objective}) must satisfy the following conditions:
\begin{align}
\mbox{(Stationarity)}~&~\bar{L} \mathbf{v} - \lambda \bar{Q} \mathbf{v} - \mu \mathbf{v} = 0, \label{eq:KKT:stationarity}\\
\mbox{(Primal feasibility)}~&~\mathbf{v}^T \bar{Q} \mathbf{v} \geq \alpha, \mathbf{v}^T \mathbf{v} = vol, \label{eq:KKT:primal}\\
\mbox{(Dual feasibility)}~&~\lambda \geq 0, \label{eq:KKT:dual}\\
\mbox{(Complementary slackness)}~&~\lambda (\mathbf{v}^T \bar{Q} \mathbf{v} - \alpha) = 0. \label{eq:KKT:slackness}
\end{align}
Note that Eq.(\ref{eq:KKT:stationarity}) comes from taking the derivative of Eq.(\ref{eq:L}) with respect to $\mathbf{v}$. Also note that we dismiss the constraint $\mathbf{v} \neq D^{1/2} \mathbf{1}$ at this moment, because it can be checked independently after we find the feasible solutions.

To solve Eq.(\ref{eq:KKT:stationarity})-(\ref{eq:KKT:slackness}), we start with looking at the complementary slackness requirement in Eq.(\ref{eq:KKT:slackness}), which implies either $\lambda=0$ or $\mathbf{v}^T \bar{Q} \mathbf{v} - \alpha = 0$. If $\lambda = 0$, we will have a trivial problem because the second term from Eq.(\ref{eq:KKT:stationarity}) will be eliminated and the problem will be reduced to unconstrained spectral clustering. Therefore in the following we focus on the case where $\lambda \neq 0$. In this case, for Eq.(\ref{eq:KKT:slackness}) to hold $\mathbf{v}^T \bar{Q} \mathbf{v} - \alpha$ must be $0$. Consequently the KKT conditions become:
\begin{align}
& \bar{L} \mathbf{v} - \lambda \bar{Q} \mathbf{v} - \mu \mathbf{v} = 0, \label{eq:KKT:L}\\
& \mathbf{v}^T \mathbf{v} = vol, \label{eq:KKT:v}\\
& \mathbf{v}^T \bar{Q} \mathbf{v} = \alpha, \label{eq:KKT:Q}\\
& \lambda > 0, \label{eq:KKT:lam}.
\end{align}
Under the assumption that $\alpha$ is arbitrarily assigned by user and $\lambda$ and $\mu$ are independent variables, Eq.(\ref{eq:KKT:L}-\ref{eq:KKT:lam}) cannot be solved explicitly, and it may produce infinite number of feasible solutions, if one exists. As a workaround, we temporarily drop the assumption that $\alpha$ is an arbitrary value assigned by the user. Instead, we assume $\alpha \triangleq \mathbf{v}^T \bar{Q} \mathbf{v}$, i.e. $\alpha$ is defined as such that Eq.(\ref{eq:KKT:Q}) holds. Then we introduce an auxiliary variable, $\beta$, which is defined as the ratio between $\mu$ and $\lambda$:
\begin{equation}\label{eq:setbeta}
\beta \triangleq - \frac{\mu}{\lambda}  vol.
\end{equation}

Now we substitute Eq.(\ref{eq:setbeta}) into Eq.(\ref{eq:KKT:L}) we obtain:
\begin{displaymath}
\bar{L} \mathbf{v} - \lambda \bar{Q} \mathbf{v} + \frac{\lambda \beta}{vol} \mathbf{v} = 0,
\end{displaymath}
or equivalently:
\begin{equation}\label{eq:gen}
\bar{L} \mathbf{v} = \lambda (\bar{Q} - \frac{\beta}{vol} I) \mathbf{v}
\end{equation}
We immediately notice that Eq.(\ref{eq:gen}) is a generalized eigenvalue problem for a given $\beta$.

Next we show that the substitution of $\alpha$ with $\beta$ does not compromise our original intention of lower bounding $\mathbf{v}^T \bar{Q} \mathbf{v}$ in Eq.(\ref{eq:objective}).
\begin{lem}\label{lemma}
$\beta < \mathbf{v}^T \bar{Q} \mathbf{v}$.
\end{lem}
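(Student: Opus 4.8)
The plan is to extract a single scalar identity from the stationarity condition and then read off the sign of $\mathbf{v}^T \bar{Q} \mathbf{v} - \beta$ from the positive semi-definiteness of $\bar{L}$. First I would left-multiply the stationarity equation~(\ref{eq:KKT:L}) by $\mathbf{v}^T$, obtaining
\[
\mathbf{v}^T \bar{L} \mathbf{v} - \lambda\, \mathbf{v}^T \bar{Q} \mathbf{v} - \mu\, \mathbf{v}^T \mathbf{v} = 0.
\]
Substituting the normalization $\mathbf{v}^T \mathbf{v} = vol$ from~(\ref{eq:KKT:v}) together with the definition $\beta = -\tfrac{\mu}{\lambda}\, vol$ from~(\ref{eq:setbeta}) (equivalently $\mu\, vol = -\lambda\beta$) collapses this to
\[
\mathbf{v}^T \bar{L} \mathbf{v} = \lambda\bigl(\mathbf{v}^T \bar{Q} \mathbf{v} - \beta\bigr).
\]

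The key step is then immediate. Because $\bar{L} = D^{-1/2} L D^{-1/2}$ is a congruence transform of the positive semi-definite Laplacian $L$ (Property~\ref{prop:laplacian}, item~1), we have $\mathbf{v}^T \bar{L} \mathbf{v} = (D^{-1/2}\mathbf{v})^T L (D^{-1/2}\mathbf{v}) \geq 0$; combined with the dual-feasibility condition $\lambda > 0$ from~(\ref{eq:KKT:lam}), the displayed identity forces $\mathbf{v}^T \bar{Q} \mathbf{v} - \beta \geq 0$, i.e. $\beta \leq \mathbf{v}^T \bar{Q} \mathbf{v}$.

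The main obstacle is upgrading this to the strict inequality claimed by the lemma, which amounts to ruling out $\mathbf{v}^T \bar{L} \mathbf{v} = 0$. Since $\bar{L}$ is positive semi-definite, that energy vanishes only when $\mathbf{v}$ lies in the null space of $\bar{L}$; and because $L$ annihilates exactly the span of $\mathbf{1}$ for a connected graph (Property~\ref{prop:laplacian}, items~2--3), $\bar{L}\mathbf{v}=0$ forces $D^{-1/2}\mathbf{v}$ to be a multiple of $\mathbf{1}$, i.e. $\mathbf{v} = c\, D^{1/2}\mathbf{1}$. Under the normalization $\mathbf{v}^T\mathbf{v}=vol$ and the fact that $(D^{1/2}\mathbf{1})^T (D^{1/2}\mathbf{1}) = \mathbf{1}^T D \mathbf{1} = vol$, this pins $c = \pm 1$, so the only candidates with zero Laplacian energy are $\mathbf{v} = \pm D^{1/2}\mathbf{1}$ — precisely the trivial solution excluded by the constraint $\mathbf{v}\neq D^{1/2}\mathbf{1}$ (the two signs encode the same degenerate all-in-one-cluster assignment). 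I would therefore conclude that any admissible feasible solution has $\mathbf{v}^T\bar{L}\mathbf{v} > 0$, whence $\beta < \mathbf{v}^T\bar{Q}\mathbf{v}$. The delicate point I expect to guard most carefully is confirming that this trivial vector is the genuine and only obstruction, i.e. that the feasible eigenvectors produced by~(\ref{eq:gen}) cannot secretly coincide with $\pm D^{1/2}\mathbf{1}$ at a positive $\lambda$; here I would rely on the explicit exclusion constraint rather than on the structure of the generalized eigenproblem itself.
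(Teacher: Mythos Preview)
Your argument is correct and essentially identical to the paper's own proof: both left-multiply the stationarity condition by $\mathbf{v}^T$, use $\mathbf{v}^T\mathbf{v}=vol$ and the definition of $\beta$ to obtain $\mathbf{v}^T\bar{L}\mathbf{v} = \lambda(\mathbf{v}^T\bar{Q}\mathbf{v}-\beta)$, and then invoke positive semi-definiteness of $\bar{L}$ together with $\lambda>0$. Your treatment of the strictness step is in fact slightly more careful than the paper's, which simply asserts $\mathbf{v}^T\bar{L}\mathbf{v}>0$ for $\mathbf{v}\neq D^{1/2}\mathbf{1}$ without spelling out the null-space argument or the $\pm$ sign issue.
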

\begin{proof}
Let $\gamma \triangleq \mathbf{v}^T \bar{L} \mathbf{v}$, by left-hand multiplying $\mathbf{v}^T$ to both sides of Eq.(\ref{eq:gen}) we have
\begin{displaymath}
\mathbf{v}^T \bar{L} \mathbf{v} = \lambda \mathbf{v}^T (\bar{Q} - \frac{\beta}{vol} I) \mathbf{v}.
\end{displaymath}
Then incorporating Eq.(\ref{eq:KKT:v}) and $\alpha \triangleq \mathbf{v}^T \bar{Q} \mathbf{v}$ we have
\begin{displaymath}
\gamma = \lambda (\alpha - \beta).
\end{displaymath}
Now recall that $L$ is positive semi-definite (Property~\ref{prop:laplacian}), and so is $\bar{L}$, which means
\begin{displaymath}
\gamma = \mathbf{v}^T \bar{L} \mathbf{v} > 0,\mbox{~}\forall \mathbf{v} \neq D^{1/2} \mathbf{1}.
\end{displaymath}
Consequently, we have
\begin{displaymath}
\alpha - \beta = \frac{\gamma}{\lambda} > 0 \mbox{~~} \Rightarrow \mbox{~~} \mathbf{v}^T \bar{Q} \mathbf{v} = \alpha > \beta.
\end{displaymath}
\end{proof}

In summary, our algorithm works as follows (the exact implementation is shown in Algorithm~\ref{alg}):
\begin{enumerate}
\item \textbf{Generating candidates}: The user specifies a value for $\beta$, and we solve the generalized eigenvalue system given in Eq.(\ref{eq:gen}). Note that both $\bar{L}$ and $\bar{Q} - \beta/vol I$ are Hermitian matrices, thus the generalized eigenvalues are guaranteed to be real numbers.
\item \textbf{Finding the feasible set}: Removing generalized eigenvectors associated with non-positive eigenvalues, and normalize the rest such that $\mathbf{v}^T \mathbf{v} = vol$. Note that the trivial solution $D^{1/2} \mathbf{1}$ is automatically removed in this step because if it is a generalized eigenvector in Eq.(\ref{eq:gen}), the associated eigenvalue would be $0$. Since we have at most $N$ generalized eigenvectors, the number of feasible eigenvectors is at most $N$.
\item \textbf{Choosing the optimal solution}:  We choose from the feasible solutions the one that minimizes $\mathbf{v}^T \bar{L} \mathbf{v}$, say $\mathbf{v}^*$.
\end{enumerate}

According to Lemma~\ref{lemma}, $\mathbf{v}^*$ is the optimal solution to the objective function in Eq.(\ref{eq:objective}) for any given $\beta$ and $\beta < \alpha = \mathbf{v}^{*T} \bar{Q} \mathbf{v}^*$.

\begin{algorithm}[t]\label{alg}
\caption{Constrained Spectral Clustering}
\KwIn{Affinity matrix $A$, constraint matrix $Q$, $\beta$\;}
\KwOut{The optimal (relaxed) cluster indicator $\mathbf{u}^*$\;}
$vol \leftarrow \sum_{i=1}^N \sum_{j=1}^N A_{ij}$, $D \leftarrow \text{diag}(\sum_{j=1}^N A_{ij})$\;
$\bar{L} \leftarrow I - D^{-1/2} A D^{-1/2}$, $\bar{Q} \leftarrow D^{-1/2} Q D^{-1/2}$\;
$\lambda_{max}(\bar{Q}) \leftarrow$ the largest eigenvalue of $\bar{Q}$\;
\If{$\beta \geq \lambda_{max}(\bar{Q}) \cdot vol$}{\Return $\mathbf{u}^*=\emptyset$\;}
\Else{
Solve the generalized eigenvalue system in Eq.(\ref{eq:gen})\;
Remove eigenvectors associated with non-positive eigenvalues and normalize the rest by $\mathbf{v} \leftarrow \frac{\mathbf{v}}{\|\mathbf{v}\|}\sqrt{vol}$\;
$\mathbf{v}^* \leftarrow \argmin_{\mathbf{v}} \mathbf{v}^T \bar{L} \mathbf{v}$, where $\mathbf{v}$ is among the feasible eigenvectors generated in the previous step\;
\Return $\mathbf{u}^* \leftarrow D^{-1/2} \mathbf{v}^*$\;
}
\end{algorithm}

\subsection{A Sufficient Condition for the Existence of Solutions}
\label{sec:model:parameter}

On one hand, our method described above is guaranteed to generate a finite number of feasible solutions. On the other hand, we need to set $\beta$ appropriately so that the generalized eigenvalue system in Eq.(\ref{eq:gen}) combined with the KKT conditions in Eq.(\ref{eq:KKT:L}-\ref{eq:KKT:lam}) will give rise to at least one feasible solution. In this section, we discuss such a sufficient condition:
\begin{displaymath}
\beta < \lambda_{max}(\bar{Q}) \cdot vol,
\end{displaymath}
where $\lambda_{max}(\bar{Q})$ is the largest eigenvalue of $\bar{Q}$. In this case, the matrix on the right hand side of Eq.(\ref{eq:gen}), namely $\bar{Q} - \beta / vol \cdot I$, will have at least one positive eigenvalue. Consequently, the generalized eigenvalue system in Eq.(\ref{eq:gen}) will have at least one positive eigenvalue. Moreover, the number of feasible eigenvectors will increase if we make $\beta$ smaller. For example, if we set $\beta < \lambda_{min}(\bar{Q}) vol$, $\lambda_{min}(\bar{Q})$ to be the smallest eigenvalue of $\bar{Q}$, then $\bar{Q} - \beta / vol \cdot I$ becomes positive definite. Then the generalized eigenvalue system in Eq.(\ref{eq:gen}) will generate $N-1$ feasible eigenvectors (the trivial solution $D^{1/2} \mathbf{1}$ with eigenvalue $0$ is dropped).

In practice, we normally choose the value of $\beta$ within the range
\begin{displaymath}
(\lambda_{min}(\bar{Q}) \cdot vol, \lambda_{max}(\bar{Q}) \cdot vol).
\end{displaymath}
In that range, the greater $\beta$ is, the more the solution will be biased towards satisfying $\bar{Q}$. Again, note that whenever we have $\beta < \lambda_{max}(\bar{Q}) \cdot vol$, the value of $\alpha$ will always be bounded by
\begin{displaymath}
\beta < \alpha \leq \lambda_{max} vol.
\end{displaymath}
Therefore we do not need to take care of $\alpha$ explicitly.

\subsection{An Illustrative Example}
\label{sec:model:example}

\begin{figure}
\centering
\includegraphics*[width=0.4\linewidth]{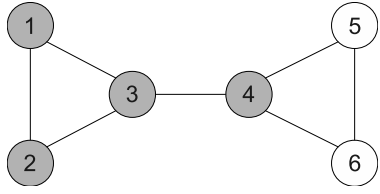}
\caption{An illustrative example: the affinity structure says $\{1,2,3\}$ and $\{4,5,6\}$ while the node labeling (coloring) says $\{1,2,3,4\}$ and $\{5,6\}$.}\label{fig:example}
\end{figure}

To illustrate how our algorithm works, we present a toy example as follows. In Fig.~\ref{fig:example}, we have a graph associated with the following affinity matrix:
\begin{displaymath}
  A=
  \begin{bmatrix}
    0 & 1 & 1 & 0 & 0 & 0\\
    1 & 0 & 1 & 0 & 0 & 0\\
    1 & 1 & 0 & 1 & 0 & 0\\
    0 & 0 & 1 & 0 & 1 & 1\\
    0 & 0 & 0 & 1 & 0 & 1\\
    0 & 0 & 0 & 1 & 1 & 0\\
  \end{bmatrix}
\end{displaymath}
Unconstrained spectral clustering will cut the graph at edge $(3,4)$ and split it into two symmetric parts $\{1,2,3\}$ and $\{4,5,6\}$ (Fig.~\ref{fig:eg_sol:no_cons}).

Then we introduce constraints as encoded in the following constraint matrix:
\begin{displaymath}
  Q=
  \begin{bmatrix}
    +1 & +1 & +1 & +1 & -1 & -1\\
    +1 & +1 & +1 & +1 & -1 & -1\\
    +1 & +1 & +1 & +1 & -1 & -1\\
    +1 & +1 & +1 & +1 & -1 & -1\\
    -1 & -1 & -1 & -1 & +1 & +1\\
    -1 & -1 & -1 & -1 & +1 & +1\\
  \end{bmatrix}.
\end{displaymath}
$Q$ is essentially saying that we want to group nodes $\{1,2,3,4\}$ into one cluster and $\{5,6\}$ the other. Although this kind of ``complete information'' constraint matrix does not happen in practice, we use it here only to make the result more explicit and intuitive.

$\bar{Q}$ has two distinct eigenvalues: $0$ and $2.6667$. As analyzed above, $\beta$ must be smaller than $2.6667 vol$ to guarantee the existence of a feasible solution, and larger $\beta$ means we want more constraints in $Q$ to be satisfied (in a relaxed sense). Thus we set $\beta$ to $vol$ and $2 vol$ respectively, and see how it will affect the resultant constrained cuts. We solve the generalized eigenvalue system in Eq.(\ref{eq:gen}), and plot the cluster indicator vector $\mathbf{u}^*$ in Fig.~\ref{fig:eg_sol:beta_1} and \ref{fig:eg_sol:beta_2}, respectively. We can see that as $\beta$ increases, node $4$ is dragged from the group of nodes $\{5,6\}$ to the group of nodes $\{1,2,3\}$, which conforms to our expectation that greater $\beta$ value implies better constraint satisfaction.

\begin{figure*}
\centering
\subfigure[Unconstrained]{\includegraphics*[width=0.3\linewidth]{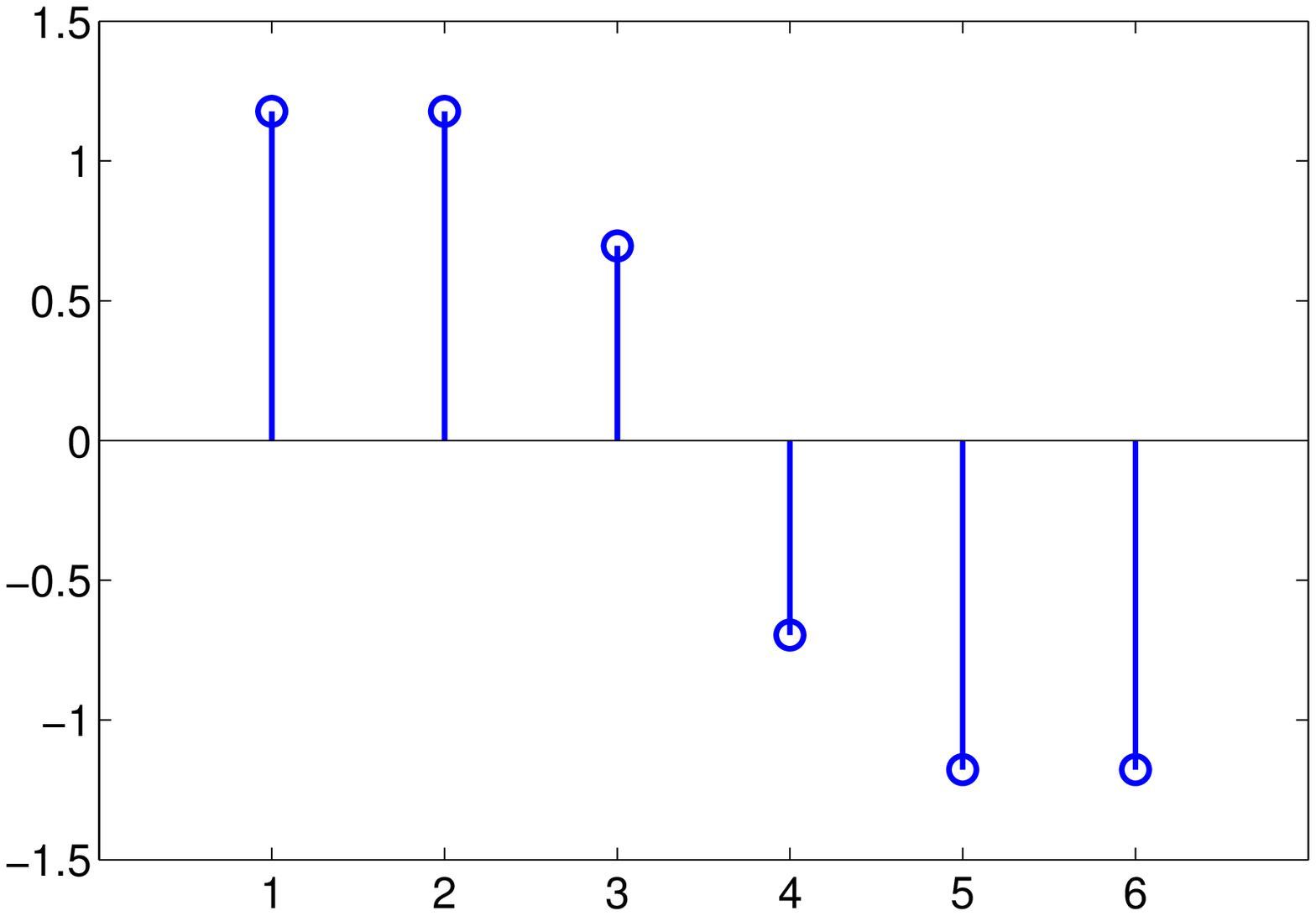}\label{fig:eg_sol:no_cons}}
\subfigure[Constrained, $\beta=vol$]{\includegraphics*[width=0.3\linewidth]{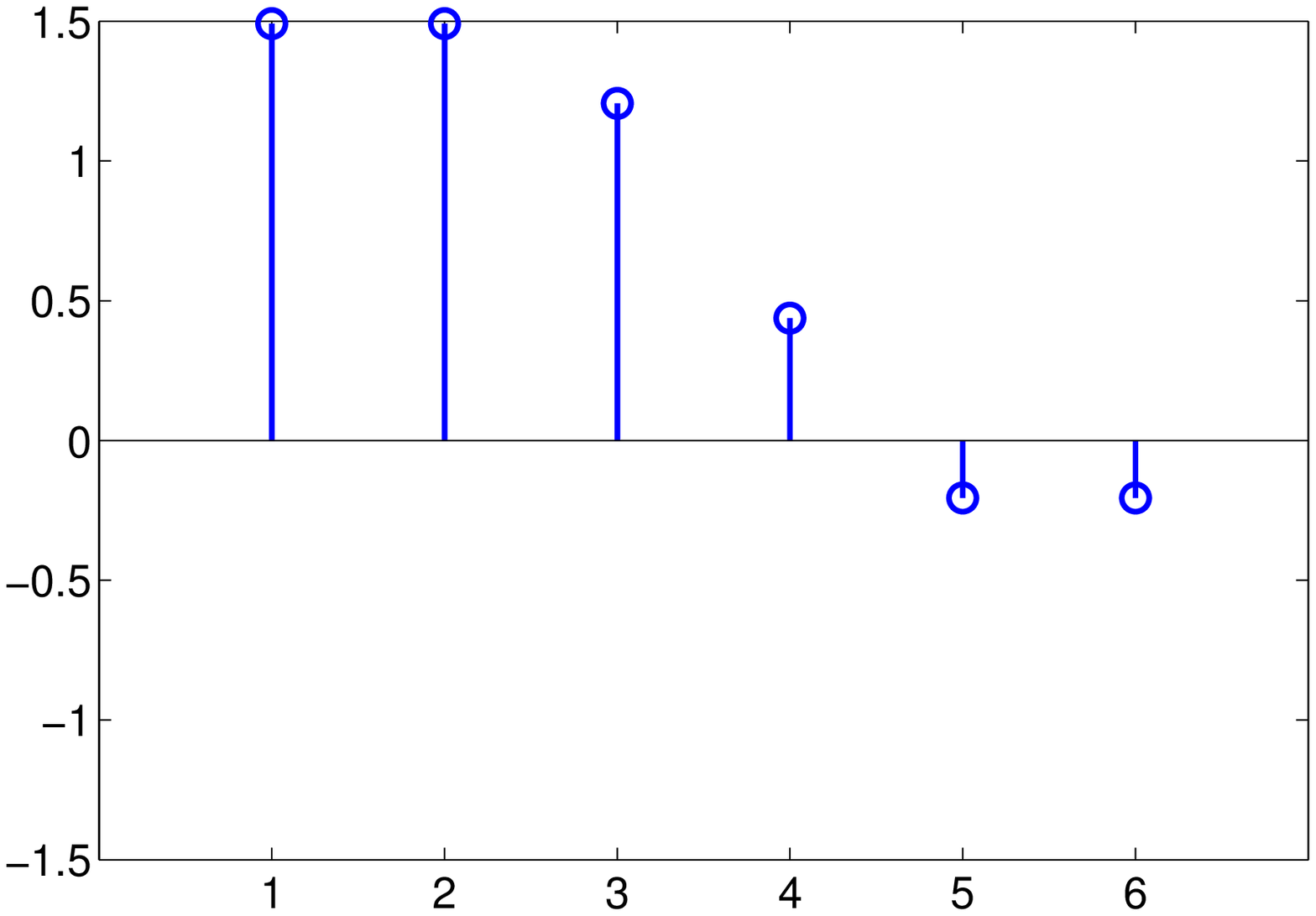}\label{fig:eg_sol:beta_1}}
\subfigure[Constrained, $\beta=2 vol$]{\includegraphics*[width=0.3\linewidth]{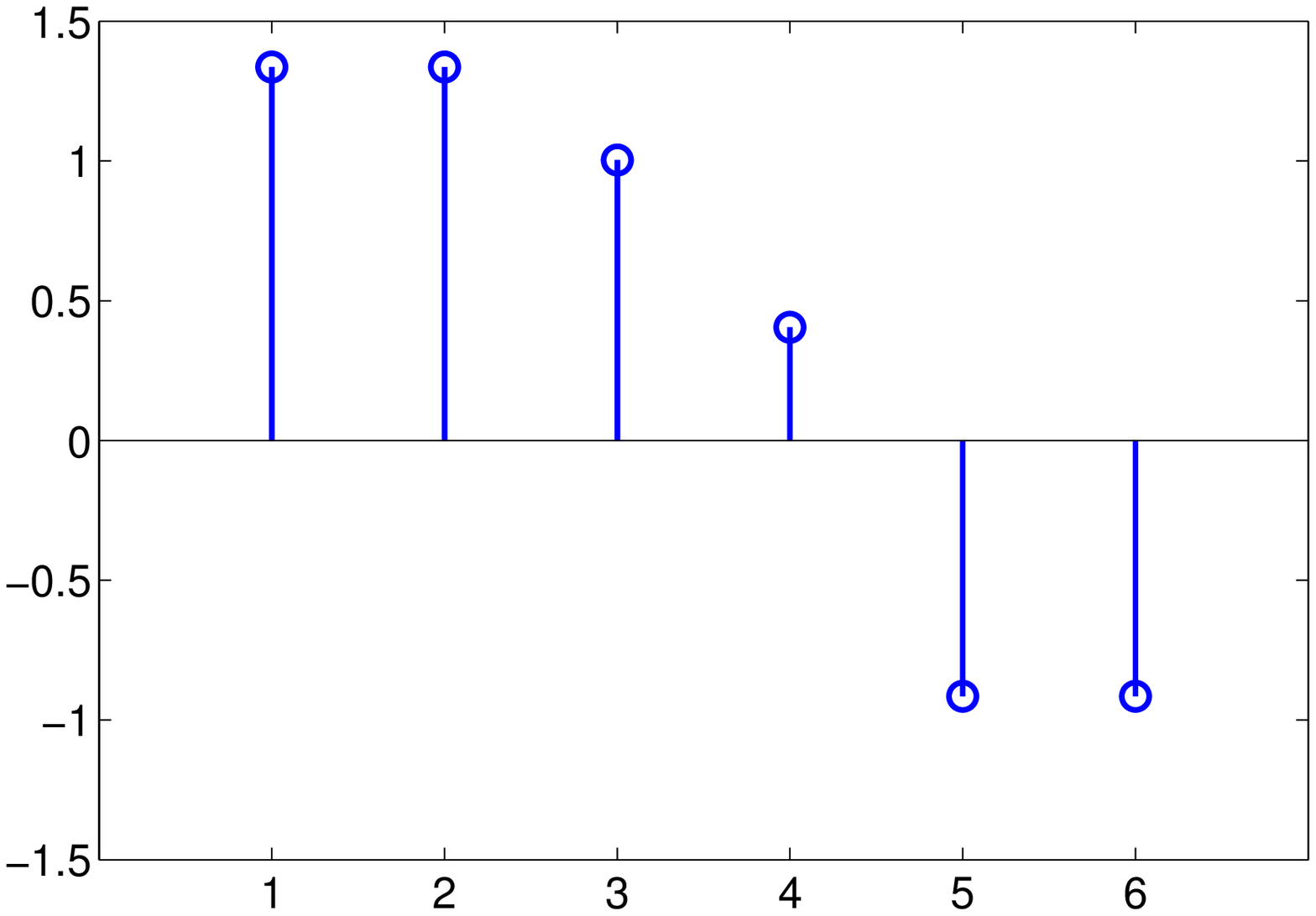}\label{fig:eg_sol:beta_2}}
\caption{The solutions to the illustrative example in Fig.~\ref{fig:example} with different $\beta$. The x-axis is the indices of the instances and the y-axis is the corresponding entry values in the optimal (relaxed) cluster indicator $\mathbf{u}^*$. Notice that node $4$ is biased toward nodes $\{1,2,3\}$ as $\beta$ increases.}\label{fig:eg_sol}
\end{figure*}

\section{Interpretations of Our Formulation}
\label{sec:interpret}

\subsection{A Graph Cut Interpretation}

Unconstrained spectral clustering can be interpreted as finding the Ncut of an \emph{unlabeled} graph. Similarly, our formulation of constrained spectral clustering in Eq.(\ref{eq:objective}) can be interpreted as finding the Ncut of a \emph{labeled/colored} graph.

Specifically, suppose we have an undirected weighted graph. The nodes of the graph are colored in such a way that nodes of the same color are advised to be assigned into the same cluster while nodes of different colors are advised to be assigned into different clusters (e.g. Fig.~\ref{fig:example}). Let $\mathbf{v}^*$ be the solution to the constrained optimization problem in Eq.(\ref{eq:objective}). We cut the graph into two parts based on the values of the entries of $\mathbf{u}^* = D^{-1/2} \mathbf{v}^*$. Then $\mathbf{v}^{*T} \bar{L} \mathbf{v}^*$ can be interpreted as the \textbf{cost} of the cut (in a relaxed sense), which we minimize. On the other hand,
\begin{displaymath}
\alpha = \mathbf{v}^{*T} \bar{Q} \mathbf{v}^* =  \mathbf{u}^{*T} Q \mathbf{u}^*
\end{displaymath}
can be interpreted as the \textbf{purity} of the cut (also in a relaxed sense), according to the color of the nodes in respective sides. For example, if $Q \in \{-1,0,1\}^{N \times N}$ and $\mathbf{u}^* \in \{-1,1\}^N$, then $\alpha$ equals to the number of constraints in $Q$ that are satisfied by $\mathbf{u}^*$ minus the number of constraints violated. More generally, if $Q_{ij}$ is a positive number, then $\mathbf{u}^*_i$ and $\mathbf{u}^*_j$ having the same sign will contribute positively to the purity of the cut, whereas different signs will contribute negatively to the purity of the cut. It is not difficult to see that the purity can be maximized when there is no pair of nodes with different colors that are assigned to the same side of the cut (0 violations), which is the case where all constraints in $Q$ are satisfied.

\subsection{A Geometric Interpretation}
\label{sec:interpret:geo}

We can also interpret our formulation as constraining the joint numerical range (\citet{Horn1990}) of the graph Laplacian and the constraint matrix. Specifically, we consider the joint numerical range:
\begin{equation}\label{eq:jnr}
J(\bar{L},\bar{Q}) \triangleq \{(\mathbf{v}^T \bar{L} \mathbf{v}, \mathbf{v}^T \bar{Q} \mathbf{v}): \mathbf{v}^T \mathbf{v} = 1\}.
\end{equation}
$J(\bar{L},\bar{Q})$ essentially maps all possible cuts $\mathbf{v}$ to a 2-D plane, where the $x$-coordinate corresponds to the cost of the cut, and the $y$-axis corresponds to the constraint satisfaction of the cut. According to our objective in Eq.(\ref{eq:objective}), we want to minimize the first term while lower-bounding the second term. Therefore, we are looking for the leftmost point among those that are above the horizontal line $y = \alpha$.

In Fig.~\ref{fig:jnr}(c), we visualize $J(\bar{L},\bar{Q})$ by plotting all the unconstrained cuts given by spectral clustering and all the constrained cuts given by our algorithm in the joint numerical range, based on the graph Laplacian of a Two-Moon dataset with a randomly generated constraint matrix. The horizontal line and the arrow indicate the constrained area from which we can select feasible solutions. We can see that most of the unconstrained cuts proposed by spectral clustering are far below the threshold, which suggests spectral clustering cannot lead to the ground truth partition (as shown in Fig.~\ref{fig:jnr}(b)) without the help of constraints.

\begin{figure*}
\centering
\subfigure[The unconstrained Ncut]{\includegraphics*[width=0.45\linewidth]{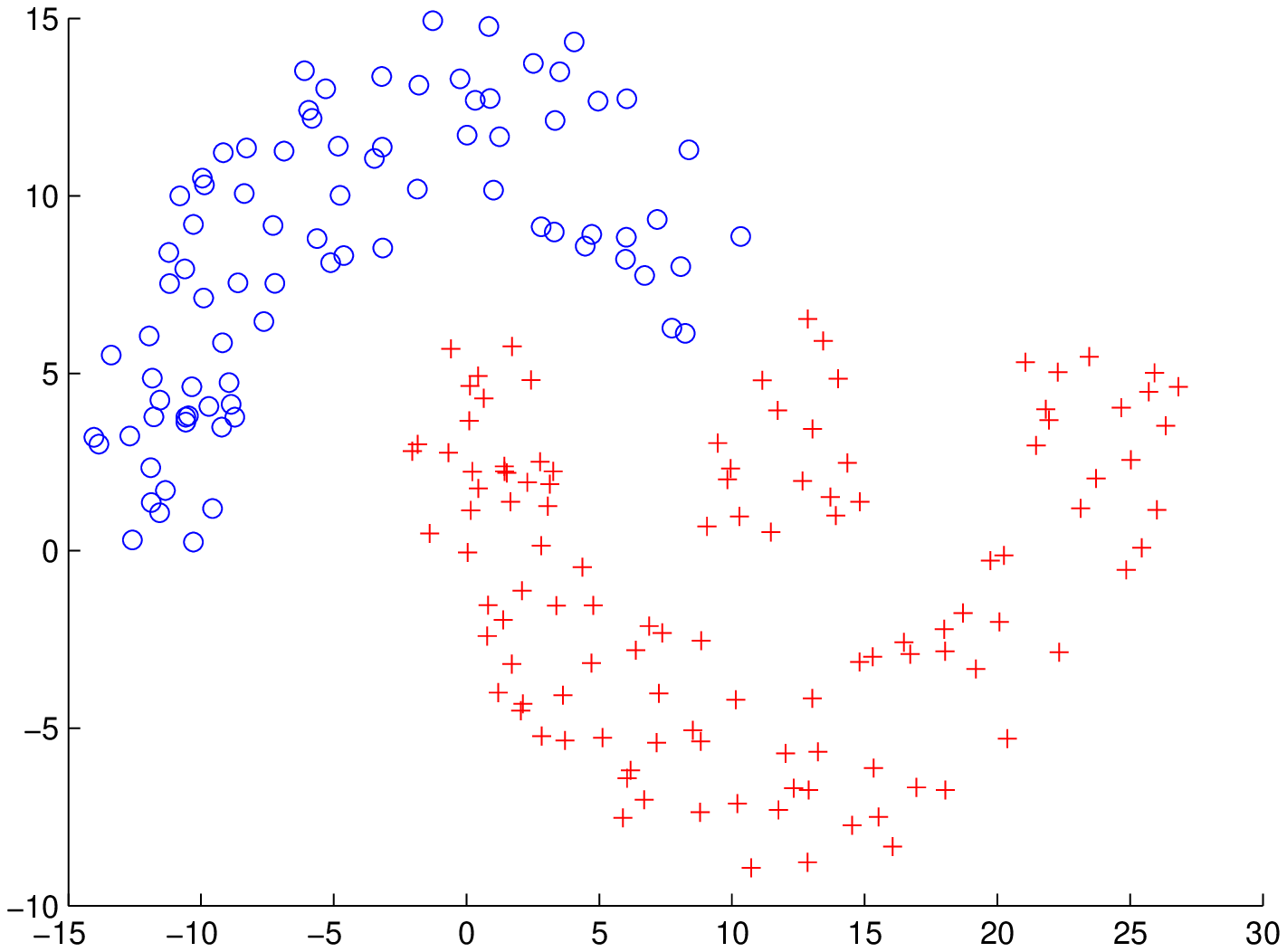}} \quad
\subfigure[The constrained Ncut]{\includegraphics*[width=0.45\linewidth]{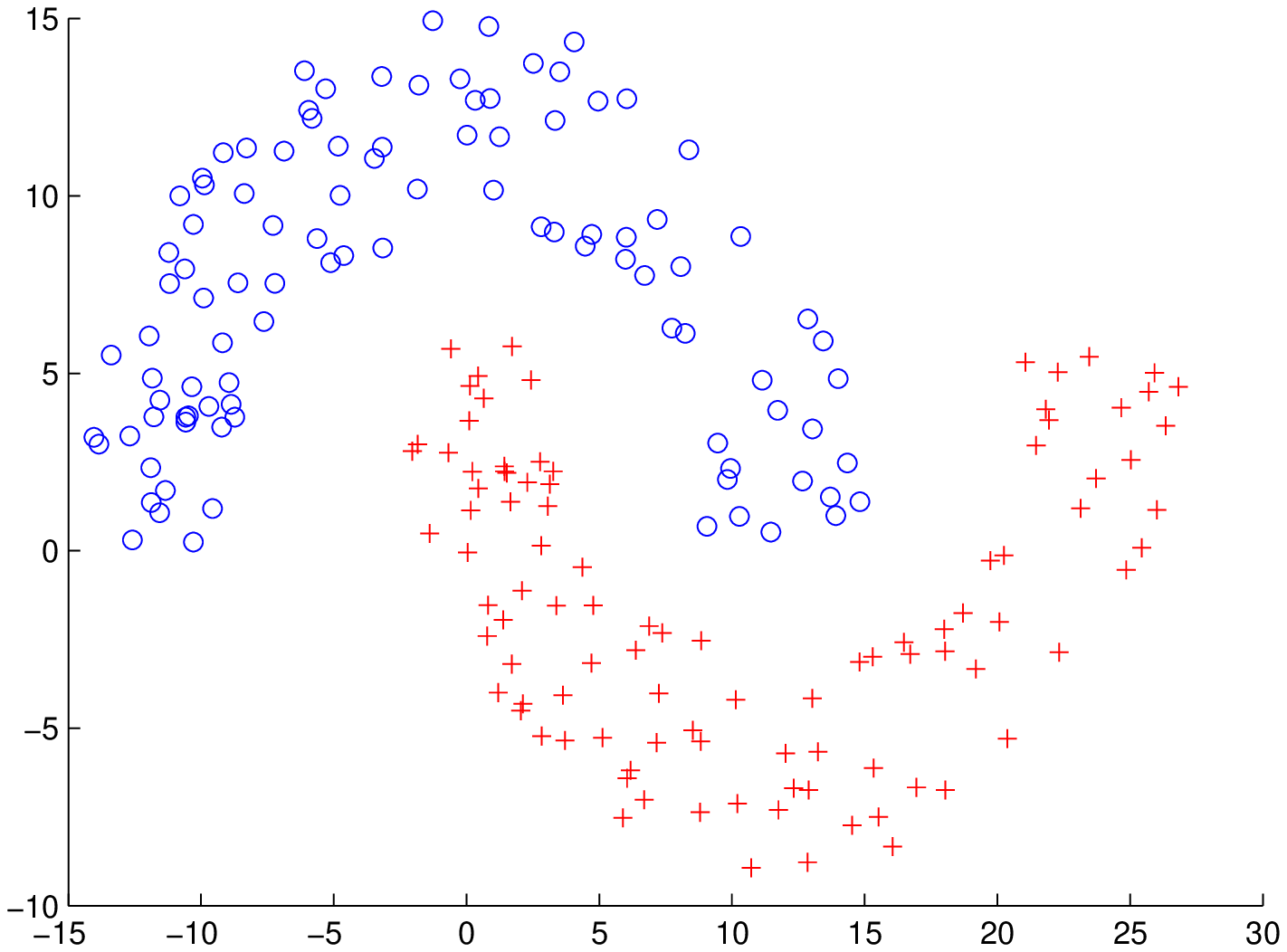}}\\
\subfigure[$J(\bar{L},\bar{Q})$]{\includegraphics*[width=0.8\linewidth]{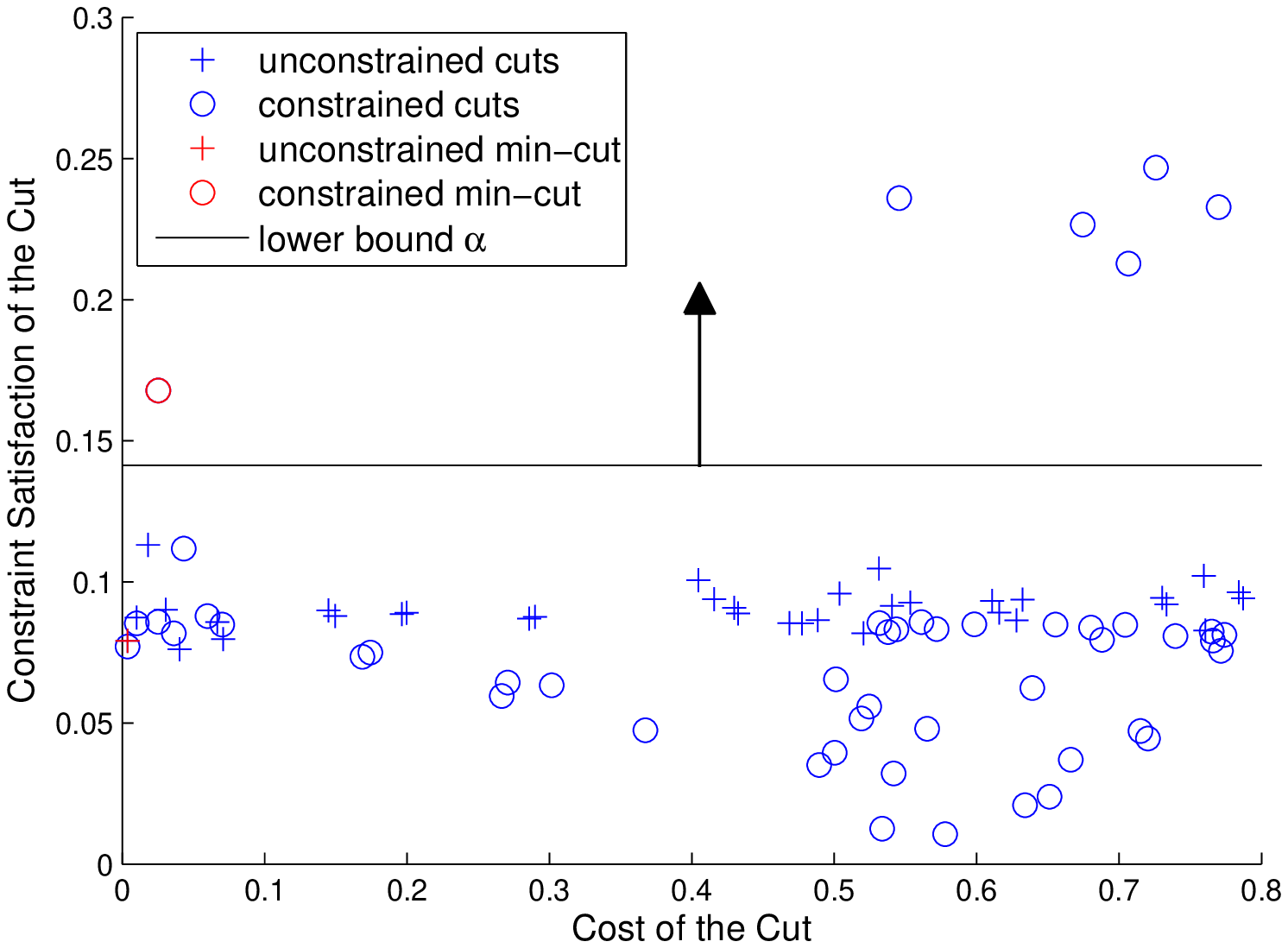}}
\caption{The joint numerical range of the normalized graph Laplacian $\bar{L}$ and the normalized constraint matrix $\bar{Q}$, as well as the optimal solutions to unconstrained/constrained spectral clustering.}\label{fig:jnr}
\end{figure*}

\section{Implementation and Extensions}
\label{sec:algorithm}

In this section, we discuss some implementation issues of our method. Then we show how to generalize it to $K$-way partition where $K \geq 2$.

\subsection{Constrained Spectral Clustering for 2-Way Partition}

The routine of our method is similar to that of unconstrained spectral clustering. The input of the algorithm is an affinity matrix $A$, the
constraint matrix $Q$, and a threshold $\beta$. Then we solve the generalized eigenvalue problem in Eq.(\ref{eq:gen}) and find all the feasible generalized eigenvectors. The output is the optimal (relaxed) cluster assignment indicator $\mathbf{u}^*$. In practice, a partition is often derived from $\mathbf{u}^*$ by assigning nodes corresponding to the positive entries in $\mathbf{u}^*$ to one side of the cut, and negative entries to the other side. Our algorithm is summarized in Algorithm~\ref{alg}.

Since our model encodes soft constraints as degree of belief, inconsistent constraints in $Q$ will not corrupt our algorithm. Instead, they are enforced implicitly by maximizing $\mathbf{u}^T Q \mathbf{u}$. Note that if the constraint matrix $Q$ is generated from a partial labeling, then the constraints in $Q$ will always be consistent.

\textbf{Runtime analysis:} The runtime of our algorithm is dominated by that of the generalized eigendecomposition. In other words, the complexity of our algorithm is on a par with that of unconstrained spectral clustering in big-O notation, which is $\mathcal{O}(kN^2)$, $N$ to be the number of data instances and $k$ to be the number of eigenpairs we need to compute. Here $k$ is a number large enough to guarantee the existence of feasible solutions. In practice we normally have $2 < k \ll N$.

\subsection{Extension to $K$-Way Partition}
\label{sec:algorithm:k-way}

Our algorithm can be naturally extended to $K$-way partition for $K>2$, following what we usually do for unconstrained spectral clustering (\citet{Luxburg2007}): instead of only using the optimal feasible eigenvector $\mathbf{u}^*$, we preserve top-$(K-1)$ eigenvectors associated with positive eigenvalues, and perform $K$-means algorithm based on that embedding.

Specifically, the constraint matrix $Q$ follows the same encoding scheme: $Q_{ij}>0$ if node $i$ and $j$ are believed to belong to the same cluster, $Q_{ij}<0$ otherwise. To guarantee we can find $K-1$ feasible eigenvectors, we set the threshold $\beta$ such that
\begin{displaymath}
\beta < \lambda_{K-1} vol,
\end{displaymath}
where $\lambda_{K-1}$ is the $(K-1)$-th largest eigenvalue of $\bar{Q}$. Given all the feasible eigenvectors, we pick the top $K-1$ in terms of minimizing $\mathbf{v}^T \bar{L} \mathbf{v}$ \footnote{Here we assume the trivial solution, the eigenvector with all 1's, has been excluded.}. Let the $K-1$ eigenvectors form the columns of $V \in \mathbb{R}^{N \times (K-1)}$. We perform $K$-means clustering on the rows of $V$ and get the final clustering. Algorithm~\ref{alg_K} shows the complete routine.

Note that $K$-means is only one of many possible discretization techniques that can derive a $K$-way partition from the relaxed indicator matrix $D^{-1/2} V^*$. Due to the orthogonality of the eigenvectors, they can be independently discretized first, e.g. we can replace Step 11 of Algorithm~\ref{alg_K} with:
\begin{equation}\label{eq:sign}
\mathbf{u}^* \leftarrow \textbf{kmeans}(\textbf{sign}(D^{-1/2} V^*), K).
\end{equation}
This additional step can help alleviate the influence of possible outliers on the $K$-means step in some cases.

Moreover, notice that the feasible eigenvectors, which are the columns of $V^*$, are treated equally in Eq.(\ref{eq:sign}). This may not be ideal in practice because these candidate cuts are not equally favored by graph $\mathcal{G}$, i.e. some of them have higher costs than the other. Therefore, we can weight the columns of $V^*$ with the inverse of their respective costs:
\begin{equation}\label{eq:weight}
\mathbf{u}^* \leftarrow \textbf{kmeans}(\textbf{sign}(D^{-1/2} V^* (V^{*T} \bar{L} V^*)^{-1}), K).
\end{equation}

\begin{algorithm}[t]\label{alg_K}
\caption{Constrained Spectral Clustering for $K$-way Partition}
\KwIn{Affinity matrix $A$, constraint matrix $Q$, $\beta$, $K$\;}
\KwOut{The cluster assignment indicator $\mathbf{u}^*$\;}
$vol \leftarrow \sum_{i=1}^N \sum_{j=1}^N A_{ij}$, $D \leftarrow \text{diag}(\sum_{j=1}^N A_{ij})$\;
$\bar{L} \leftarrow I - D^{-1/2} A D^{-1/2}$, $\bar{Q} \leftarrow D^{-1/2} Q D^{-1/2}$\;
$\lambda_{max} \leftarrow$ the largest eigenvalue of $\bar{Q}$\;
\If{$\beta \geq \lambda_{K-1} vol$}{\Return $\mathbf{u}^*=\emptyset$\;}
\Else{
Solve the generalized eigenvalue system in Eq.(\ref{eq:gen})\;
Remove eigenvectors associated with non-positive eigenvalues and normalize the rest by $\mathbf{v} \leftarrow \frac{\mathbf{v}}{\|\mathbf{v}\|}\sqrt{vol}$\;
$V^* \leftarrow \argmin_{V \in \mathbb{R}^{N \times (K-1)}} \text{trace}(V^T \bar{L} V)$, where the columns of $V$ are a subset of the feasible eigenvectors generated in the previous step\;
\Return $\mathbf{u}^* \leftarrow \textbf{kmeans}(D^{-1/2} V^*, K)$\;
}
\end{algorithm}

\subsection{Using Constrained Spectral Clustering for Transfer Learning}
\label{sec:algorithm:transfer}

The constrained spectral clustering framework naturally fits into the scenario of transfer learning between two graphs. Assume we have two graphs, a source graph and a target graph, which share the same set of nodes but have different sets of edges (or edge weights). The goal is to transfer knowledge from the source graph so that we can improve the cut on the target graph. The knowledge to transfer is derived from the source graph in the form of soft constraints.

Specifically, let $\mathcal{G}_S(V,E_S)$ be the source graph, $\mathcal{G}_T(V,E_T)$ the target graph. $A_S$ and $A_T$ are their respective affinity matrices. Then $A_S$ can be considered as a constraint matrix with only ML constraints. It carries the structural knowledge from the source graph, and we can transfer it to the target graph using our constrained spectral clustering formulation:
\begin{equation}
\argmin_{\mathbf{v} \in \mathbb{R}^N} \mathbf{v}^T \bar{L}_T \mathbf{v},~\text{s.t.}~\mathbf{v}^T A_S \mathbf{v} \geq \alpha,~\mathbf{v}^T \mathbf{v} = vol,~\mathbf{v} \neq D_T^{1/2} \mathbf{1}.
\end{equation}
$\alpha$ is now the lower bound of how much knowledge from the source graph must be enforced on the target graph. To solution to this is similar to Eq.(\ref{eq:gen}):
\begin{equation}
\bar{L}_T \mathbf{v} = \lambda (\bar{A}_S - \frac{\beta}{\text{vol}(\mathcal{G}_T)} I) \mathbf{v}
\end{equation}
Note that since the largest eigenvalue of $\bar{A}_S$ corresponds to a trivial cut, in practice we should set the threshold such that $\beta < \lambda_1 vol$, $\lambda_1$ to be the second largest eigenvalue of $\bar{A}_S$. This will guarantee a feasible eigenvector that is non-trivial.

\section{Testing and Innovative Uses of Our Work}
\label{sec:exp}

We begin with three sets of experiments to test our approach on standard spectral clustering data sets. We then show that since our approach can handle large amounts of soft constraints in a flexible fashion, this opens up two innovative uses of our work: encoding multiple metrics for translated document clustering and transfer learning for fMRI analysis.

We aim to answer the following questions with the empirical study:
\begin{itemize}
\item Can our algorithm effectively incorporate side information and generate semantically meaningful partitions?
\item Does our algorithm converge to the underlying ground truth partition as more constraints are provided?
\item How does our algorithm perform on real-world datasets, as evaluated against ground truth labeling, with comparison to existing techniques?
\item How well does our algorithm handle soft constraints?
\item How well does our algorithm handle large amounts of constraints?
\end{itemize}

Recall that in Section~\ref{sec:intro} we listed four different types of side information: explicit pairwise constraints, partial labeling, alternative metrics, and transfer of knowledge. The empirical results presented in this section are arranged accordingly.

All but one (the fMRI scans) datasets used in our experiments are publicly available online. We implemented our algorithm in MATLAB, which is available online at \url{http://bayou.cs.ucdavis.edu/} or by contacting the authors.

\subsection{Explicit Pairwise Constraints: Image Segmentation}
\label{sec:exp:image}

We demonstrate the effectiveness of our algorithm for image segmentation using explicit pairwise constraints assigned by users.

We choose the image segmentation application for several reasons: 1) it is one of the applications where spectral clustering significantly outperforms other clustering techniques, e.g. $K$-means; 2) the results of image segmentation can be easily interpreted and evaluated by human; 3) instead of generating random constraints, we can provide semantically meaningful constraints to see if the constrained partition conforms to our expectation.

The images we used were chosen from the Berkeley Segmentation Dataset and Benchmark (\citet{Martin2001}). The original images are $480 \times 320$ grayscale images in jpeg format. For efficiency consideration, we compressed them to $10\%$ of the original size, which is $48 \times 32$ pixels, as shown in Fig.~\ref{fig:ele:small} and \ref{fig:hat:small}. Then affinity matrix of the image was computed using the RBF kernel, based on both the positions and the grayscale values of the pixels. As a baseline, we used unconstrained spectral clustering (\citet{Shi2000}) to generate a 2-segmentation of the image. Then we introduced different sets of constraints to see if they can generate expected segmentation. Note that the results of image segmentation vary with the number of segments. To save us from the complications of parameter tuning, which is irrelevant to the contribution of this work, we always set the number of segments to be 2.

\begin{figure*}
\centering
\subfigure[Original image]{\includegraphics*[width=0.36\linewidth]{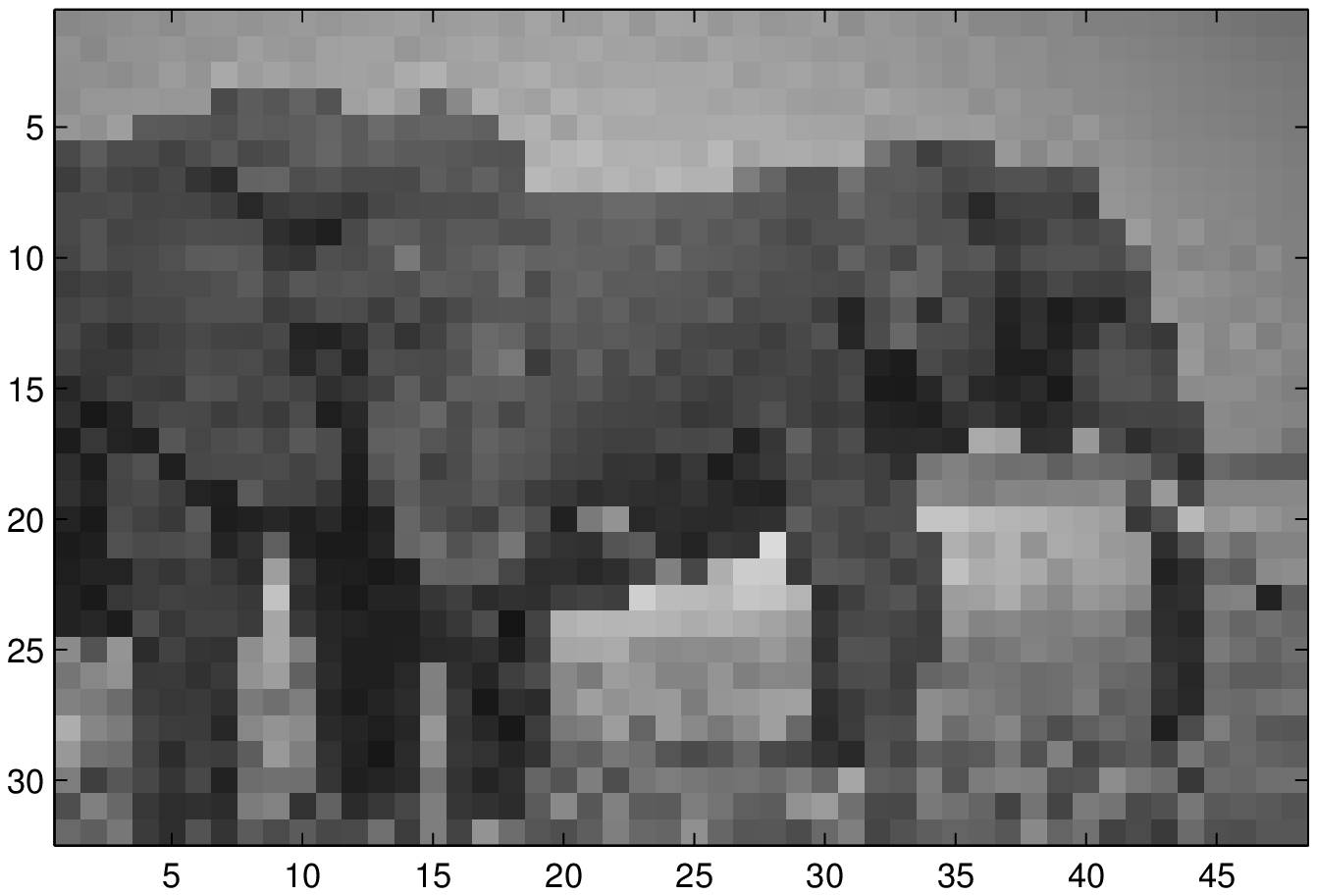}\label{fig:ele:small}}
\subfigure[No constraints]{\includegraphics*[width=0.36\linewidth]{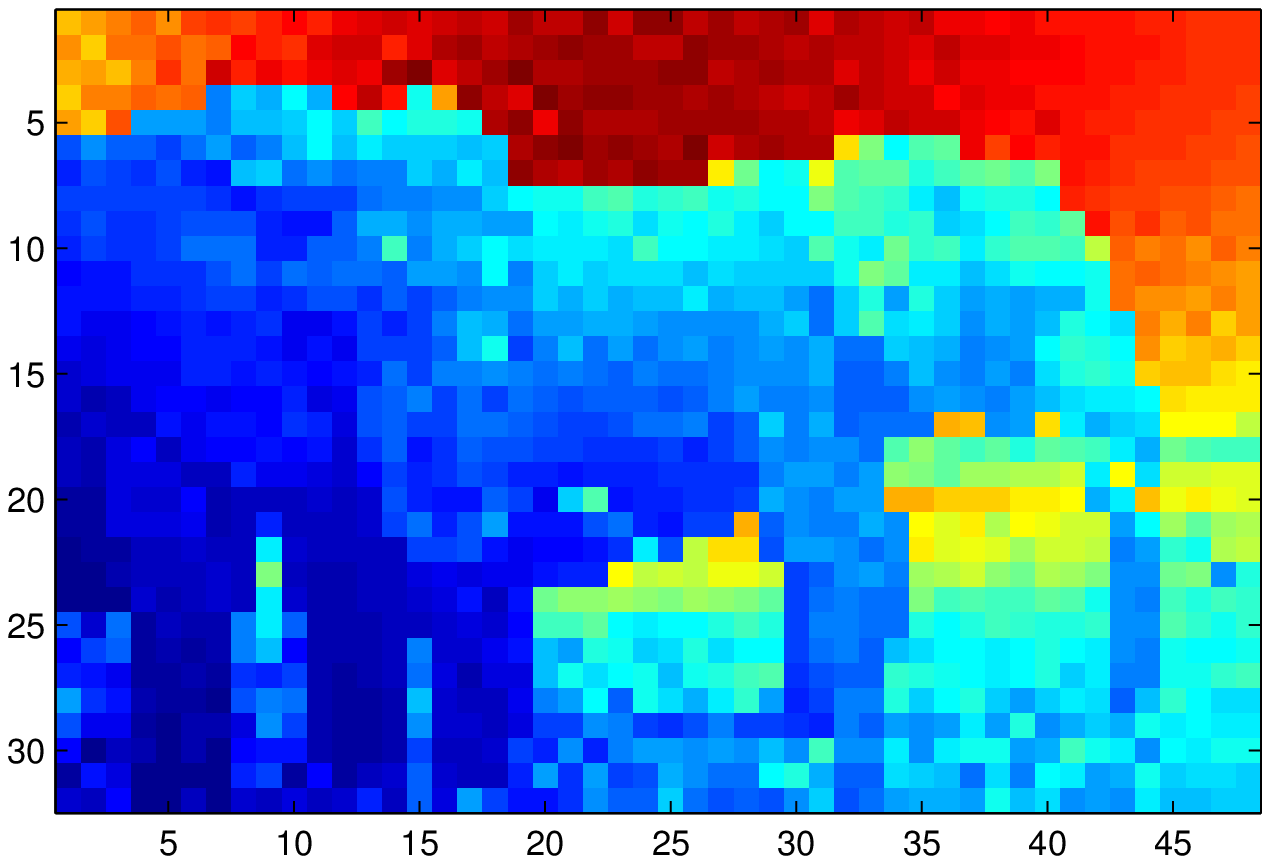}\label{fig:ele:uncon}}\\
\subfigure[Constraint Set 1]{\includegraphics*[width=0.36\linewidth]{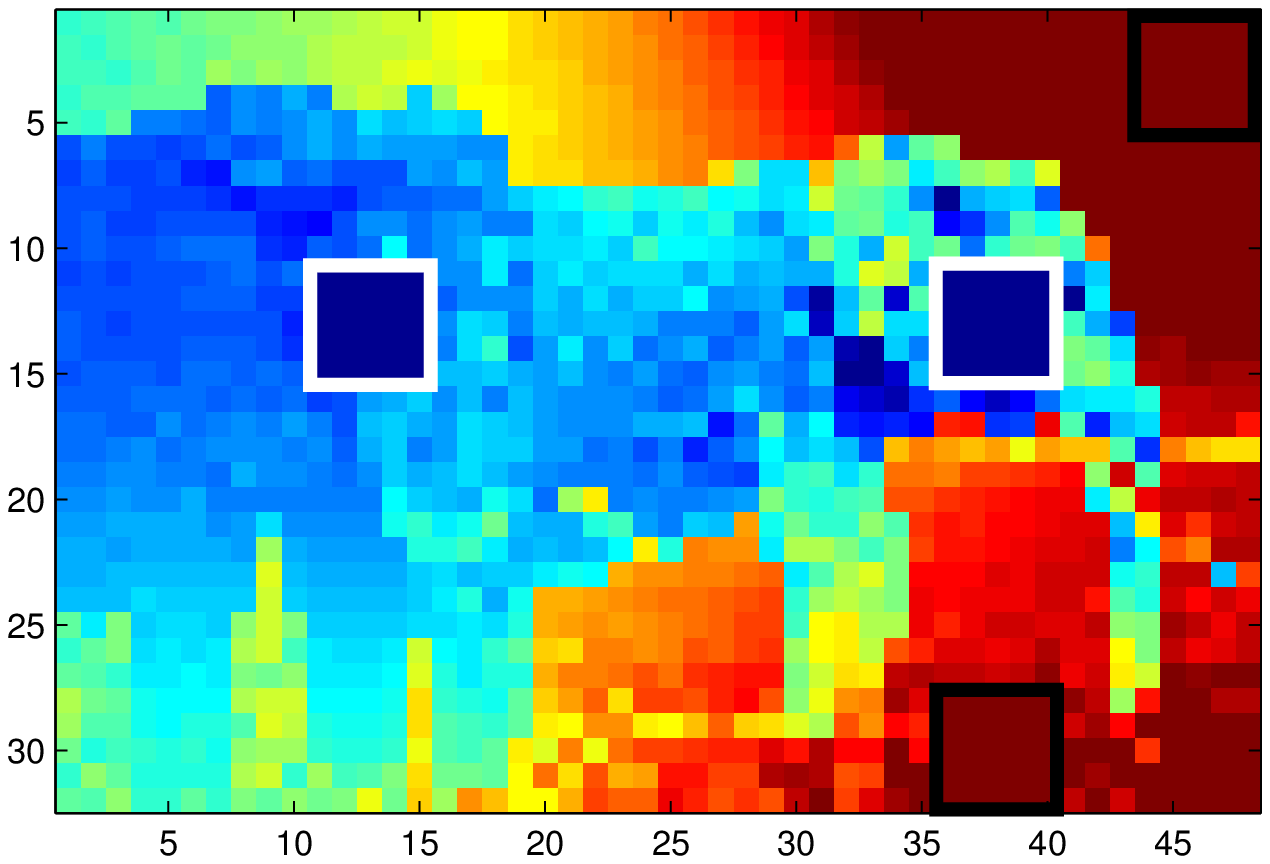}\label{fig:ele:con1}}
\subfigure[Constraint Set 2]{\includegraphics*[width=0.36\linewidth]{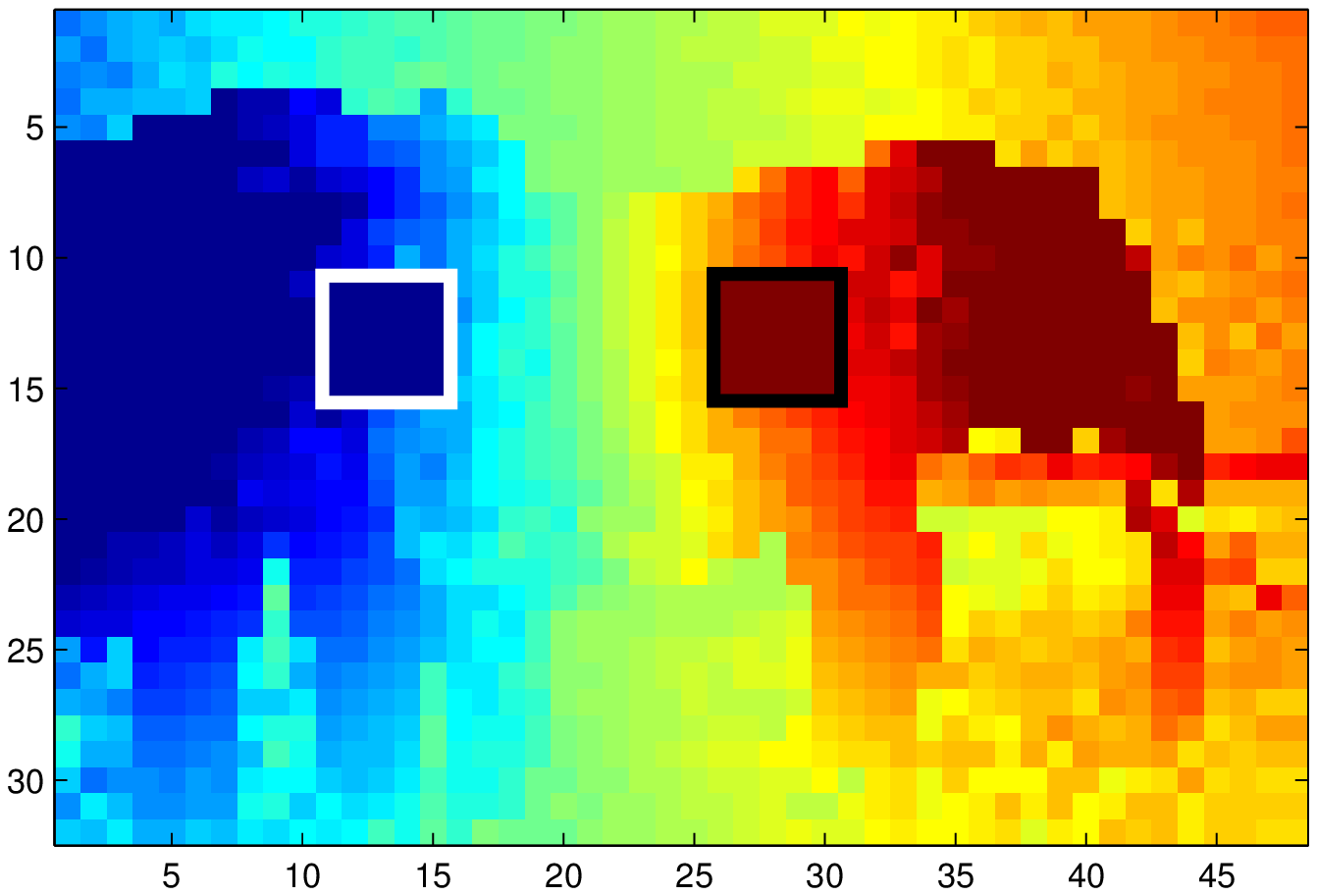}\label{fig:ele:con2}}
\caption{Segmentation of the elephant image. The images are reconstructed based on the relaxed cluster indicator $\mathbf{u}^*$. Pixels that are closer to the red end of the spectrum belong to one segment and blue the other. The labeled pixels are as bounded by the black and white rectangles.}\label{fig:elephant}
\end{figure*}

\begin{figure*}
\centering
\subfigure[Original image]{\includegraphics*[width=0.24\linewidth]{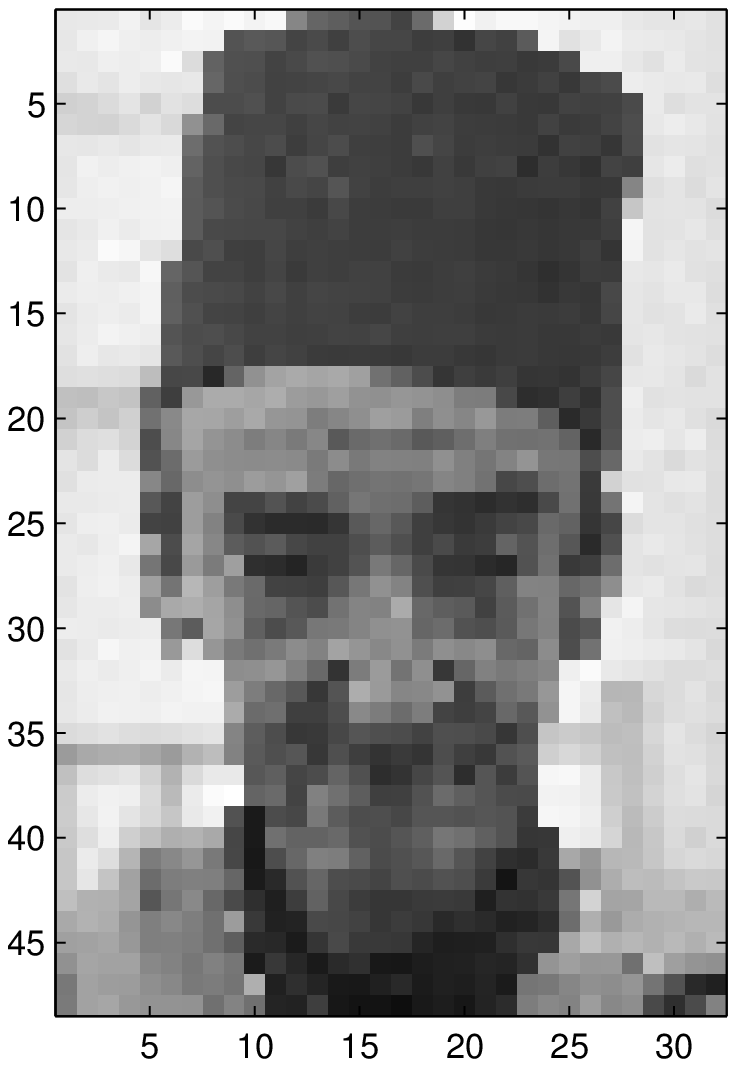}\label{fig:hat:small}}
\subfigure[No constraints]{\includegraphics*[width=0.24\linewidth]{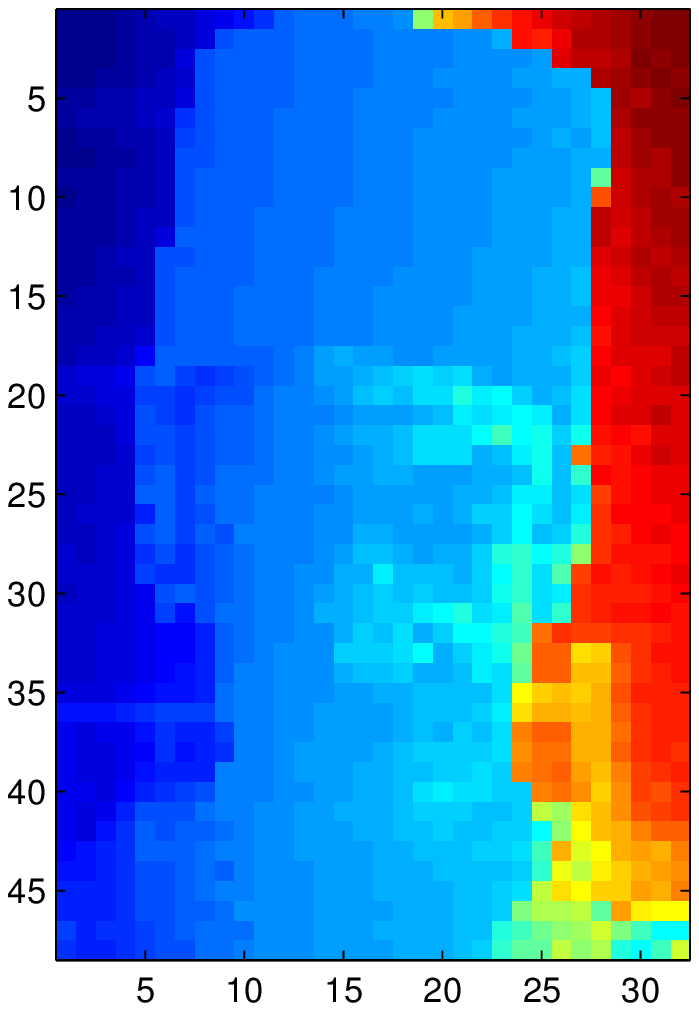}\label{fig:hat:uncon}}
\subfigure[Constraint Set 1]{\includegraphics*[width=0.24\linewidth]{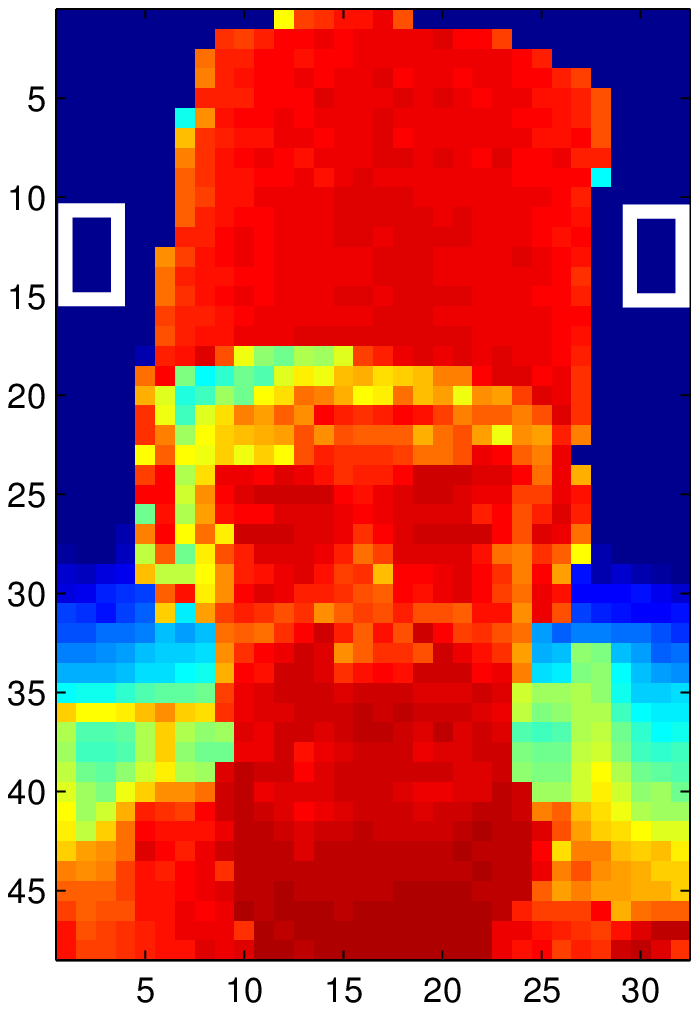}\label{fig:hat:con1}}
\subfigure[Constraint Set 2]{\includegraphics*[width=0.24\linewidth]{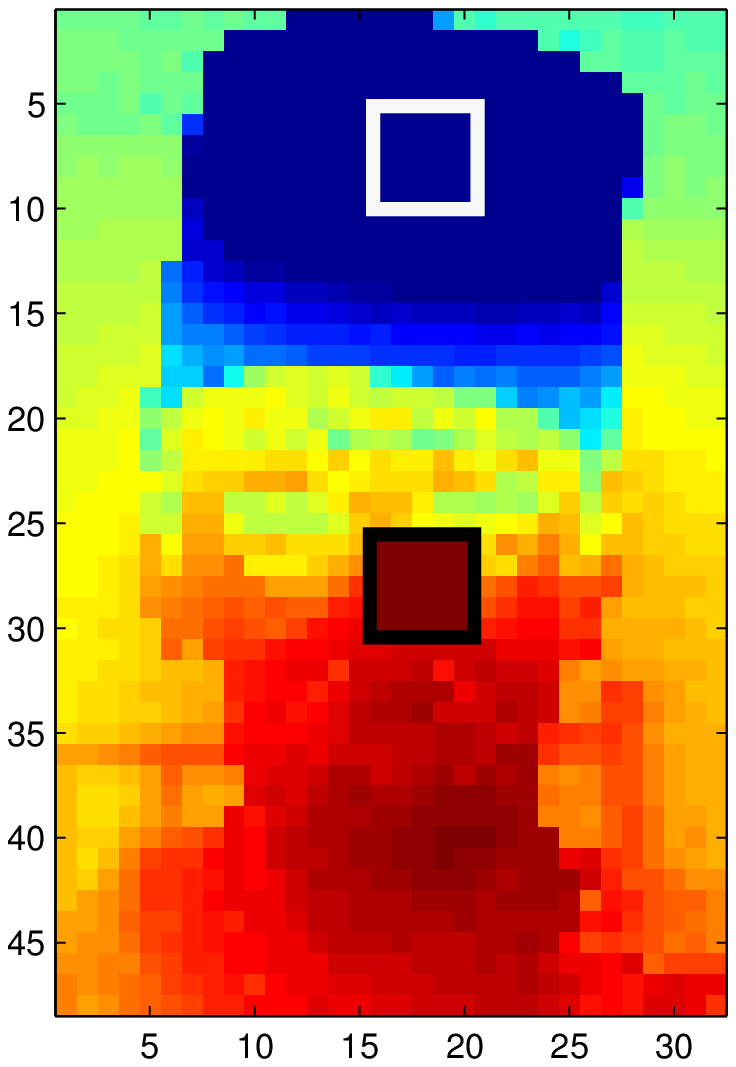}\label{fig:hat:con2}}
\caption{Segmentation of the face image.The images are reconstructed based on the relaxed cluster indicator $\mathbf{u}^*$. Pixels that are closer to the red end of the spectrum belong to one segment and blue the other. The labeled pixels are as bounded by the black and white rectangles.}\label{fig:hat}
\end{figure*}

The results are shown in Fig.~\ref{fig:elephant} and \ref{fig:hat}. To visualize the resultant segmentation, we reconstructed the image using the entry values in the relaxed cluster indicator vector $\mathbf{u}^*$. In Fig.~\ref{fig:ele:uncon}, the unconstrained spectral clustering partitioned the elephant image into two parts: the sky (red pixels) and the two elephants and the ground (blue pixels). This is not satisfying in the sense that it failed to isolate the elephants from the background (the sky and the ground). To correct this, we introduced
constraints by labeling two $5 \times 5$ blocks to be $1$ (as bounded by the black rectangles in Fig.~\ref{fig:ele:con1}): one at the upper-right corner of the image (the sky) and the other at the lower-right corner (the ground); we also labeled two $5 \times 5$ blocks on the heads of the two elephants to be $-1$  (as bounded by the white rectangles in Fig.~\ref{fig:ele:con1}). To generate the constraint matrix $Q$, a ML was added between every pair of pixels with the same label and a CL was added between every pair of pixels with different labels. The parameter $\beta$ was set to
\begin{equation}\label{eq:setbetaexp}
\beta = \lambda_{max} \times  vol \times (0.5 + 0.4 \times \frac{\text{\# of constraints}}{N^2}),
\end{equation}
where $\lambda_{max}$ is the largest eigenvalue of $\bar{Q}$. In this way, $\beta$ is always between $0.5 \lambda_{max} vol$ and $0.9 \lambda_{max} vol$, and it will gradually increase as the number of constraints increases. From Fig.~\ref{fig:ele:con1} we can see that with the help of user supervision, our method successfully isolated the two elephants (blue) from the background, which is the sky and the ground (red). Note that our method achieved this with very simple labeling: four squared blocks.

To show the flexibility of our method, we tried a different set of constraints on the same elephant image with the same parameter settings. This time we aimed to separate the two elephants from each other, which is impossible in the unconstrained case because the two elephants are not only similar in color (grayscale value) but also adjacent in space. Again we used two $5 \times 5$ blocks  (as bounded by the black and white rectangles in Fig.~\ref{fig:ele:con2}), one on the head of the elephant on the left, labeled to be $1$, and the other on the body of the elephant on the right, labeled to be $-1$. As shown in Fig.~\ref{fig:ele:con2}, our method cut the image into two parts with one elephant on the left (blue) and the other on the right (red), just like what a human user would do.

Similarly, we applied our method on a human face image as shown in Fig.~\ref{fig:hat:small}. The unconstrained spectral clustering failed to isolate the human face from the background (Fig.~\ref{fig:hat:uncon}). This is because the tall hat breaks the spatial continuity between the left side of the background and the right side. Then we labeled two $5 \times 3$ blocks to be in the same class, one on each side of the
background. As we intended, our method assigned the background of both sides into the same cluster and thus isolated the human face with his tall hat from the background(Fig.~\ref{fig:hat:con1}). Again, this was achieved simply by labeling two blocks in the image, which covered about $3\%$ of all pixels. Alternatively, if we labeled a $5 \times 5$ block in the hat to be $1$, and a $5 \times 5$ block in the face to be $-1$, the resultant clustering will  isolate the hat from the rest of the image (Fig.~\ref{fig:hat:con2}).

\subsection{Explicit Pairwise Constraints: The Double Moon Dataset}
\label{sec:exp:2moon}

We further examine the behavior of our algorithm on a synthetic dataset using explicit constraints that are derived from underlying ground truth.

We claim that our formulation is a natural extension to spectral clustering. The question to ask then is whether the output of our algorithm converges to that of spectral clustering. More specifically, consider the ground truth partition defined by performing spectral clustering on an ideal distribution. We draw an imperfect sample from the distribution, on which spectral clustering is unable to find the ground truth partition. Then we perform our algorithm on this imperfect sample. As more and more constraints are provided, we want to know whether or not the partition found by our algorithm would converge to the ground truth partition.

To answer the question, we used the Double Moon distribution. As shown in Fig.~\ref{fig:two_moon}, spectral clustering is able to find the two moons when the sample is dense enough. In Fig.~\ref{fig:2moon}(a), we generated an under-sampled instance of the distribution with 100 data points, on which unconstrained spectral clustering could no longer find the ground truth partition. Then we performed our algorithm on this imperfect sample, and compared the partition found by our algorithm to the ground truth partition in terms of adjusted Rand index (ARI, \citet{ARI}). ARI indicates how well a given partition conform to the ground truth: 0 means the given partition is no better than a random assignment; 1 means the given partition matches the ground truth exactly. For each random sample, we generated 50 random sets of constraints and recorded the average ARI. We repeated the process on 10 different random samples of the same size and reported the results in Fig.~\ref{fig:2moon}(b). We can see that our algorithm consistently converge to the ground truth result as more constraints are provided. Notice that there is performance drop when an extreme small number of constraints are provided (less than 10), which is expected because such small number of constraints are insufficient to hint a better partition, and consequentially lead to random perturbation to the results. As more constraints were provided, the results were quickly stabilized.

\begin{figure*}
\centering
\subfigure[A Double Moon sample and its Ncut]{\includegraphics*[width=0.45\linewidth]{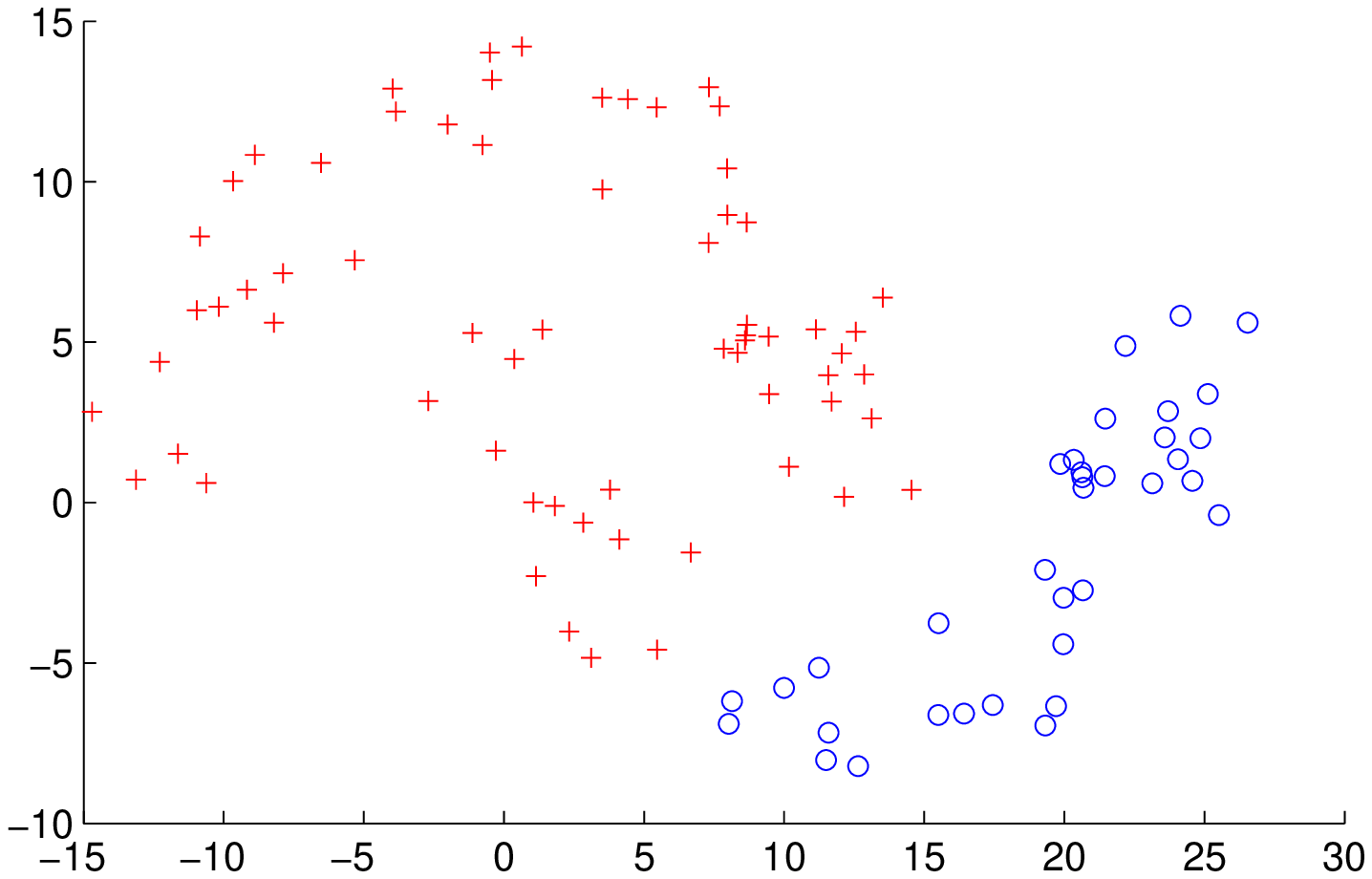}}
\subfigure[The convergence of our algorithm]{\includegraphics*[width=0.45\linewidth]{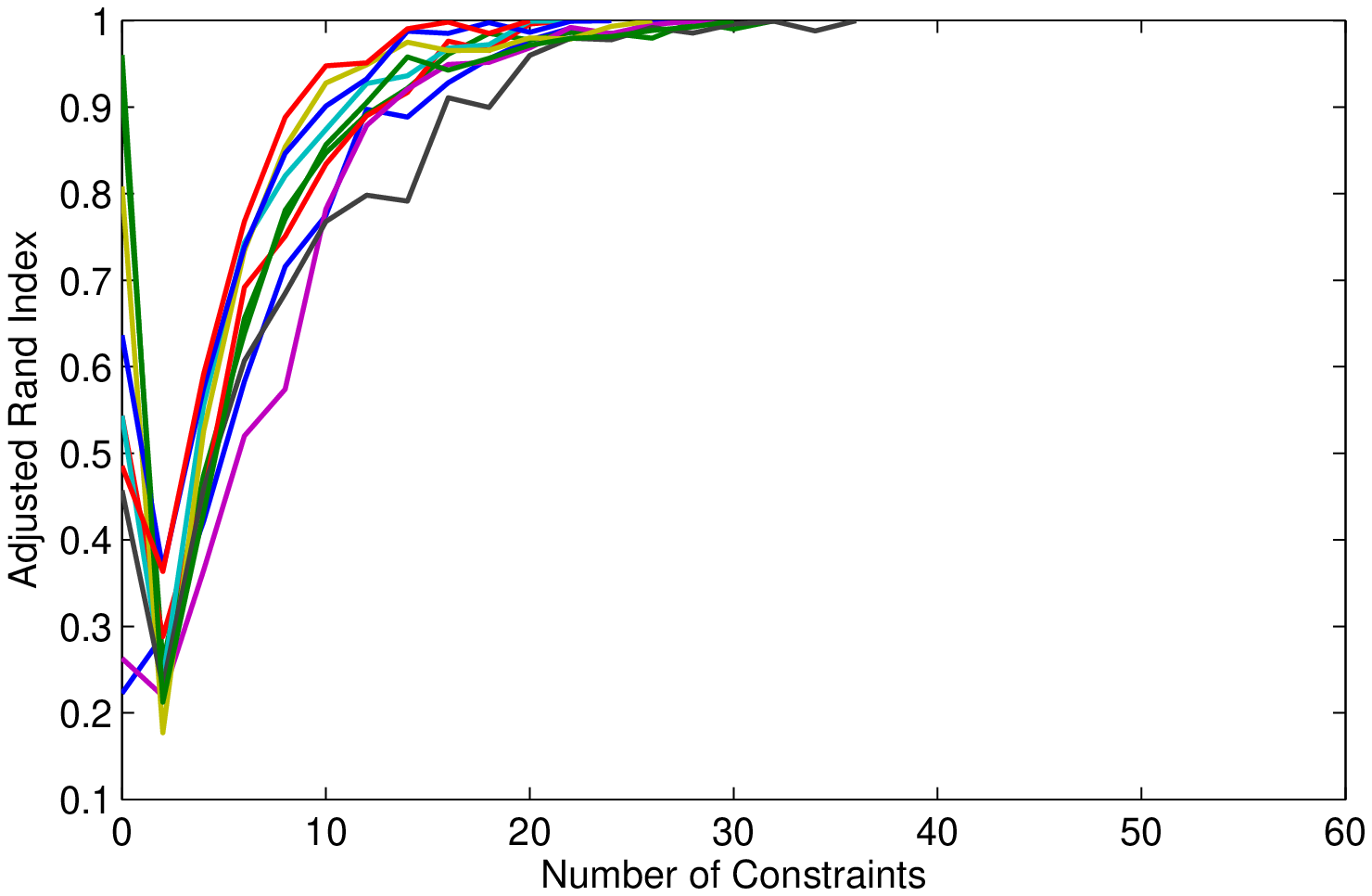}}
\caption{The convergence of our algorithm on 10 random samples of the Double Moon distribution.}\label{fig:2moon}
\end{figure*}

To illustrate the robustness of the our approach, we created a Double Moon sample with uniform background noise, as shown in Fig.~\ref{fig:2moon:noisy}. Although the sample is dense enough (600 data instances in total), spectral clustering fails to find the correctly identify the two moons, due to the influence of background noise (100 data instances). However, with 20 constraints, our algorithm is able to recover the two moons in spite of the background noise.

\begin{figure*}
\centering
\subfigure[Spectral Clustering]{\includegraphics*[width=0.45\linewidth]{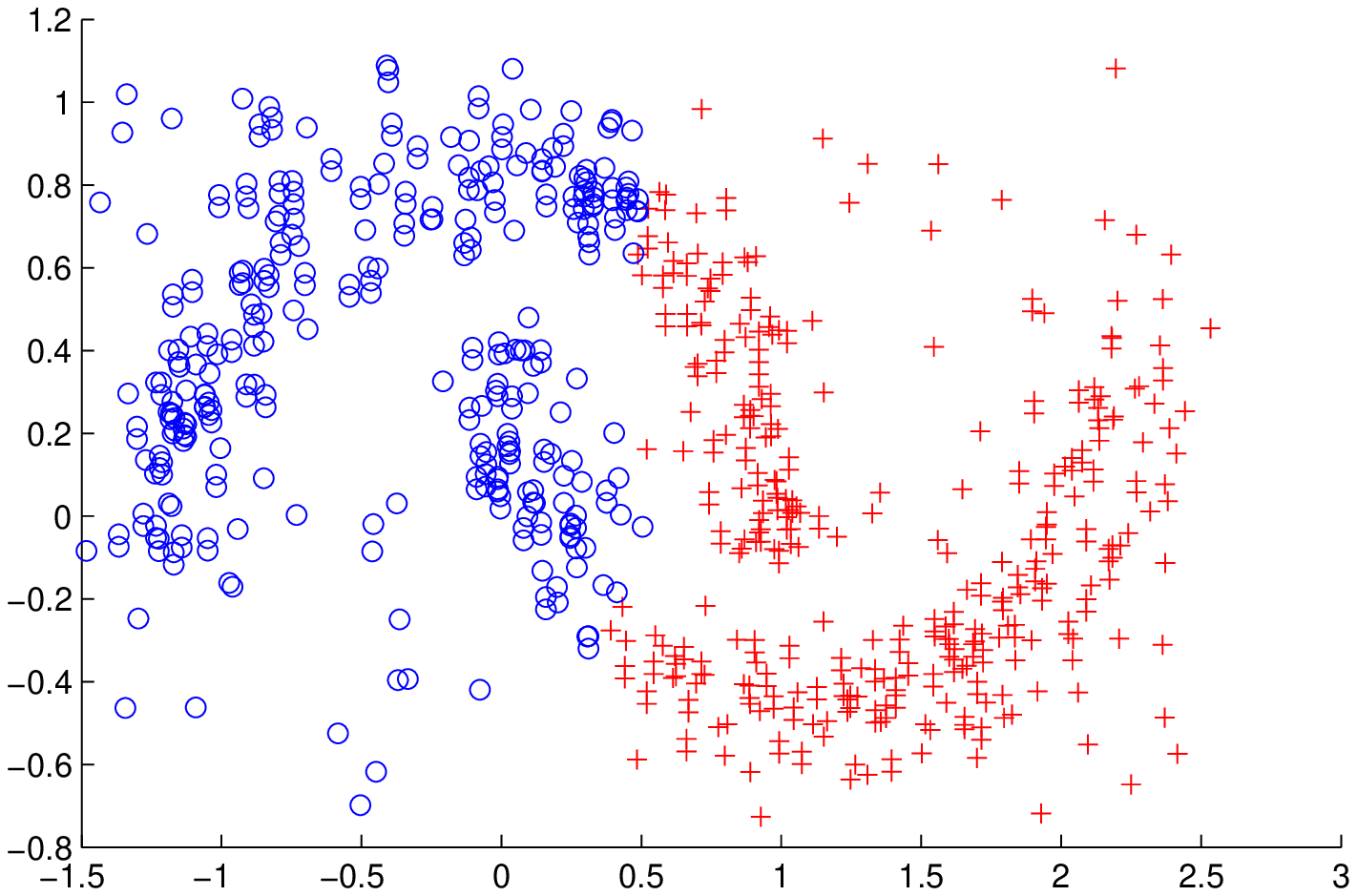}}
\subfigure[Constrained Spectral Clustering]{\includegraphics*[width=0.45\linewidth]{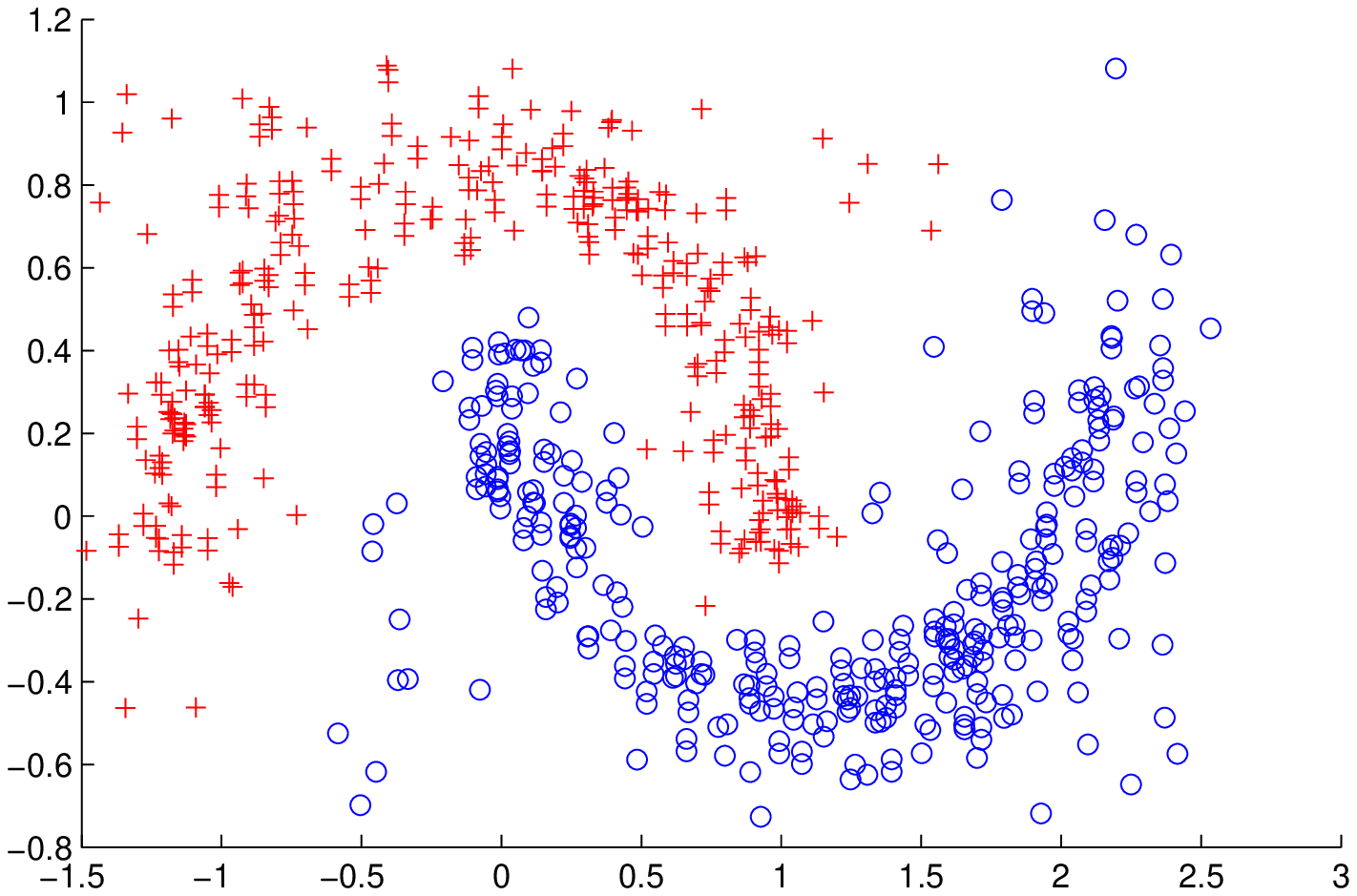}}
\caption{The partition of a noisy Double Moon sample.}\label{fig:2moon:noisy}
\end{figure*}

\subsection{Constraints from Partial Labeling: Clustering the UCI Benchmarks}
\label{sec:exp:uci}

Next we evaluate the performance of our algorithm by clustering the UCI benchmark datasets (\citet{Asuncion2007}) with constraints derived from ground truth labeling.

We chose six different datasets with class label information, namely Hepatitis, Iris, Wine, Glass, Ionosphere and Breast Cancer Wisconsin (Diagnostic). We performed 2-way clustering simply by partitioning the optimal cluster indicator according to the sign: positive entries to one cluster and negative the other. We removed the \textsc{setosa} class from the Iris data set, which is the class that is known to be well-separately from the other two. For the same reason we removed Class 3 from the Wine data set, which is well-separated from the other two. We also removed data instances with missing values. The statistics of the data sets after preprocessing are listed in Table~\ref{table:data}.

\begin{table}[t]
\centering
\caption{The UCI benchmarks}\label{table:data}
\begin{tabular}{|l|c|c|}
  \hline
  Identifier & \#Instances & \#Attributes\\
  \hline
  Hepatitis & 80 & 19\\
  Iris & 100 & 4\\
  Wine & 130 & 13\\
  Glass & 214 & 9\\
  Ionosphere & 351 & 34\\
  WDBC & 569 & 30\\
  \hline
\end{tabular}
\end{table}

For each data set, we computed the affinity matrix using the RBF kernel. To generate constraints, we randomly selected pairs of nodes that the unconstrained spectral clustering wrongly partitioned, and fill in the correct relation in $Q$ according to ground truth labels. The quality of the clustering results was measured by adjusted Rand index. Since the constraints are guaranteed to be correct, we set the threshold $\beta$ such that there will be only one feasible eigenvector, i.e. the one that best conforms to the constraint matrix $Q$.

In addition to comparing our algorithm (CSP) to unconstrained spectral clustering, we implemented two state-of-the-art techniques:
\begin{itemize}
\item Spectral Learning (SL) (\citet{Kamvar2003}) modifies the affinity matrix of the original graph directly: $A_{ij}$ is set to 1 if there is a ML between node $i$ and $j$, 0 for CL.
\item Semi-Supervised Kernel $K$-means (SSKK) (\citet{DBLP:conf/icml/KulisBDM05}) adds penalties to the affinity matrix based on the given constraints, and then performs kernel $K$-means on the new kernel to find the partition.
\end{itemize}
We also tried the algorithm proposed by \citet{Yu2001,Yu2004}, which encodes partial grouping information as a projection matrix, the subspace trick proposed by \citet{Bie2004}, and the affinity propagation algorithm proposed by \citet{Lu2008}. However, we were not able to use these algorithms to achieve better results than SL and SSKK, hence their results are not reported. \citet{Xu2005} proposed a variation of SL, where the constraints are encoded in the same way, but instead of the normalized graph Laplacian, they suggested to use the random walk matrix. We performed their algorithm on the UCI datasets, which produced largely identical results to that of SL.

We report the adjusted Rand index against the number of constraints used (ranging from $50$ to $500$) so that we can see how the quality of clustering varies when more constraints are provided. At each stop, we randomly generated 100 sets of constraints. The mean, maximum and minimum ARI of the 100 random trials are reported in Fig.~\ref{fig:uci_ari}. We also report the ratio of the constraints that were satisfied by the constrained partition in Fig.~\ref{fig:uci_sat}. The observations are:
\begin{itemize}
\item Across all six datasets, our algorithm is able to effectively utilize the constraints and improve over unconstrained spectral clustering (Baseline). On the one hand, our algorithm can quickly improve the results with a small amount of constraints. On the other hand, as more constraints are provided, the performance of our algorithm consistently increases and converges to the ground truth partition (Fig.~\ref{fig:uci_ari}).
\item Our algorithm outperforms the competitors by a large margin in terms of ARI (Fig.~\ref{fig:uci_ari}). Since we have control over the lower-bounding threshold $\alpha$, our algorithm is able to satisfy almost all the given constraints (Fig.~\ref{fig:uci_sat}).
\item The performance of our algorithm has significantly smaller variance over different random constraint sets than its competitors (Fig.~\ref{fig:uci_ari} and \ref{fig:uci_sat}), and the variance quickly diminishes as more constraints are provided. This suggests that our algorithm would perform more consistently in practice.
\item Our algorithm performs especially well on sparse graphs, i.e. Fig.~\ref{fig:uci_ari}(e)(f), where the competitors suffer. The reason is that when the graph is too sparse, it implies many ``free'' cuts that are equally good to unconstrained spectral clustering. Even after introducing a small number of constraints, the modified graph remains too sparse for SL and SSKK, which are unable to identify the ground truth partition. In contrast, these free cuts are not equivalent when judged by the constraint matrix $Q$ of our algorithm, which can easily identify the one cut that minimizes $\mathbf{v}^T\bar{Q}\mathbf{v}$. As a result, our algorithm outperforms SL and SSKK significantly in this scenario.
\end{itemize}

\begin{figure*}
\centering
\subfigure[Hepatitis]{\includegraphics*[width=0.45\linewidth]{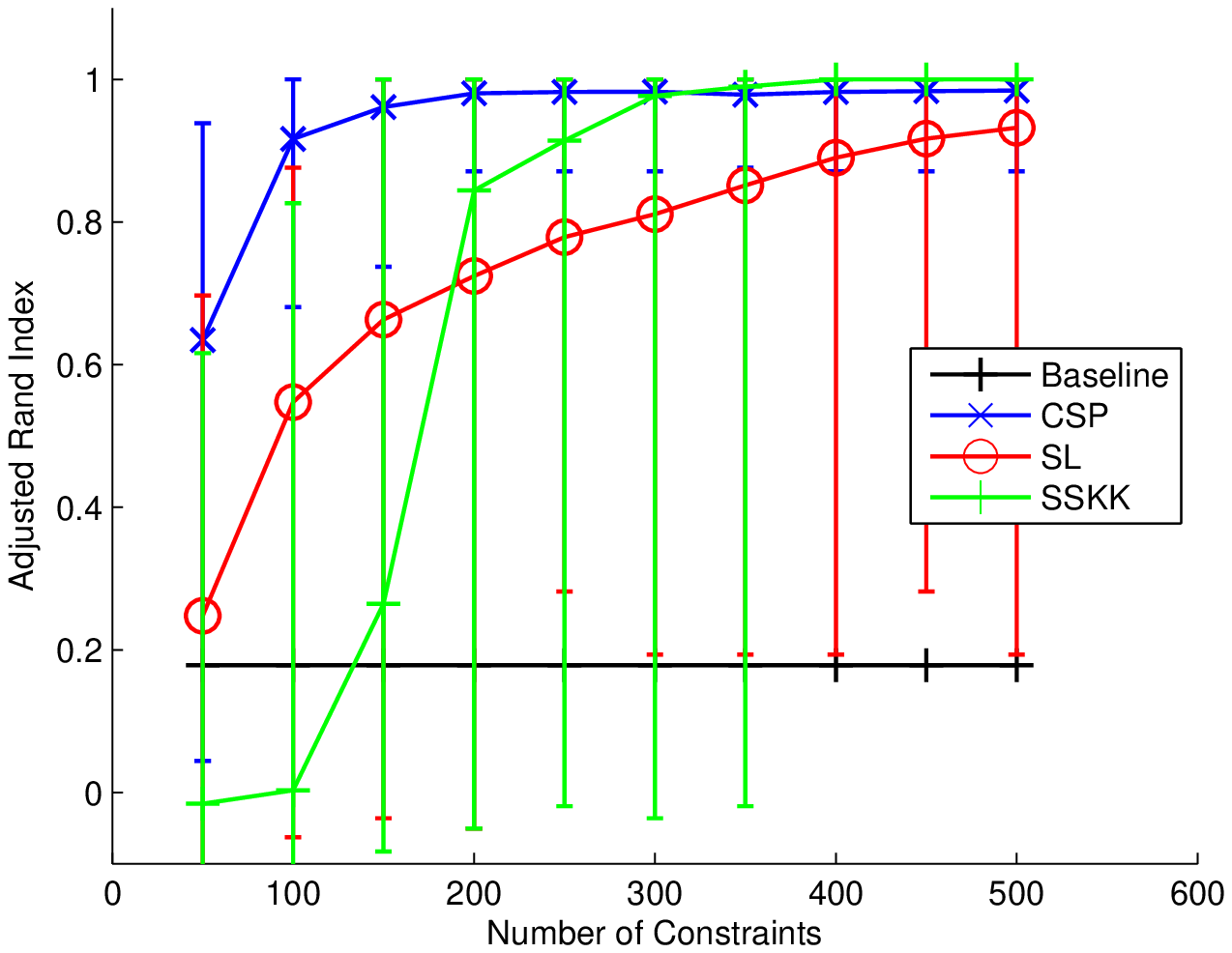}}
\subfigure[Iris]{\includegraphics*[width=0.45\linewidth]{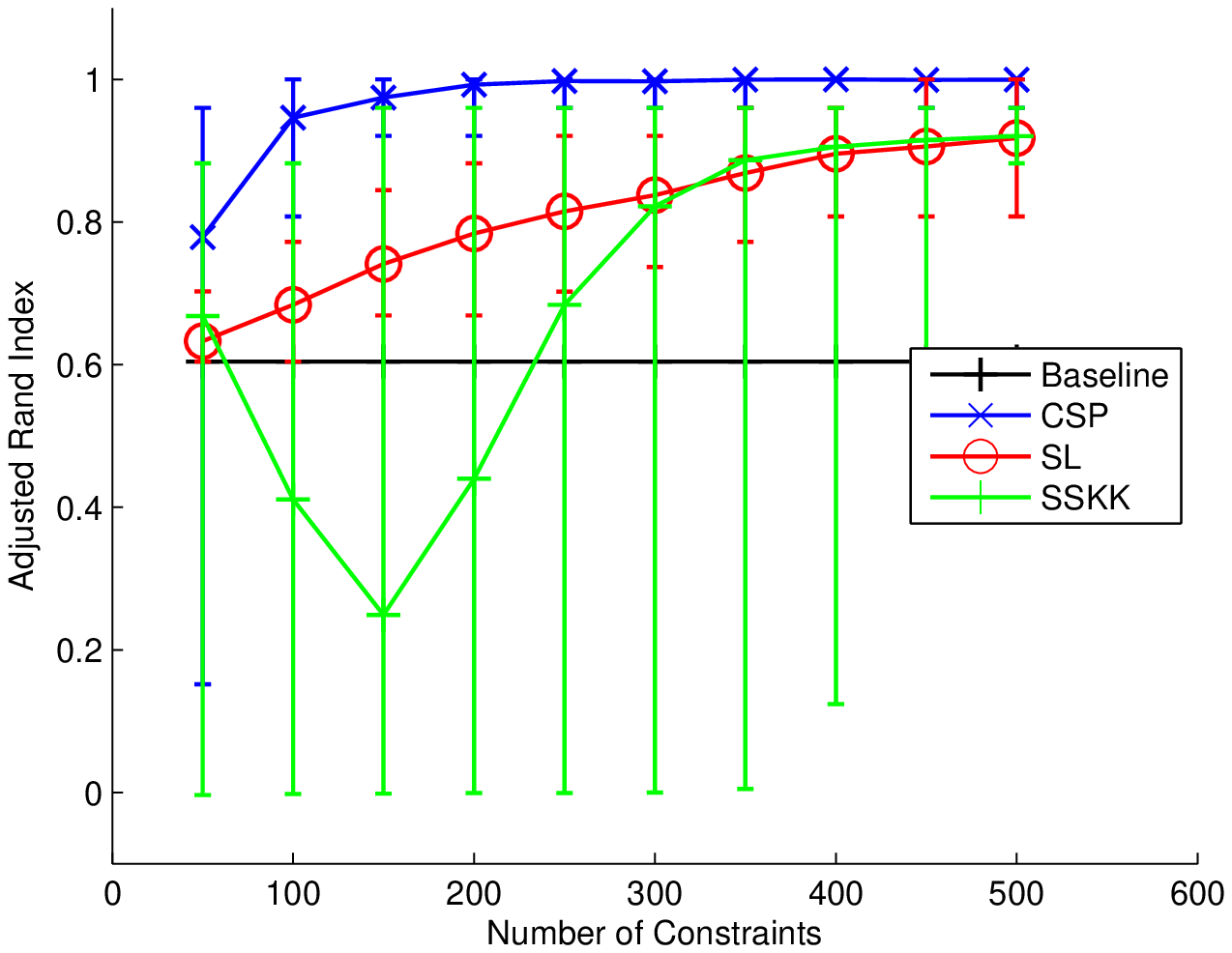}}\\
\subfigure[Wine]{\includegraphics*[width=0.45\linewidth]{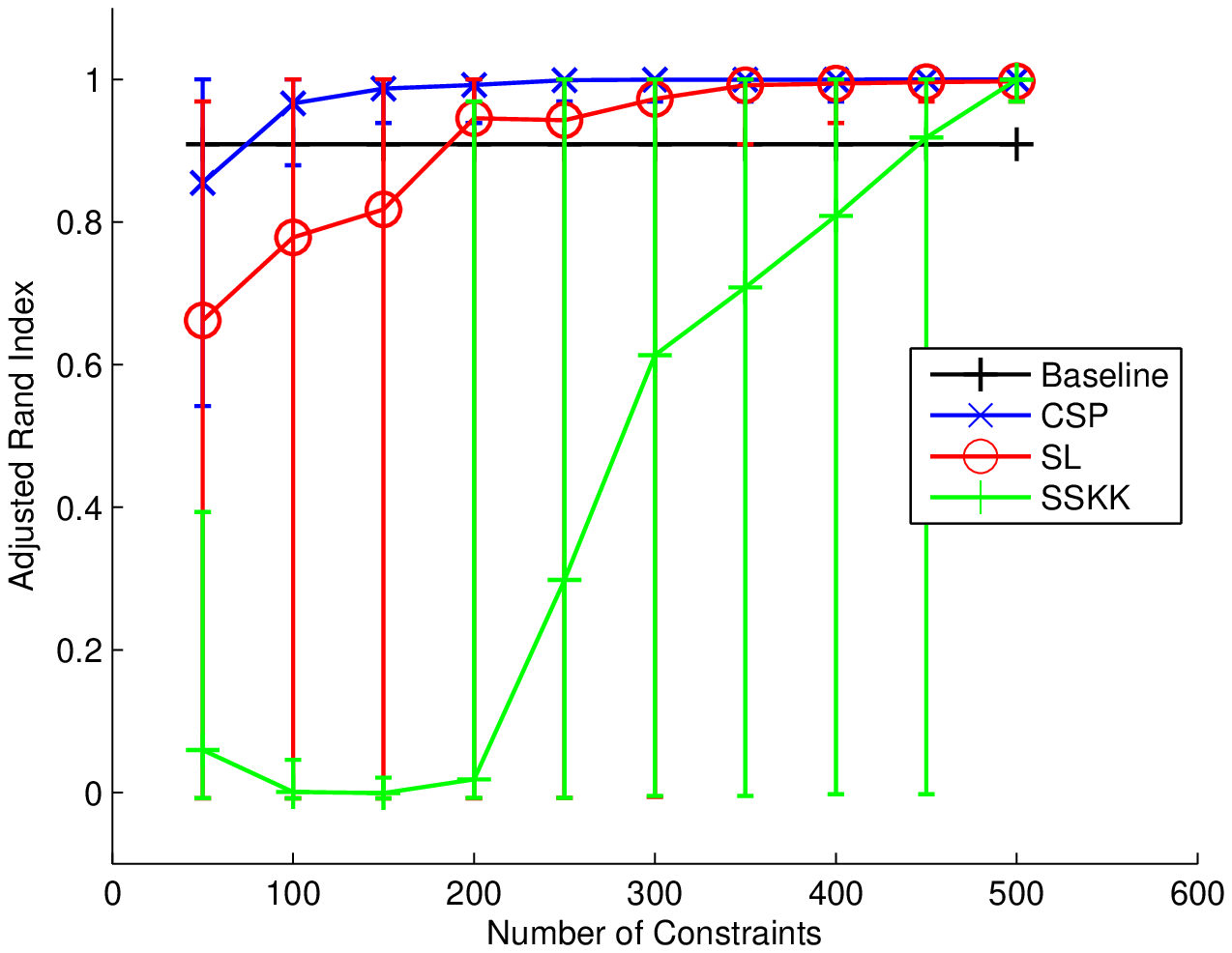}}
\subfigure[Glass]{\includegraphics*[width=0.45\linewidth]{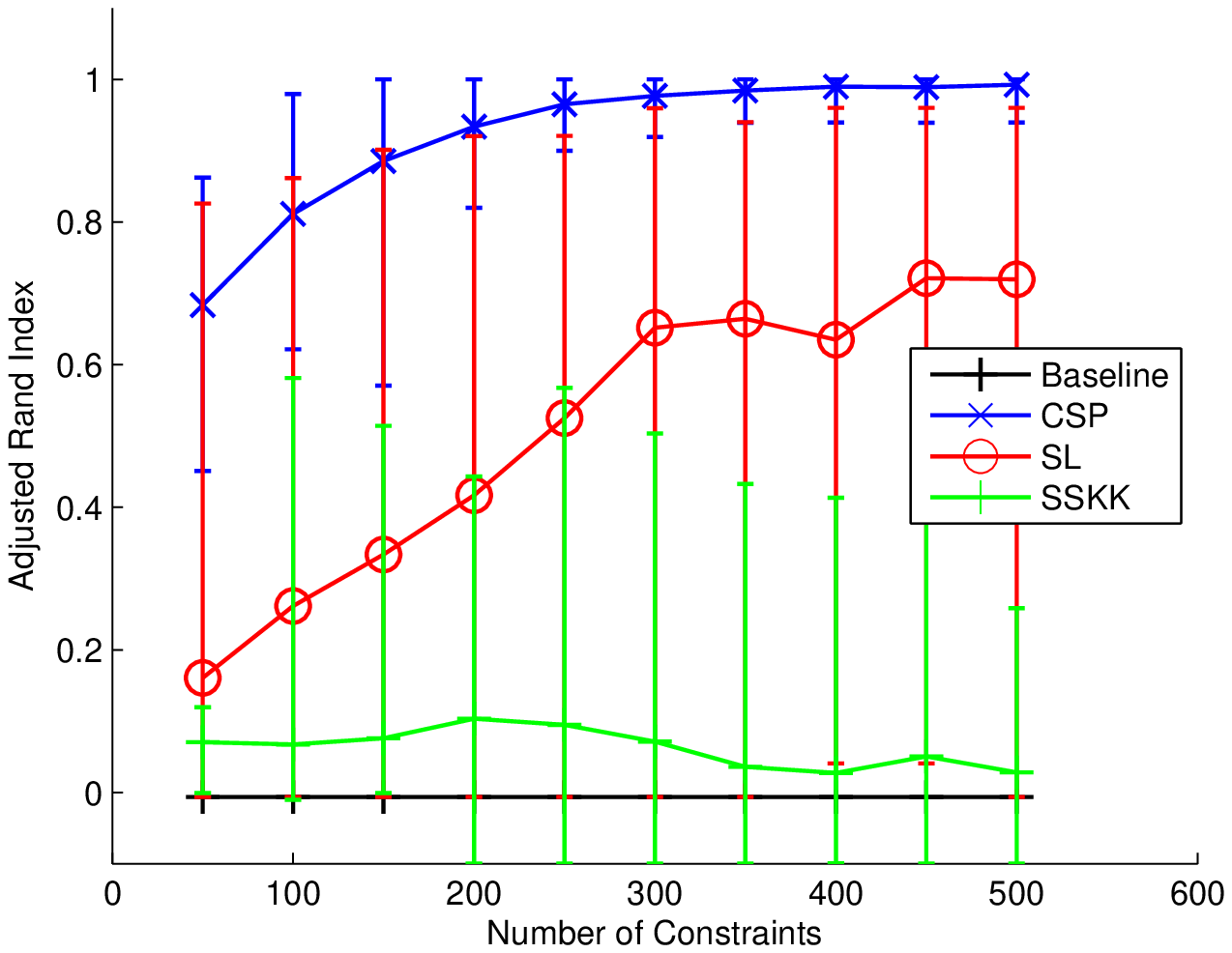}}\\
\subfigure[Ionosphere]{\includegraphics*[width=0.45\linewidth]{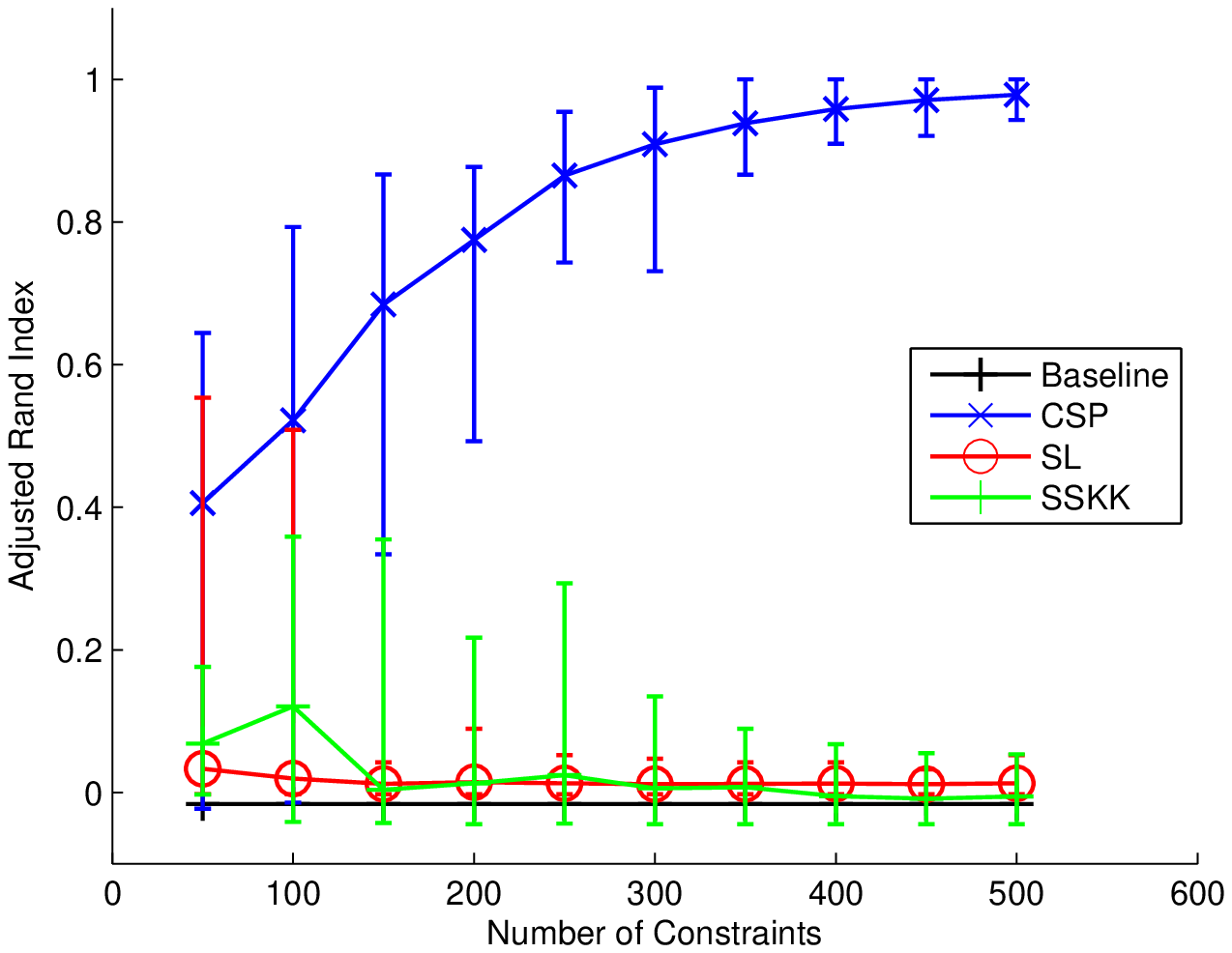}}
\subfigure[Breast Cancer]{\includegraphics*[width=0.45\linewidth]{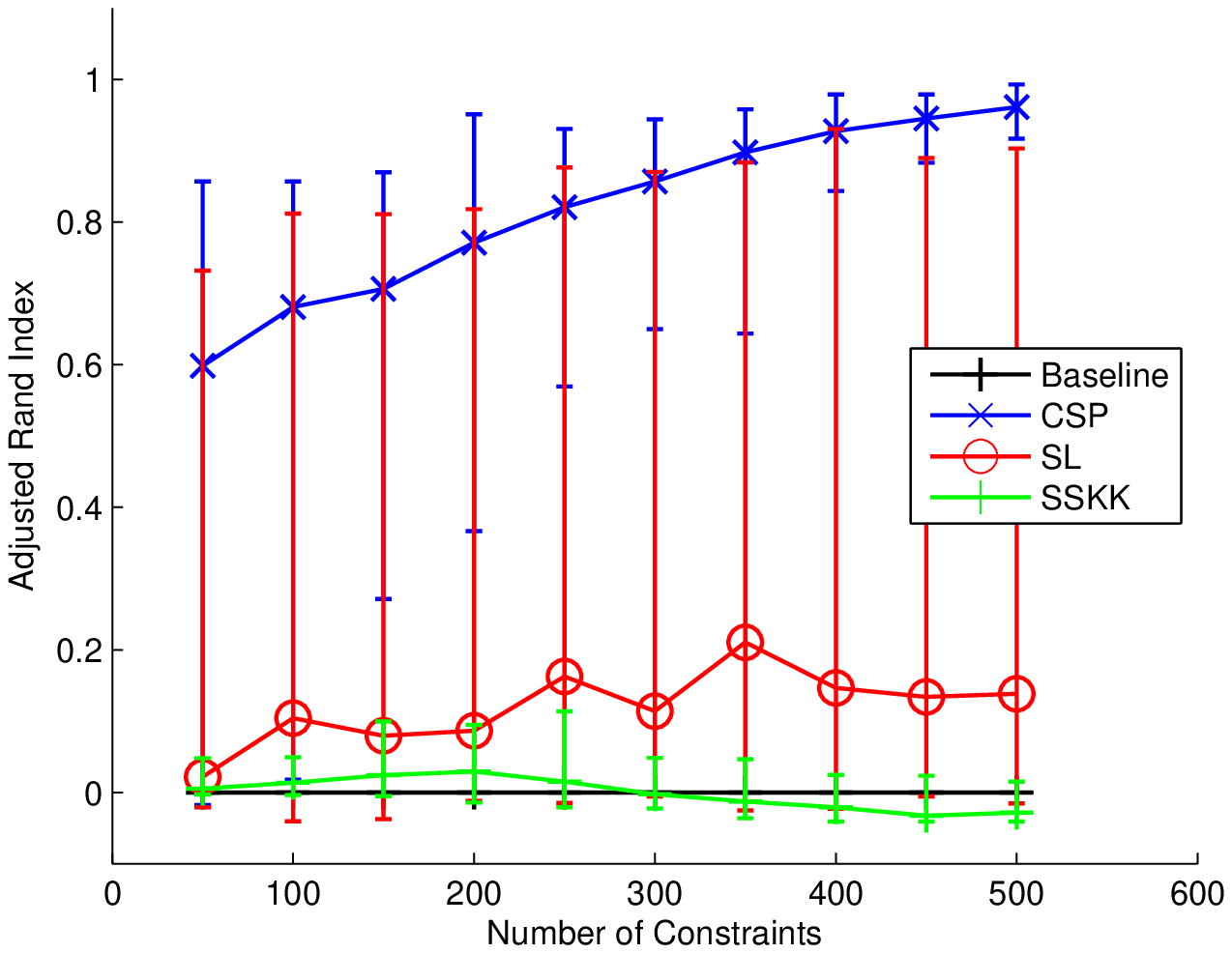}}
\caption{The performance of our algorithm (CSP) on six UCI datasets, with comparison to unconstrained spectral clustering (Baseline) and the Spectral Learning algorithm (SL). Adjusted Rand index over 100 random trials is reported (mean, min, and max).}\label{fig:uci_ari}
\end{figure*}

\begin{figure*}
\centering
\subfigure[Hepatitis]{\includegraphics*[width=0.45\linewidth]{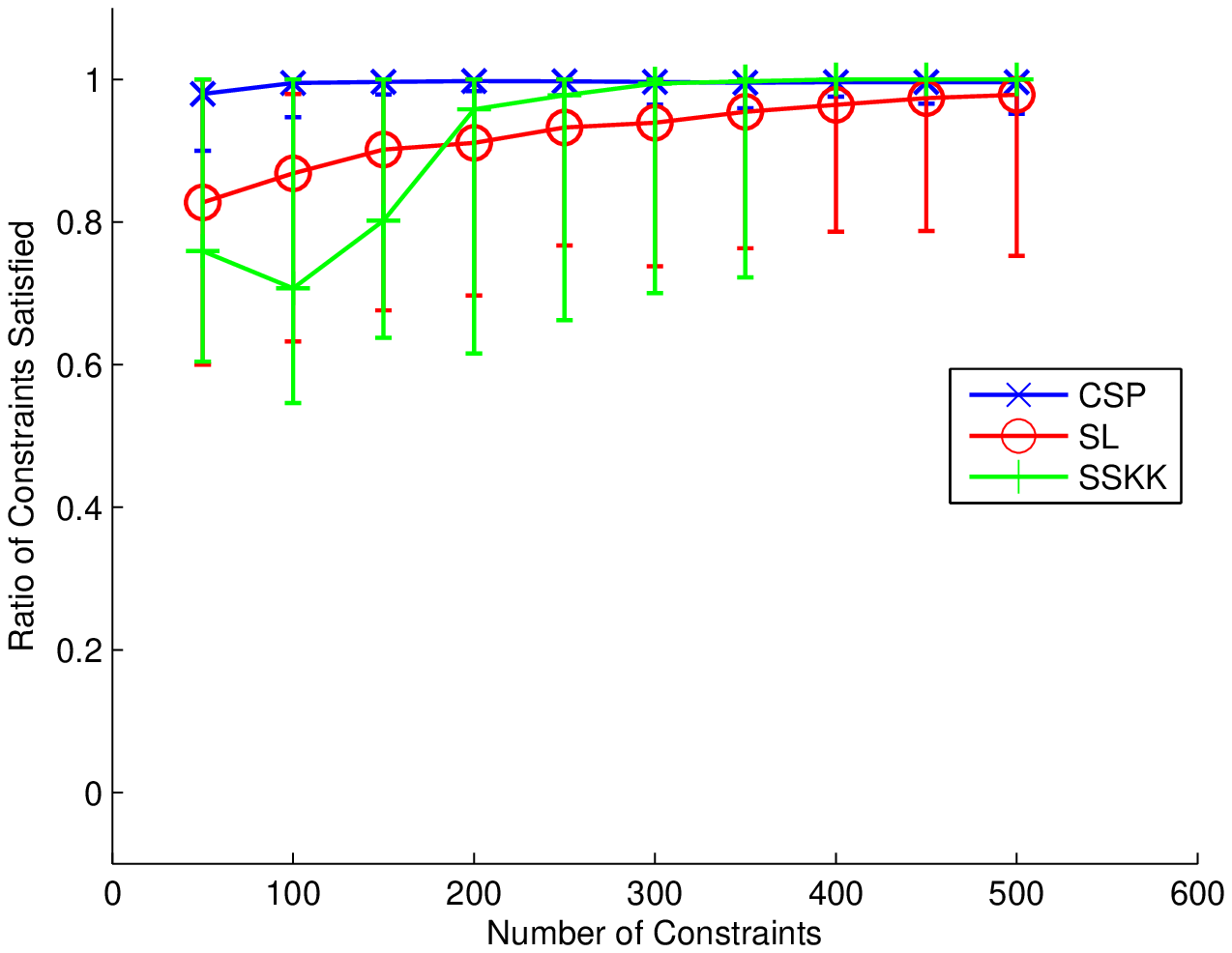}}
\subfigure[Iris]{\includegraphics*[width=0.45\linewidth]{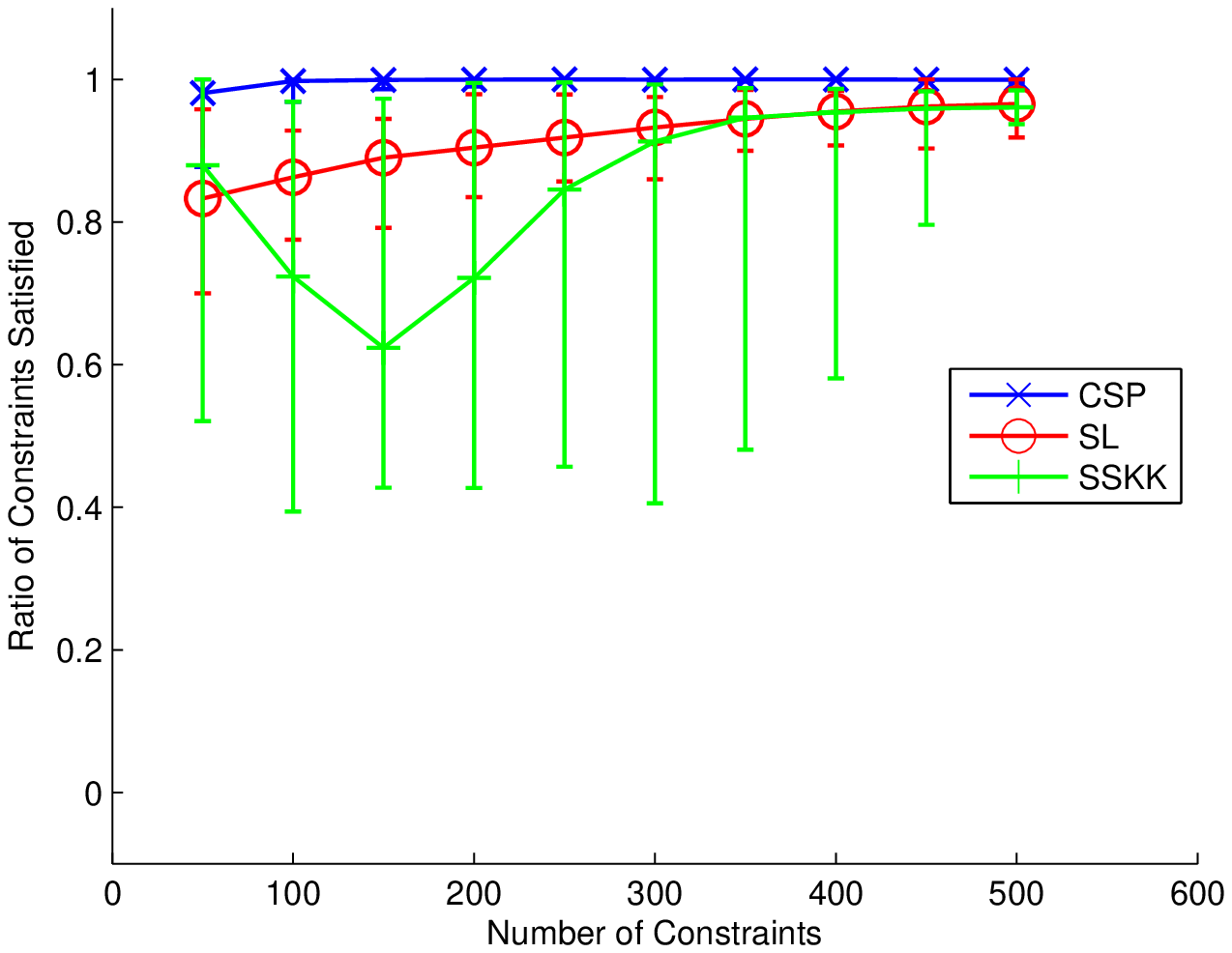}}\\
\subfigure[Wine]{\includegraphics*[width=0.45\linewidth]{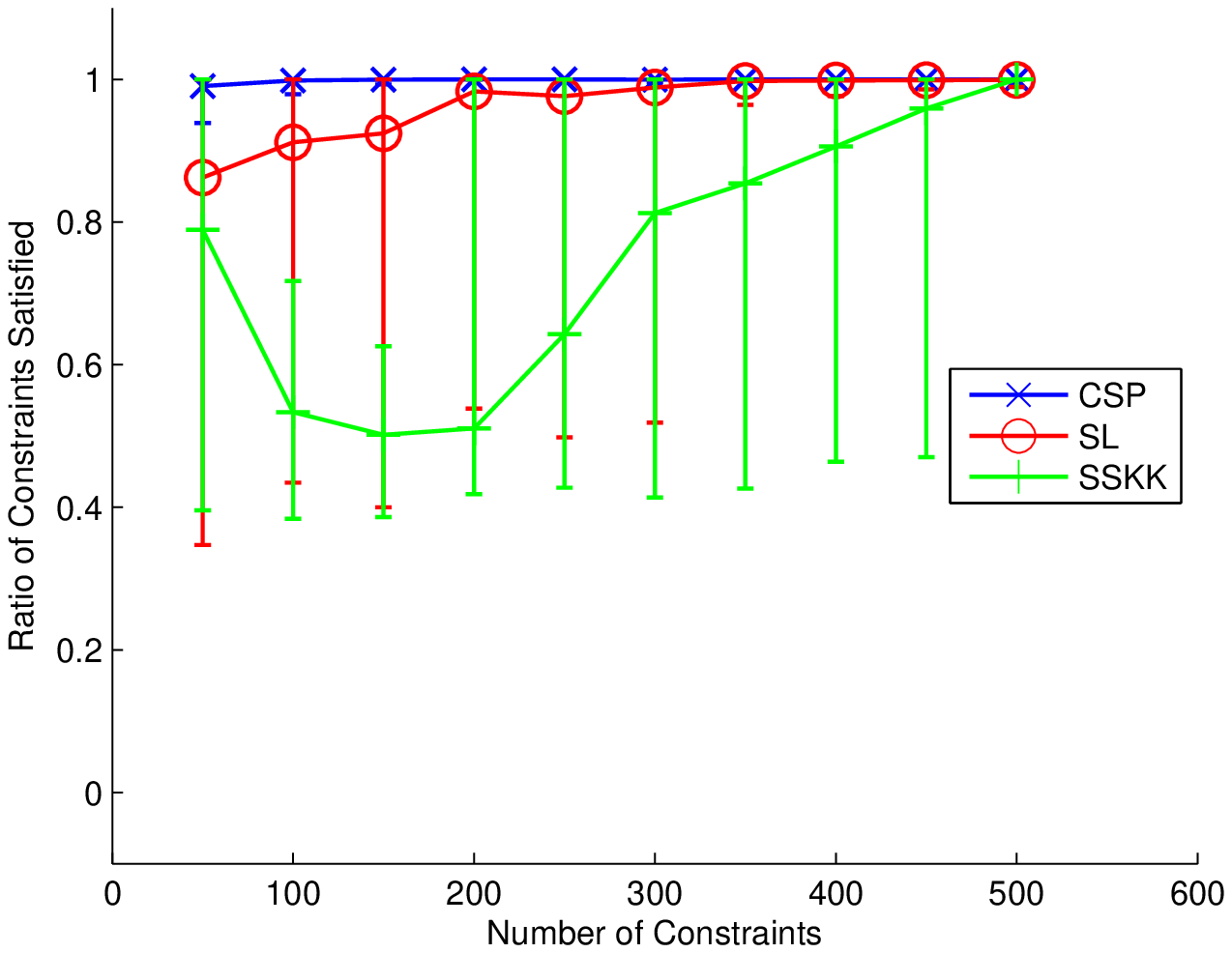}}
\subfigure[Glass]{\includegraphics*[width=0.45\linewidth]{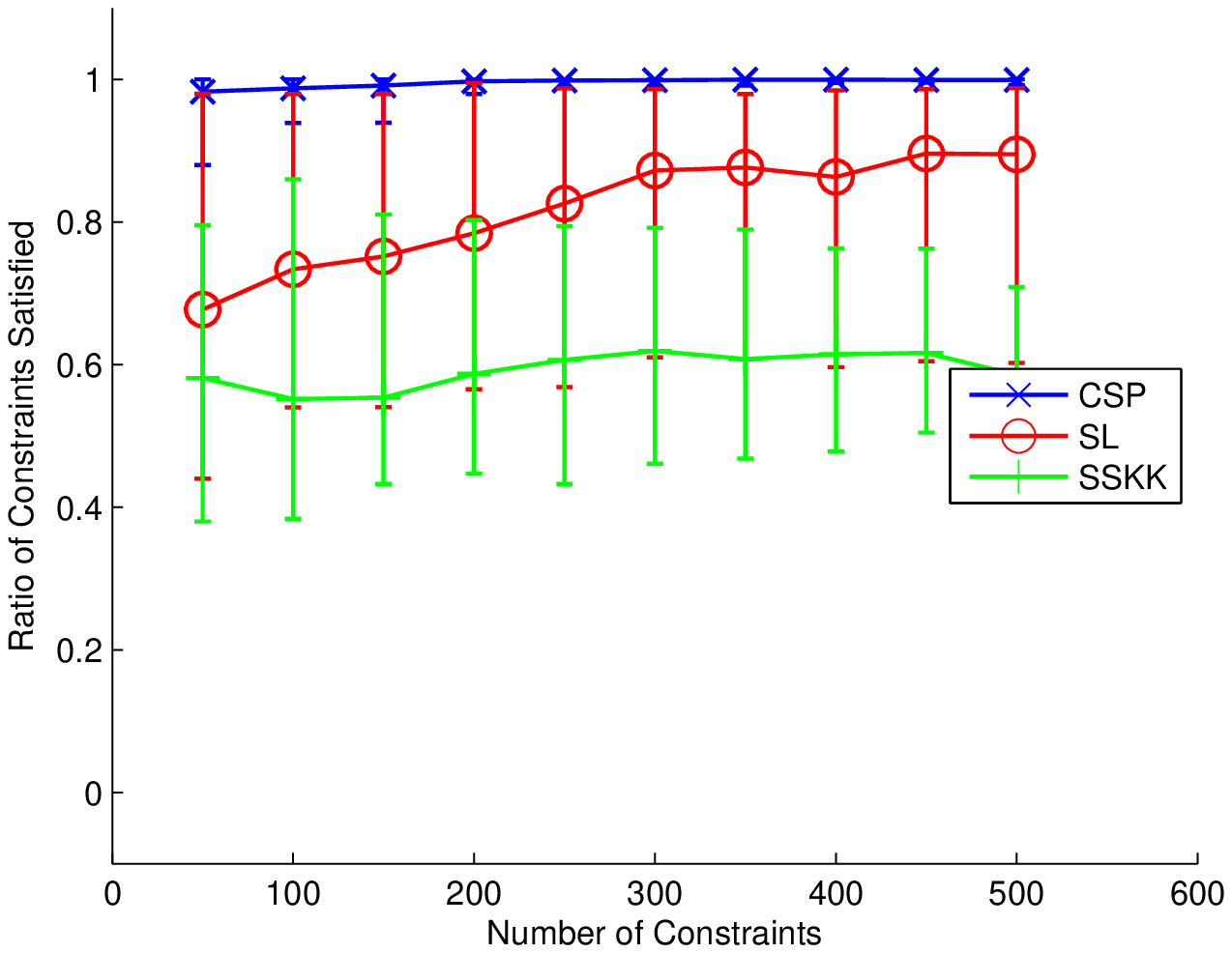}}\\
\subfigure[Ionosphere]{\includegraphics*[width=0.45\linewidth]{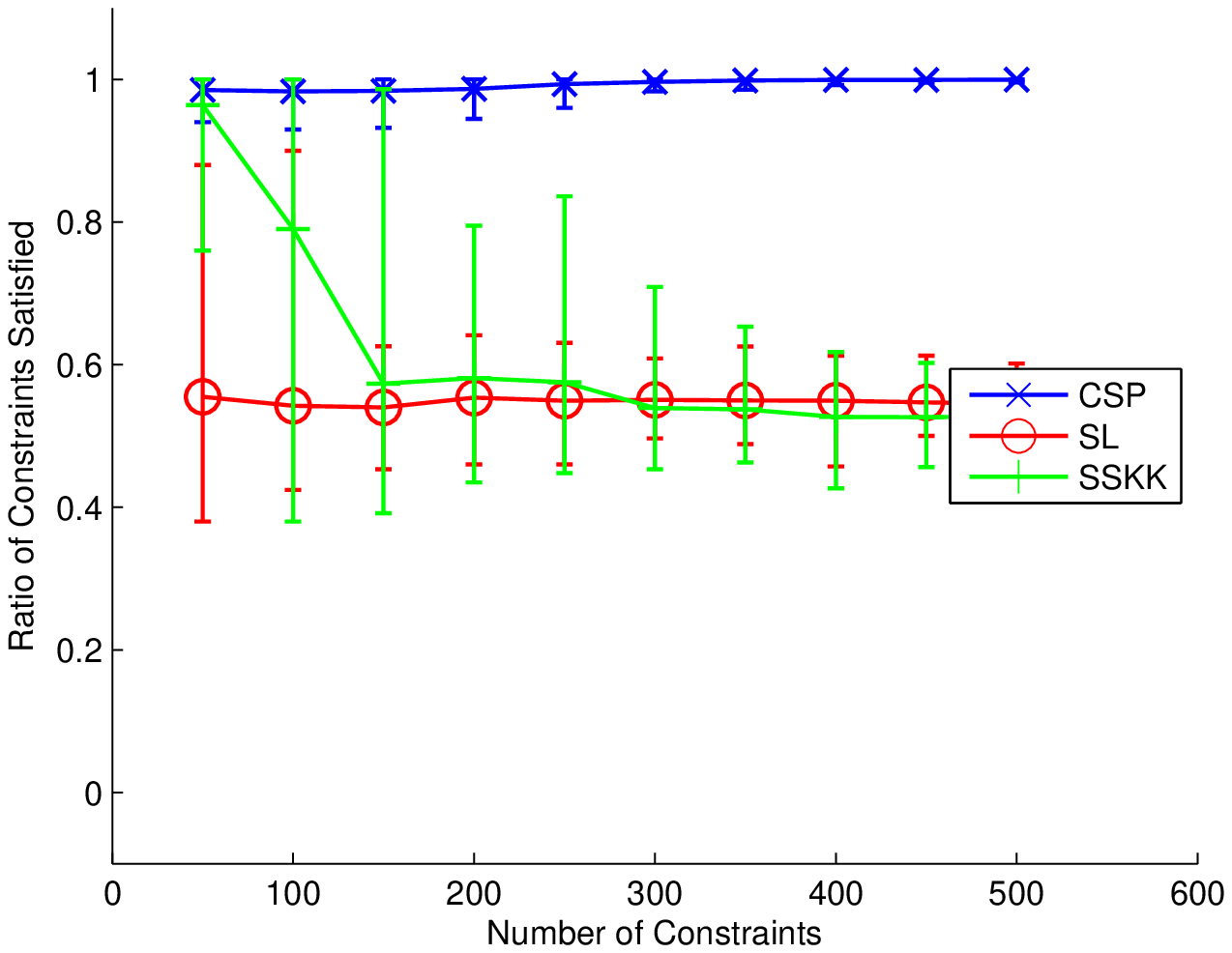}}
\subfigure[Breast Cancer]{\includegraphics*[width=0.45\linewidth]{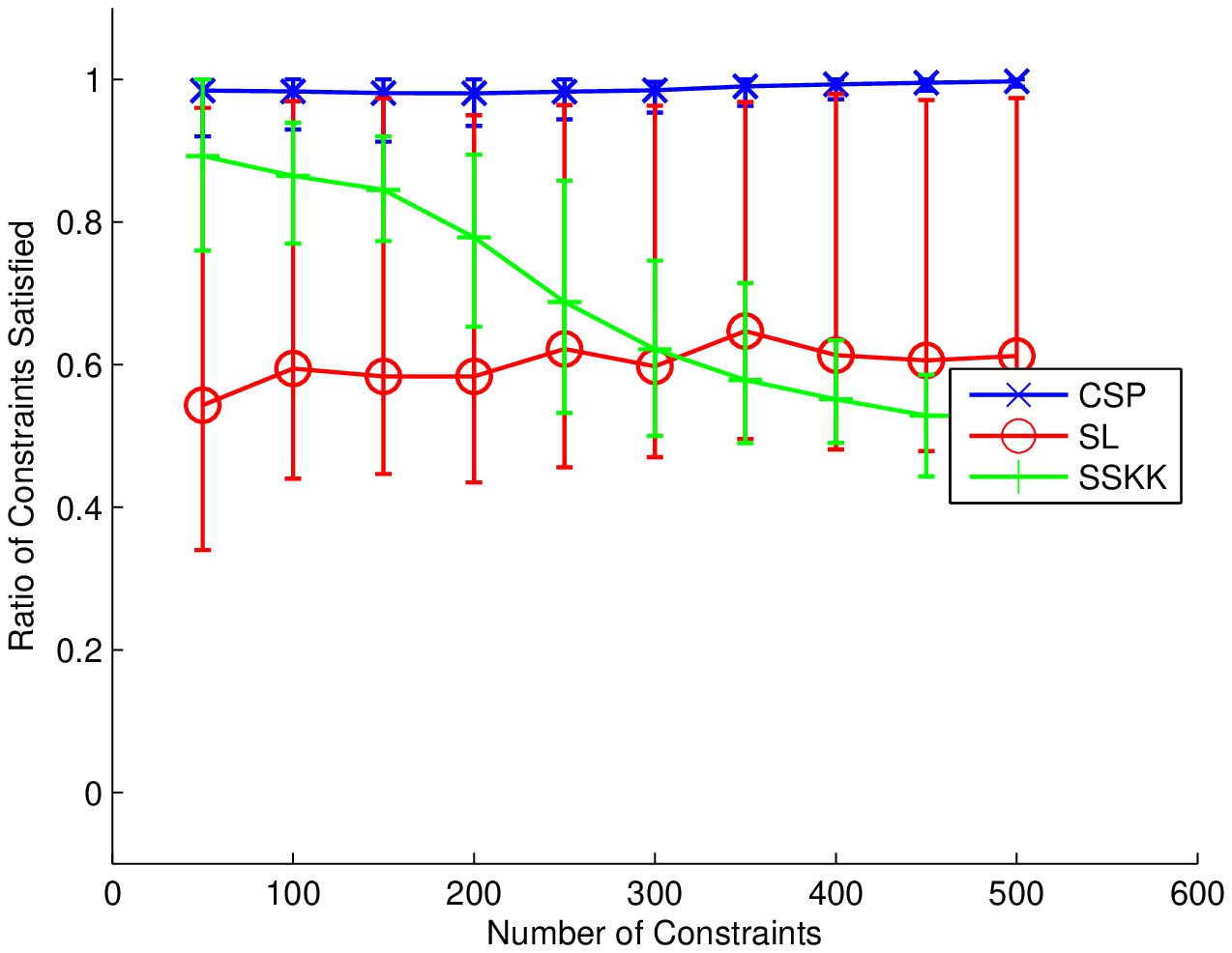}}
\caption{The ratio of constraints that are actually satisfied.}\label{fig:uci_sat}
\end{figure*}

\subsection{Constraints from Alternative Metrics: The Reuters Multilingual Dataset}
\label{sec:exp:reuters}

We test our algorithm using soft constraints derived from alternative metrics of the same set of data instances.

We used the Reuters Multilingual dataset, first introduced by \citet{AUG09}. Each time we randomly sampled 1000 documents which were originally written in one language and then translated into four others, respectively. The statistics of the dataset is listed in Table~\ref{table:reuters}. These documents came with ground truth labels that categorize them into six topics ($K=6$). We constructed one graph based on the original language, and another graph based on the translation. The affinity matrix was the cosine similarity between the tf-idf vectors of two documents. Then we used one of the two graphs as the constraint matrix $Q$, whose entries can then be viewed as soft ML constraints. We enforce this constraint matrix to the other graph to see if it can help improve the clustering. We did not compare our algorithm to existing techniques because they are unable to incorporate soft constraints.

As shown in Fig.~\ref{fig:reuters}, unconstrained spectral clustering performs better on the original version than the translated versions, which is not surprising. If we use the original version as the constraints and enforce that onto a translated version using our algorithm, we yield a constrained clustering that is not only better than the unconstrained clustering on the translated version, but also even better than the unconstrained clustering on the original version. This indicates that our algorithm is not merely a tradeoff between the original graph and the given constraints. Instead it is able to integrate the knowledge from the constraints into the original graph and achieve a better partition.

\begin{table}
\centering
\caption{The Reuters Multilingual dataset}\label{table:reuters}
\begin{tabular}{|l|c|c|}
  \hline
  Language & \#Documents & \#Words\\
  \hline
  English & 2000 & 21,531\\
  French & 2000 & 24,893\\
  German & 2000 & 34,279\\
  Italian & 2000 & 15,506\\
  Spanish & 2000 & 11,547\\
  \hline
\end{tabular}
\end{table}

\begin{figure*}
\centering
\subfigure[English Documents and Translations]{\includegraphics*[width=0.45\linewidth]{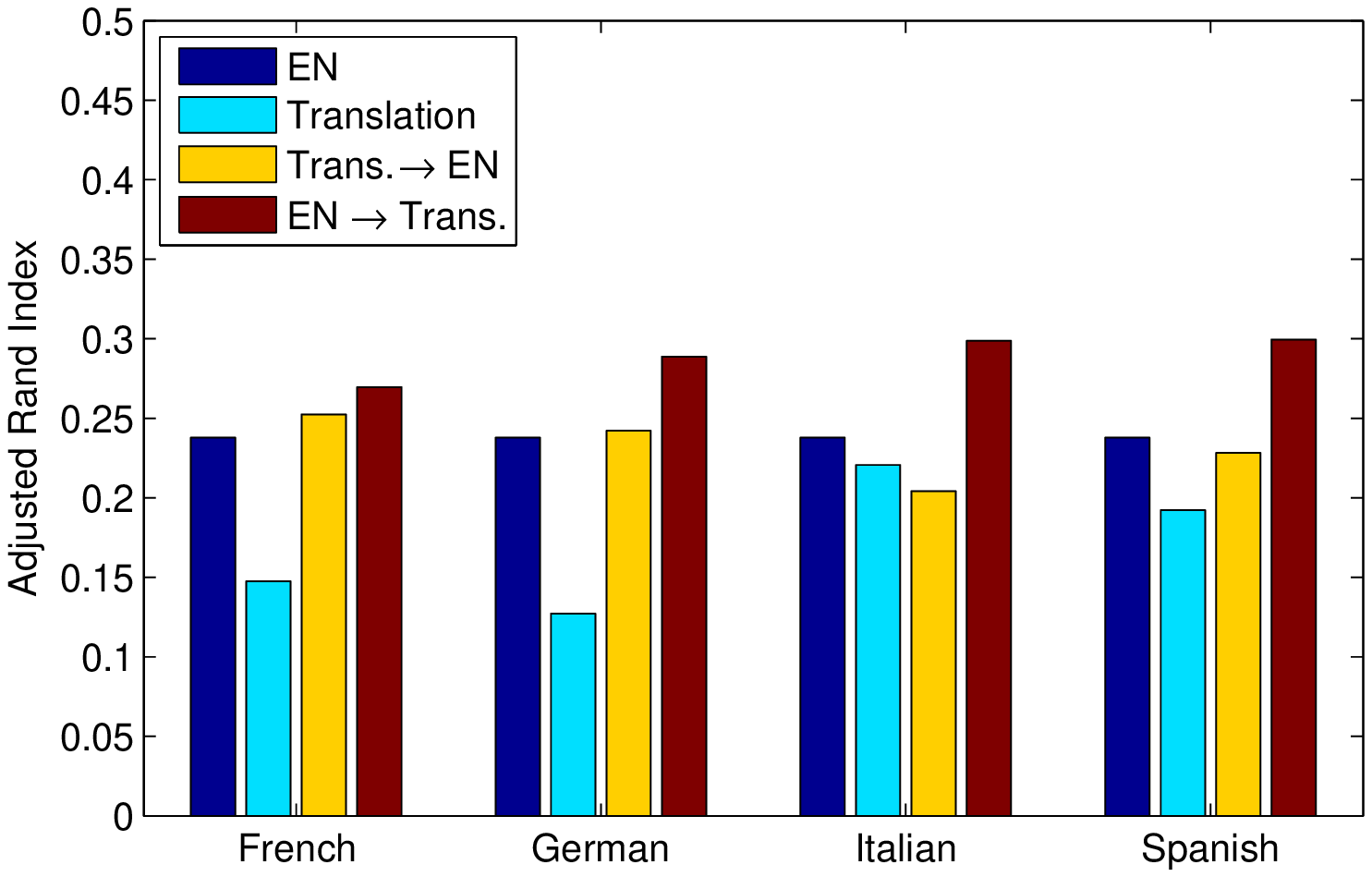}}
\subfigure[French Documents and Translations]{\includegraphics*[width=0.45\linewidth]{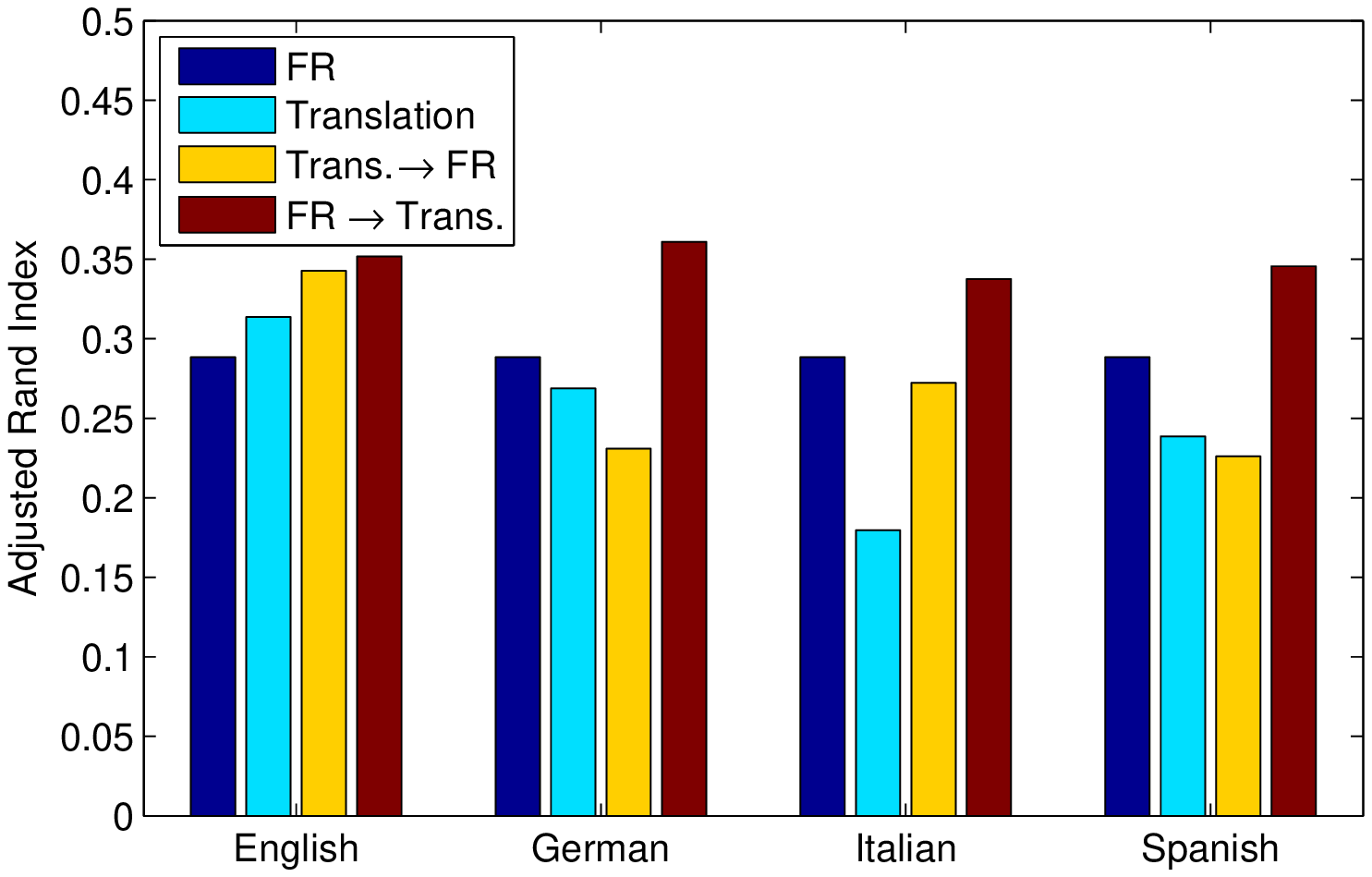}}
\caption{The performance of our algorithm on Reuters Multilingual dataset.}\label{fig:reuters}
\end{figure*}

\subsection{Transfer of Knowledge: Resting-State fMRI Analysis}
\label{sec:exp:transfer}

Finally we apply our algorithm to transfer learning on the resting-state fMRI data.

An fMRI scan of a person consists of a sequence of 3D images over time. We can construct a graph from a given scan such that a node in the graph corresponds to a voxel in the image, and the edge weight between two nodes is (the absolute value of) the correlation between the two time sequences associated with the two voxels. Previous work has shown that by applying spectral clustering to the resting-state fMRI we can find the substructures in the brain that are periodically and simultaneously activated over time in the resting state, which may indicate a network associated with certain functions (\citet{10.1371/journal.pone.0002001}).

One of the challenges of resting-state fMRI analysis is instability. Noise can be easily introduced into the scan result, e.g. the subject moved his/her head during the scan, the subject was not at resting state (actively thinking about things during the scan), etc. Consequently, the result of spectral clustering becomes instable. If we apply spectral clustering to two fMRI scans of \emph{the same} person on two different days, the normalized min-cuts on the two different scans are so different that they provide little insight into the brain activity of the subject (Fig.~\ref{fig:fmri}(a) and (b)). To overcome this problem, we use our formulation to transfer knowledge from Scan 1 to Scan 2 and get a constrained cut, as shown in Fig.~\ref{fig:fmri}(c). This cut represents what the two scans agree on. The pattern captured by Fig.~\ref{fig:fmri}(c) is actually the \emph{default mode network} (DMN), which is the network that is periodically activated at resting state (Fig.~\ref{fig:fmri}(d) shows the idealized DMN as specified by domain experts).

\begin{figure*}
\centering
\subfigure[Ncut of Scan 1]{\includegraphics*[width=0.45\linewidth]{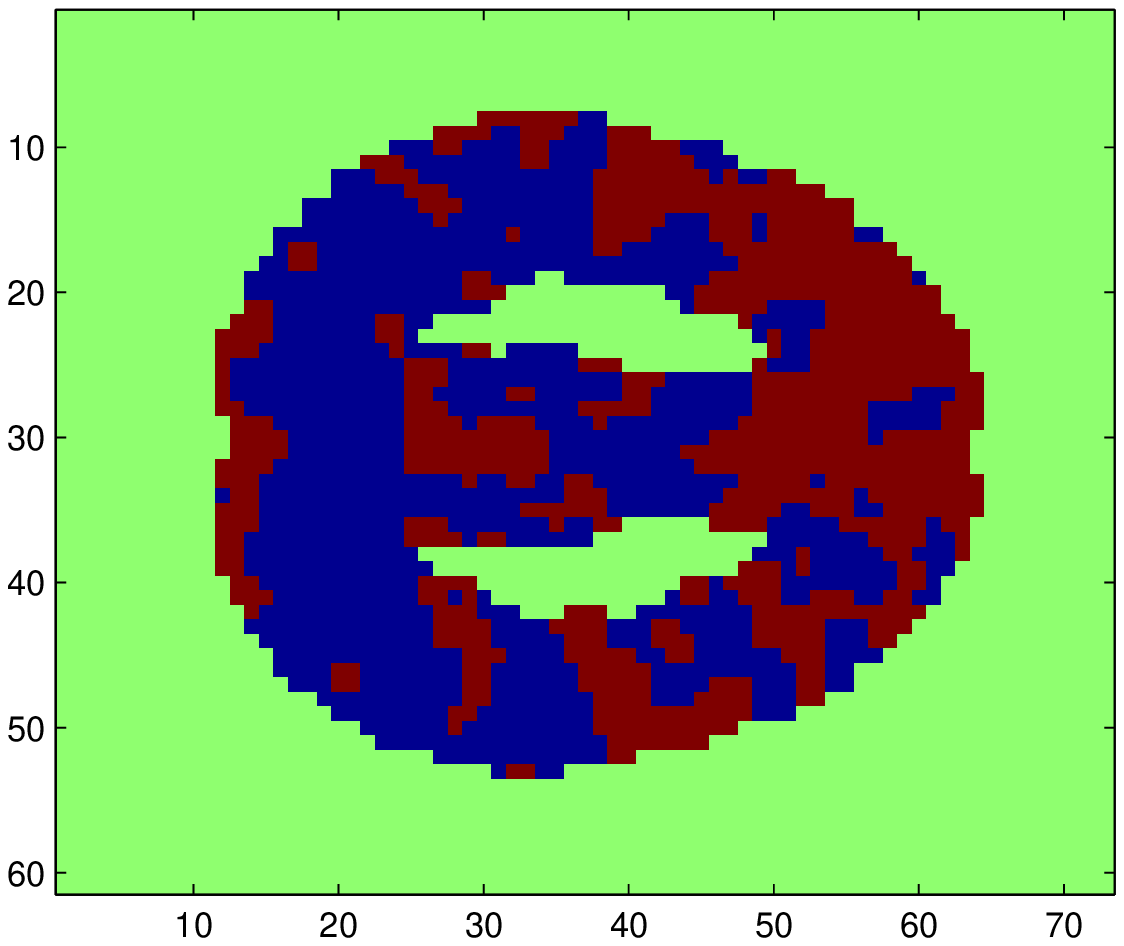}}
\subfigure[Ncut of Scan 2]{\includegraphics*[width=0.45\linewidth]{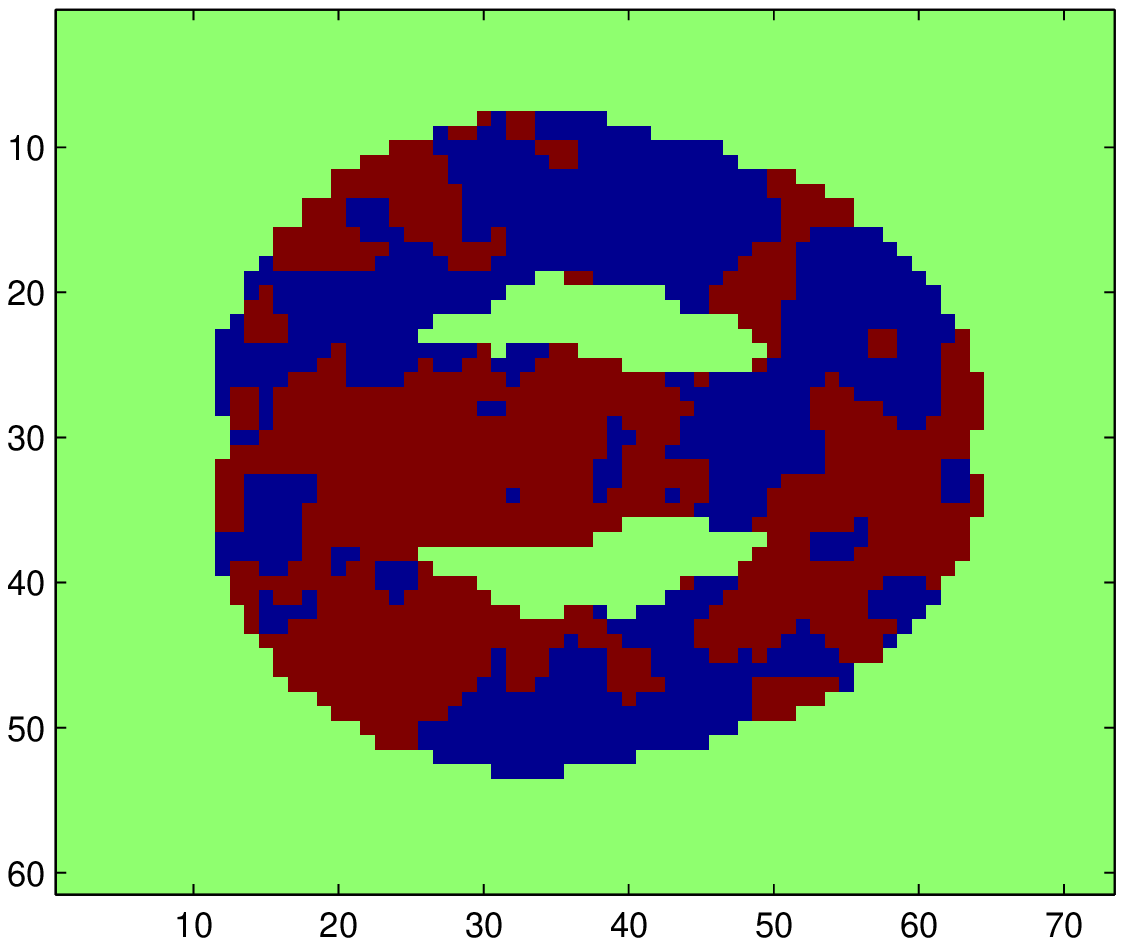}}\\
\subfigure[Constrained cut by transferring Scan 1 to 2]{\includegraphics*[width=0.45\linewidth]{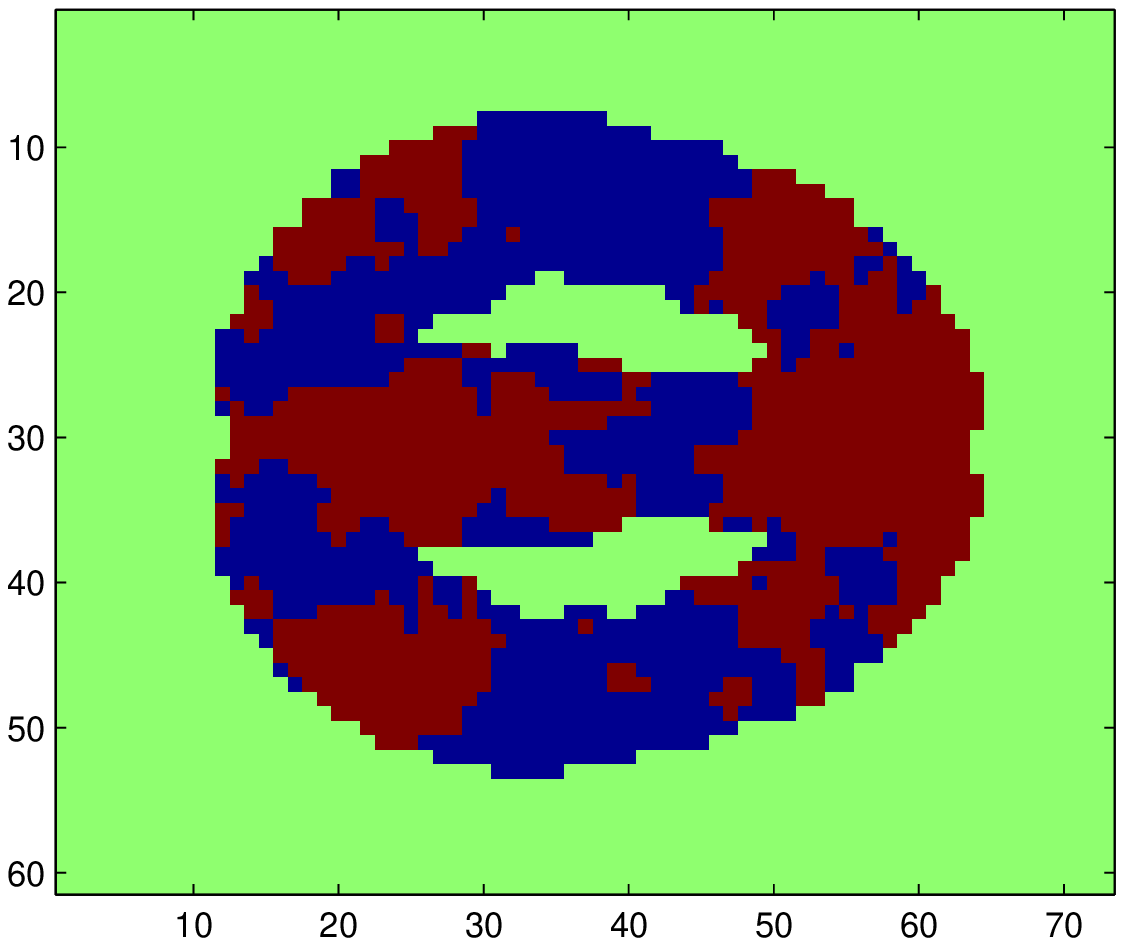}}
\subfigure[An idealized default mode network]{\includegraphics*[width=0.45\linewidth]{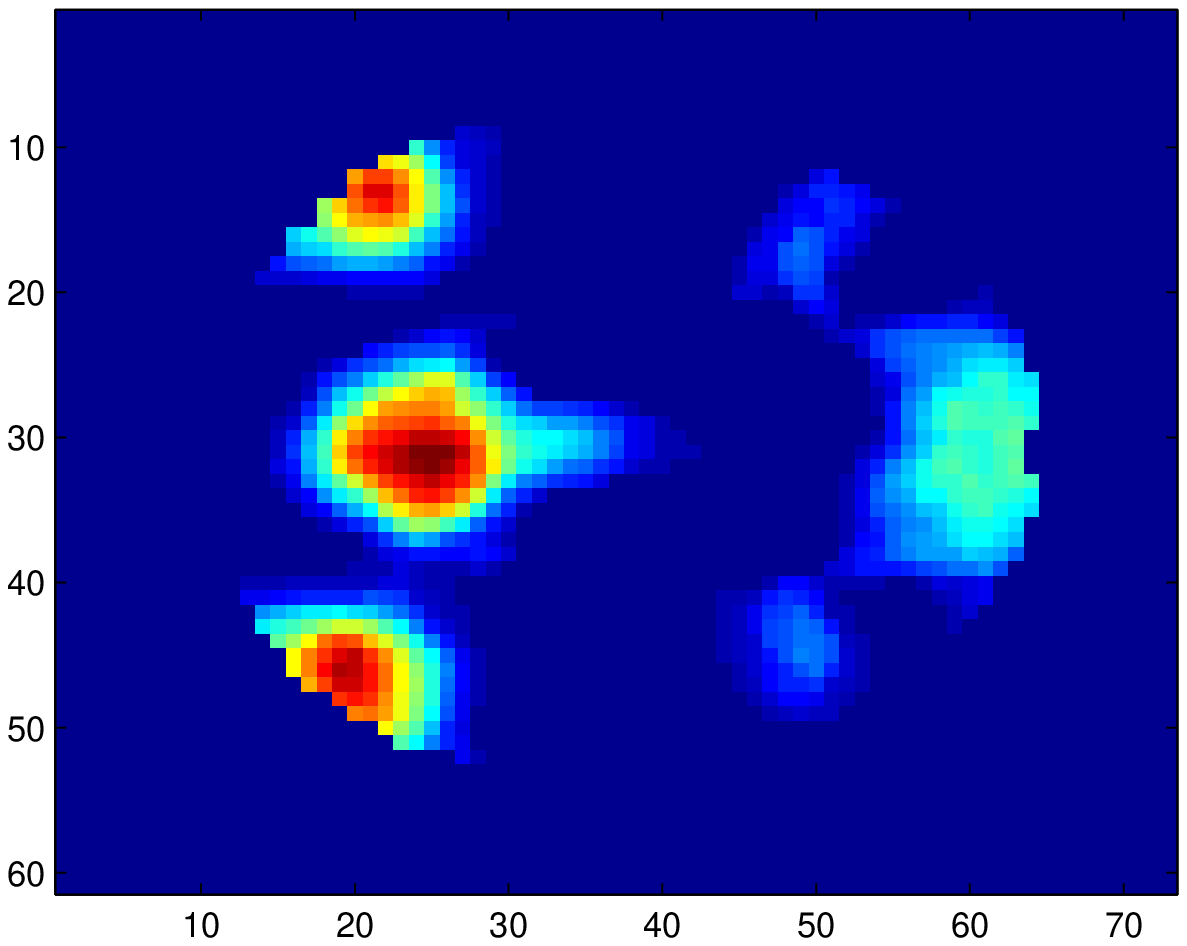}}
\caption{Transfer learning on fMRI scans.}\label{fig:fmri}
\end{figure*}

To further illustrate the practicability of our approach, we transfer the idealized DMN in Fig.~\ref{fig:fmri}(d) to a set of fMRI scans of elderly subjects. The dataset was collected and processed within the research program of the University of California at Davis Alzheimer's Disease Center (UCD ADC). The subjects were categorized into two groups: those diagnosed with cognitive syndrome (20 individuals) and those without cognitive syndrome (11 individuals). For each individual scan, we encode the idealized DMN into a constraint matrix (using the RBF kernel), and enforce the constraints onto the original fMRI scan. We then compute the cost of the constrained cut that is the most similar to the DMN. If the cost of the constrained cut is high, it means there is great disagreement between the original graph and the given constraints (the idealized DMN), and vice versa. In other words, the cost of the constrained cut can be interpreted as the cost of transferring the DMN to the particular fMRI scan.

In Fig.~\ref{fig:fmri:str}, we plot the costs of transferring the DMN to both subject groups. We can clearly see that the costs of transferring the DMN to people without cognitive syndrome tend to be lower than to people with cognitive syndrome. This conforms well to the observation made in a recent study that the DMN is often disrupted for people with the Alzheimer's disease (\citet{NYAS:NYAS1124011}).

\begin{figure}
\centering
\includegraphics*[width=0.8\linewidth]{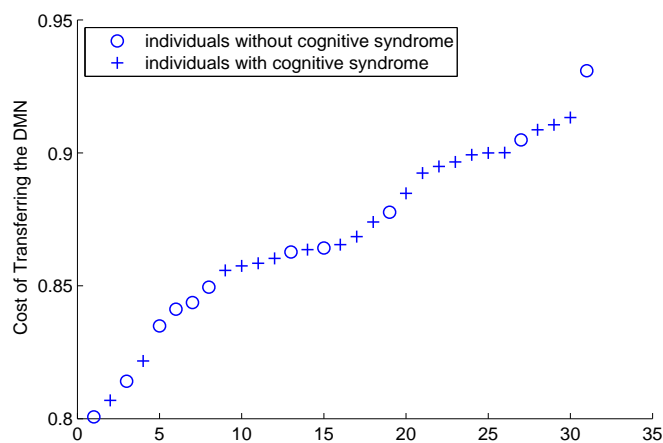}
\caption{The costs of transferring the idealized default mode network to the fMRI scans of two groups of elderly individuals.}\label{fig:fmri:str}
\end{figure}

\section{Conclusion}
\label{sec:conclusion}

In this work we proposed a principled and flexible framework for constrained spectral clustering that can incorporate large amounts of both hard and soft constraints. The flexibility of our framework lends itself to the use of all types of side information: pairwise constraints, partial labeling, alternative metrics, and transfer learning. Our formulation is a natural extension to unconstrained spectral clustering and can be solved efficiently using generalized eigendecomposition. We demonstrated the effectiveness of our approach on a variety of datasets: the synthetic Two-Moon dataset, image segmentation, the UCI benchmarks, the multilingual Reuters documents, and resting-state fMRI scans. The comparison to existing techniques validated the advantage of our approach.

\section{Acknowledgments}

We gratefully acknowledge support of this research via ONR grants N00014-09-1-0712 Automated Discovery and Explanation of Event Behavior, N00014-11-1-0108 Guided Learning in Dynamic Environments and NSF Grant NSF IIS-0801528 Knowledge Enhanced Clustering.

%\bibliographystyle{spbasic}
%\bibliography{csp}

\end{document}